\newcommand{\vecw}{\mathbf{w}}
\newcommand{\tr}{\operatorname{tr}}
\newcommand{\vecg}{\mathbf{g}}
\theoremstyle{plain}
\newtheorem{theorem}{Theorem}[section]
\newtheorem{lemma}[theorem]{Lemma}
\newtheorem{corollary}[theorem]{Corollary}
\theoremstyle{definition}
\newtheorem{assumption}[theorem]{Assumption}
\theoremstyle{remark}
\newtheorem{remark}[theorem]{Remark}
\icmltitlerunning{Heterogeneity-Aware Knowledge Sharing for Graph Federated Learning}
\begin{document}

\twocolumn[
  \icmltitle{Heterogeneity-Aware Knowledge Sharing for Graph Federated Learning}




  \icmlsetsymbol{equal}{*}

  \begin{icmlauthorlist}
    \icmlauthor{Wentao Yu}{sch}
    \icmlauthor{Sheng Wan}{yyy}
    \icmlauthor{Shuo Chen}{comp}
    \icmlauthor{Bo Han}{hk}
    \icmlauthor{Chen Gong}{sh}
  \end{icmlauthorlist}

  \icmlaffiliation{sch}{School of Computer Science and Engineering, Nanjing University of Science and Technology, Nanjing, China}
  \icmlaffiliation{yyy}{College of Smart Agriculture, Nanjing Agricultural University, Nanjing, China}
  \icmlaffiliation{comp}{School of Intelligence Science and Technology, Nanjing University, Suzhou, China}
  \icmlaffiliation{hk}{Department of Computer Science, Hong Kong Baptist University, Hong Kong, China}
  \icmlaffiliation{sh}{School of Automation and Intelligent Sensing, Shanghai Jiao Tong University, Shanghai, China}

  \icmlcorrespondingauthor{Chen Gong}{chen.gong@sjtu.edu.cn}

  \icmlkeywords{Machine Learning, ICML}

  \vskip 0.3in
]



\printAffiliationsAndNotice{}  

\begin{abstract}
Graph Federated Learning (GFL) enables distributed graph representation learning while protecting the privacy of graph data. However, GFL suffers from heterogeneity arising from diverse node features and structural topologies across multiple clients. To address both types of heterogeneity, we propose a novel graph \underline{\textbf{Fed}}erated learning method via \underline{\textbf{S}}emantic and \underline{\textbf{S}}tructural \underline{\textbf{A}}lignment (FedSSA), which shares the knowledge of both node features and structural topologies. For node feature heterogeneity, we propose a novel variational model to infer class-wise node distributions, so that we can cluster clients based on inferred distributions and construct cluster-level representative distributions. We then minimize the divergence between local and cluster-level distributions to facilitate semantic knowledge sharing. For structural heterogeneity, we employ spectral Graph Neural Networks (GNNs) and propose a spectral energy measure to characterize structural information, so that we can cluster clients based on spectral energy and build cluster-level spectral GNNs. We then align the spectral characteristics of local spectral GNNs with those of cluster-level spectral GNNs to enable structural knowledge sharing. Experiments on six homophilic and five heterophilic graph datasets under both non-overlapping and overlapping partitioning settings demonstrate that FedSSA consistently outperforms eleven state-of-the-art methods.
\end{abstract}

\section{Introduction}
Graphs are fundamental data structures in real-world applications, such as social networks, transportation systems, and molecular chemistry~\cite{bai2022two, 10032180, zhou2024traffic, zhou2025fedtps, yu2026atom}. In many real-world applications, large-scale graphs are often partitioned into a set of subgraphs and distributed across multiple clients due to storage and computation limitations~\cite{meng2024survey}. In addition, subgraphs are only locally accessible due to restrictions from privacy regulations and data protection protocols. Therefore, Graph Federated Learning (GFL) has emerged as a promising paradigm for distributed graph representation learning, where multiple clients collaboratively train models without sharing their raw graph data~\cite{baek2023personalized,yu2025homophily, yuintegrating}.

However, due to different distributions of graph data across clients, graph data on each client is usually non-independent and identically distributed (non-IID). Consequently, GFL suffers from heterogeneity arising from both diverse node features and varied structural topologies across clients~\cite{li2023fedgta, li2024adafgl, zhu2024fedtad}. These two types of heterogeneity pose significant challenges to GFL, leading to training instability and performance degradation~\cite{huang2024federated}. To tackle this fundamental and challenging issue, a number of studies have been proposed. Specifically, existing methods can be broadly categorized into two types: (i) methods that solely focus on addressing structural heterogeneity, and (ii) methods that do not differentiate between node feature heterogeneity and structural heterogeneity. For the first type, representative methods include FedGTA~\cite{li2023fedgta}, FedTAD~\cite{zhu2024fedtad}, and AdaFGL~\cite{li2024adafgl}, which perform topology-aware aggregation by considering structural information from other clients. However, these methods are based on homophily assumption (\textit{i.e.}, edges tend to connect similar nodes), which does not hold in heterophilic graphs~\cite{NEURIPS2023_01b68102, yu2025homophily}. For the second type, typical methods include GCFL~\cite{NEURIPS2021_9c6947bd}, FED-PUB~\cite{baek2023personalized}, and FedIIH~\cite{wentao2025fediih}, which primarily carry out personalized federated aggregation by adjusting aggregation weights among clients. However, simply performing the weighted aggregation of model parameters may not be truly effective in solving heterogeneity, since model parameters may fail to represent the intrinsic characteristics of node features and structural topologies, thereby restricting their effectiveness in scenarios with strong heterogeneity.

Therefore, we pose the following research question:
\begin{center}
\textit{How can we explicitly address node feature heterogeneity and structural heterogeneity in graph federated learning?}
\end{center}

To answer this question, we propose a novel graph \underline{\textbf{Fed}}erated learning method via \underline{\textbf{S}}emantic and \underline{\textbf{S}}tructural \underline{\textbf{A}}lignment (FedSSA), which shares the knowledge of both node features and structural topologies among clients to tackle two types of heterogeneity in GFL. On one hand, to address node feature heterogeneity, we propose a novel variational model to infer the class-wise distributions of node features on each client. By clustering clients based on these inferred distributions, we construct cluster-level representative distributions for each class by matching statistical moments (\textit{i.e.}, mean and covariance). Afterwards, we enforce class-wise semantic knowledge alignment by minimizing the divergence between local distributions and cluster-level distributions to facilitate semantic knowledge sharing. On the other hand, to address structural heterogeneity, we employ spectral Graph Neural Networks (GNNs) and propose a novel spectral energy measure to characterize the structural information of each client. By clustering clients based on their spectral energies embedded in Grassmann manifold, we construct a cluster-level spectral GNN for each cluster. Subsequently, we align the spectral characteristics of local spectral GNNs with those of cluster-level spectral GNNs to enable structural knowledge sharing. Furthermore, we theoretically prove that our proposed FedSSA converges at a linear rate. Extensive experiments on eleven datasets demonstrate the effectiveness of our FedSSA. To be specific, it outperforms the second-best method by a large margin of 2.82\% in terms of classification accuracy.

\section{Related Work}
In this section, we review the typical works related to this paper, including Graph Federated Learning (GFL), Clustered Federated Learning (CFL), and Spectral GNNs.

\subsection{Graph Federated Learning}
Graph Federated Learning (GFL) aims to train GNNs across multiple clients without sharing raw graph data~\cite{baek2023personalized, yu2025homophily, yuintegrating}. However, due to the non-IID nature of graph data across clients, GFL faces significant challenges stemming from heterogeneity in both node features and structural topologies~\cite{li2023fedgta, li2024adafgl, zhu2024fedtad}. To tackle this issue, existing methods can be broadly categorized into two types: (i) methods that solely focus on mitigating structural heterogeneity, and (ii) methods that do not differentiate between node feature heterogeneity and structural heterogeneity.

For the first type, representative methods include FedGTA~\cite{li2023fedgta}, FedTAD~\cite{zhu2024fedtad}, and AdaFGL~\cite{li2024adafgl}, which modifies local models with structural information from other clients. For example, FedGTA~\cite{li2023fedgta} recalibrates aggregation weights based on topology-aware local smoothing confidence. Subsequently, FedTAD~\cite{zhu2024fedtad} employs topology-aware knowledge distillation to rectify unreliable knowledge arising from structural heterogeneity. Meanwhile, AdaFGL~\cite{li2024adafgl} leverages federated knowledge extractor to optimize structural topology on each client. However, these methods are based on homophily assumption (\textit{i.e.}, edges tend to connect similar nodes), which does not hold in heterophilic graphs~\cite{NEURIPS2023_01b68102}.

For the second type, typical methods include GCFL~\cite{NEURIPS2021_9c6947bd}, FED-PUB~\cite{baek2023personalized}, and FedIIH~\cite{wentao2025fediih}, which treats node feature heterogeneity and structural heterogeneity as a unified challenge. These methods mainly carry out personalized federated aggregation by adjusting the aggregation weights of clients. For example, GCFL~\cite{NEURIPS2021_9c6947bd} dynamically clusters clients and aggregates model parameters within clusters to mitigate heterogeneity. Alternatively, FED-PUB~\cite{baek2023personalized} optimizes the aggregation weights based on similarities between pairwise clients, which are calculated based on the outputs of local GNNs. Similarly, FedIIH~\cite{wentao2025fediih} performs a weighted federation of model parameters based on similarities calculated from inferred graph distributions. Nevertheless, in practice, node feature heterogeneity and structural heterogeneity have different characteristics. Consequently, simply performing the weighted aggregation of model parameters may not be truly effective in solving heterogeneity, as model parameters may fail to reveal the intrinsic characteristics of node features and structural topologies. This limitation significantly restricts their performance in scenarios with strong heterogeneity. Unlike existing methods, we propose to separately address node feature heterogeneity and structural heterogeneity.
\subsection{Clustered Federated Learning}
Clustered Federated Learning (CFL) aims to mitigate heterogeneity by grouping clients with similar data distributions into clusters and constructing personalized models tailored for each cluster~\cite{sattler2021clustered, ghosh2022efficient, kim2024clustered, vardhan2024improved}. Most of the existing CFL methods mainly leverage three types of clustering signals, namely gradients~\cite{sattler2021clustered}, parameters~\cite{vardhan2024improved}, and losses~\cite{ghosh2022efficient}. Specifically, CFL~\cite{sattler2021clustered} adopts a top-down mechanism to recursively bipartition clients based on the cosine similarity of gradients. In contrast, SR-FCA~\cite{vardhan2024improved} utilizes a bottom-up mechanism by initializing distinct clusters for each client and then successively refining them based on the similarity of parameters. Alternatively, IFCA~\cite{ghosh2022efficient} broadcasts $K$ models to each client. Each client then evaluates $K$ models and selects the model that yields the lowest loss to determine its cluster assignment. However, these methods are primarily tailored for traditional federated learning scenarios and fail to incorporate the structural topologies of graph data. Consequently, our work proposes a novel clustering mechanism by explicitly leveraging node features and structural topologies, respectively.
\subsection{Spectral Graph Neural Networks}
Spectral GNNs leverage the spectral properties of the graph Laplacian to perform graph signal filtering in the spectral domain~\cite{defferrard2016convolutional, he2021bernnet, guo2023graph, chen2024polygcl}. Fundamentally, these methods utilize the eigenvalues and eigenvectors of the graph Laplacian matrix to define graph spectral filters that capture and model structural information. For instance, early approaches such as ChebNet~\cite{defferrard2016convolutional} employ Chebyshev polynomial bases to efficiently approximate graph spectral filters. However, conventional spectral GNNs typically optimize polynomial coefficients without sufficient constraints, which leads to ill-posed filters. To address this problem, BernNet~\cite{he2021bernnet} constrains the coefficients of the Bernstein basis to ensure the learned filters accurately reflect the spectral characteristics of observed graph. Subsequently, OptBasisGNN~\cite{guo2023graph} further improves the expressive power of spectral graph filters by learning an optimal polynomial basis directly from graph data. Motivated by the capability of Spectral GNNs to capture structural information, we employ spectral GNNs to characterize structural topologies and facilitate structural knowledge sharing.

\section{Notations}
This section introduces mathematical notations used throughout the paper. We focus on node classification task in GFL, where $M$ clients collaboratively train GNNs on their local graphs. Let $\mathcal{M} = \{1, 2, \cdots, M\}$ denote the set of clients. Each client $m \in \mathcal{M}$ holds a local graph $\mathcal{G}_m = \langle \mathcal{V}_m, \mathcal{E}_m \rangle$, where $\mathcal{V}_m$ and $\mathcal{E}_m$ denote the node set and edge set, respectively. For client $m$, let $n_m = |\mathcal{V}_m|$ and $e_m = |\mathcal{E}_m|$ denote the number of nodes and the number of edges, while $d$ and $C$ represent the feature dimension and the number of classes, respectively. In addition, we denote node feature matrix as $\mathbf{X}_m \in \mathbb{R}^{n_m \times d}$, adjacency matrix as $\mathbf{A}_m \in \mathbb{R}^{n_m \times n_m}$, and label matrix as $\mathbf{Y}_m \in \mathbb{R}^{n_m \times 1}$. Furthermore, let $\mathbf{L}_m = \mathbf{I}-\mathbf{D}_m^{-\frac{1}{2}}\mathbf{A}_m\mathbf{D}_m^{-\frac{1}{2}}$ denote the symmetric normalized Laplacian matrix of $\mathcal{G}_m$ without self-loops, where $\mathbf{I}$ is the identity matrix and $\mathbf{D}_m \in \mathbb{R}^{n_m \times n_m}$ is the degree matrix. Subsequently, let $\mathbf{L}_m = \mathbf{U}_m \mathbf{\Lambda}_m \mathbf{U}_m^{\top} $ denote the eigendecomposition of $\mathbf{L}_m$, where $\mathbf{U}_m$ represents the matrix of eigenvectors and $\mathbf{\Lambda}_m$ denotes the diagonal matrix of eigenvalues. Therefore, the graph Fourier transform of $\mathbf{X}_m$ can be defined as $\tilde{\mathbf{X}}_m = \mathbf{U}_m^{\top}\mathbf{X}_m \in \mathbb{R}^{n_m \times d}$. Here we also introduce some norms that will be used. First, $\Vert \mathbf{v} \Vert_2 = \sqrt{\sum\nolimits_{i} v_i^2}$ denotes the $\ell_2$-norm of a vector $\mathbf{v}$, where $v_i$ is the $i$-th element of $\mathbf{v}$. Second, $\Vert \mathbf{X} \Vert_\mathrm{F} = \sqrt{\sum\nolimits_{i,j} |x_{ij}| ^2}$ denotes the Frobenius norm of a matrix $\mathbf{X}$, where $x_{ij}$ is the $(i, j)$-th element of $\mathbf{X}$.

\section{Our Proposed Method}
In this section, we provide the details of our proposed graph \underline{\textbf{Fed}}erated learning method via \underline{\textbf{S}}emantic and \underline{\textbf{S}}tructural \underline{\textbf{A}}lignment (FedSSA).

\subsection{Framework Overview}
As shown in \cref{fig-framework}, our proposed FedSSA comprises two components. \textbf{(i) Semantic Knowledge Sharing:} To address the heterogeneity of node feature, we propose a variational model to infer class-wise node feature distributions on each client. Clients are clustered based on inferred distributions, and local distributions are aligned with cluster-level distributions. \textbf{(ii) Structural Knowledge Sharing:} To address structural heterogeneity, we employ spectral GNNs and introduce a spectral energy measure to characterize local structural topologies. Clients are clustered by spectral energy, which enables the alignment between local spectral GNNs and cluster-level spectral GNNs.

\begin{figure*}[t]
  \begin{center}
    \centerline{\includegraphics[width=17cm]{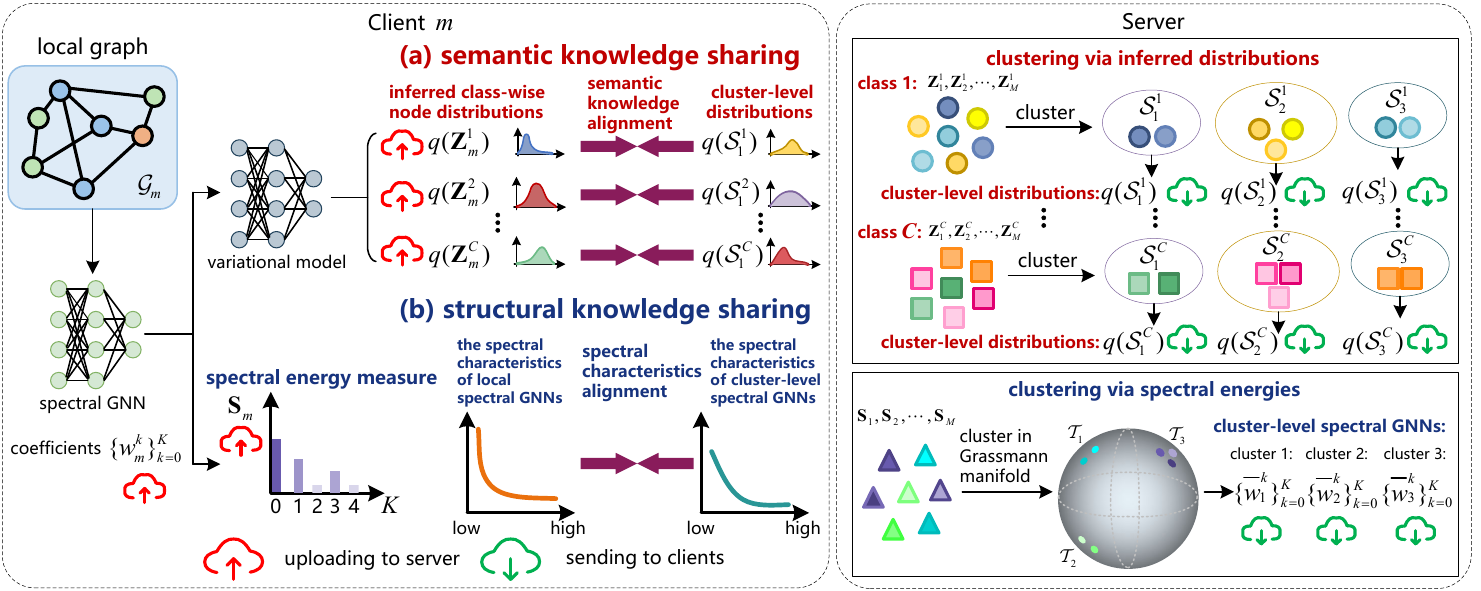}}
    \caption{The overview of our proposed FedSSA. \textbf{(a) Semantic Knowledge Sharing:} A variational model infers class-wise node feature distributions on each client. Clients are clustered based on inferred distributions, and local distributions are aligned with cluster-level distributions. \textbf{(b) Structural Knowledge Sharing:} Spectral GNNs are employed alongside a spectral energy measure to characterize local structural topologies. Clients are clustered by spectral energy, which enables alignment between local and cluster-level spectral GNNs.}
     \vskip -0.3in
    \label{fig-framework}
  \end{center}
\end{figure*}

\subsection{Semantic Knowledge Sharing}
To address node feature heterogeneity, we aim to share semantic knowledge among clients.

\textbf{Variational Model} We first propose a variational model to infer the class-wise distributions of node features on each client. As illustrated in \cref{fig-prob-graph}, $\mathbf{X}_m$ and $\mathbf{Y}_m$ are observed variables, while $\mathbf{Z}_m$ represents a latent variable on the $m$-th client. Based on the graphical model in \cref{fig-prob-graph}, the joint probability distribution can be factorized as follows:
\begin{equation}\label{joint-distribution}
p(\mathbf{X}_m, \mathbf{Y}_m, \mathbf{Z}_m) = p(\mathbf{X}_m | \mathbf{Z}_m, \mathbf{Y}_m)  p(\mathbf{Z}_m) p(\mathbf{Y}_m),
\end{equation}
where $p(\mathbf{Z}_m)$ denotes the prior distribution of $\mathbf{Z}_m$, and $p(\mathbf{X}_m | \mathbf{Z}_m, \mathbf{Y}_m)$ denotes the conditional distribution. We then employ the true posterior distribution $p(\mathbf{Z}_m|\mathbf{X}_m, \mathbf{Y}_m)$ to represent the class-wise distributions of node features. However, this true posterior inference is computationally intractable. Therefore, we attempt to infer it with a variational distribution $q(\mathbf{Z}_m|\mathbf{X}_m, \mathbf{Y}_m)$ via variational inference~\cite{kingma2013auto, kingma2014semi}. According to the graphical model in \cref{fig-prob-graph}, the Evidence Lower BOund (ELBO) can be formulated as follows:
\begin{equation}\label{eq1}
\begin{aligned}
\mathcal{L}_\mathrm{ELBO}(\mathbf{X}_m, \mathbf{Y}_m) &= \mathbb{E}_{q(\mathbf{Z}_m|\mathbf{X}_m, \mathbf{Y}_m)} \log p(\mathbf{X}_m|\mathbf{Y}_m, \mathbf{Z}_m) \\
&\quad + \log p(\mathbf{Y}_m)\\
&\quad - D_\mathrm{KL}\big(q(\mathbf{Z}_m|\mathbf{X}_m, \mathbf{Y}_m) \parallel p(\mathbf{Z}_m) \big),
\end{aligned}
\end{equation}
where $\log p(\mathbf{Y}_m)$ denotes the log prior probability of $\mathbf{Y}_m$, and $D_{\mathrm{KL}}$ denotes Kullback-Leibler (KL) divergence. The derivation of Eq.~\eqref{eq1} can be found in~\cref{appendix-elbo1}. Consequently, the latent distribution $q(\mathbf{Z}_m^c)$ of nodes with class label $c$ on the $m$-th client can be derived as follows:
\begin{equation}\label{eq3}
\begin{aligned}
\mathcal{I}_m^c &\triangleq \{ i \in \{1, 2, \cdots,n_m\} \mid (\mathbf{Y}_m)_{[i, 1]} = c \},\\
q(\mathbf{Z}_m^c) &\triangleq q\big((\mathbf{Z}_{m})_{[\mathcal{I}_m^c,:]} \,\big|\, (\mathbf{X}_{m})_{[\mathcal{I}_m^c,:]},\, (\mathbf{Y}_{m})_{[\mathcal{I}_m^c,:]}\big),
\end{aligned}
\end{equation}
where $c = 1, 2, \cdots, C$, $(\mathbf{X}_{m})_{[\mathcal{I}_m^c,:]}$, $(\mathbf{Y}_{m})_{[\mathcal{I}_m^c,:]}$, and $(\mathbf{Z}_{m})_{[\mathcal{I}_m^c,:]}$ denote submatrices (\textit{i.e.}, rows) indexed by $\mathcal{I}_m^c$.
\begin{figure}[t]
  \begin{center}
    \centerline{\includegraphics[width=3cm]{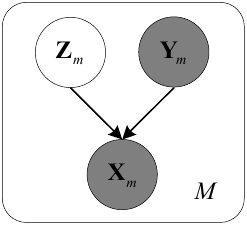}}
    \caption{The graphical model of our proposed variational model, where $\mathbf{Z}_m$ is a latent variable, $\mathbf{X}_m$ and $\mathbf{Y}_m$ are observed variables.}
     \vskip -0.3in
    \label{fig-prob-graph}
  \end{center}
\end{figure}

\textbf{Variational Graph Autoencoder} To infer latent distribution $q(\mathbf{Z}_m^c)$ in Eq.~\eqref{eq3}, we employ a Variational Graph AutoEncoder (VGAE)~\cite{kipf2016variational} on each client. Due to space limitations, details are provided in Appendix~\ref{appendix-vgae}. Specifically, we instantiate $q(\mathbf{Z}_m^c)$ as a multivariate Gaussian distribution $\mathcal{N}(\boldsymbol{\mu}_m^c, \boldsymbol{\Sigma}_m^c)$, where $\boldsymbol{\mu}_m^c$ and $\boldsymbol{\Sigma}_m^c$ denote the mean vector and covariance matrix, respectively.

\textbf{Clustering via Inferred Distributions} After obtaining the class-wise latent distributions of each client, we cluster clients based on these inferred distributions to facilitate semantic knowledge sharing. Specifically, for each class $c$, we first employ the reparameterization trick~\cite{kingma2013auto} to obtain $\tilde{\mathbf{Z}}_m^c=\boldsymbol{\mu}_m^c + (\boldsymbol{\Sigma}_m^c)^{\frac12}\bm{\epsilon}_m^c$, where $\bm{\epsilon}_m^c\sim \mathcal{N}(\mathbf{0}, \mathbf{I})$. We then collect samples $\{\tilde{\mathbf{Z}}_m^c | m \in \mathcal{M}\}$ from all clients and cluster them into $K_\mathrm{node}$ clusters, which can be defined as
\begin{equation}\label{eq4}
\{ \tilde{\mathbf{Z}}_m^c\}_{m=1}^M \xrightarrow{\mathrm{Cluster}} \{\mathcal{S}_i^c\}_{i=1}^{K_\mathrm{node}},
\end{equation}
where $\mathcal{S}_i^c$ denotes the set of clients in the $i$-th cluster for class $c$. Therefore, we can use the Gaussian Mixture Model (GMM) to represent mixed Gaussian distributions in cluster $\mathcal{S}_i^c$ as follows:
\begin{equation}\label{eq5}
p(\mathcal{S}_i^c) = \sum_{m \in \mathcal{S}_i^c} \omega_m^c \cdot \mathcal{N}(\boldsymbol{\mu}_m^c, \boldsymbol{\Sigma}_m^c),
\end{equation}
where $\omega_m^c$ denotes the mixture weight of the $m$-th client in cluster $\mathcal{S}_i^c$, and $\sum_{m \in \mathcal{S}_i^c} \omega_m^c = 1$. Here, $p(\mathcal{S}_i^c)$ denotes the mixture distribution of class-$c$ latent variables from all clients in cluster $\mathcal{S}_i^c$, \textit{i.e.}, the mixture of $\{\mathcal{N}(\boldsymbol{\mu}_m^c, \boldsymbol{\Sigma}_m^c)\}_{m \in \mathcal{S}_i^c}$. However, the computational complexity of GMM increases with the number of clients in the cluster, which makes it inefficient. Therefore, we further simplify $p(\mathcal{S}_i^c)$ by approximating it with a single Gaussian distribution. Specifically, we construct a cluster-level representative distribution $q(\mathcal{S}_i^c) = \mathcal{N}(\boldsymbol{\mu}_{i}^c, \boldsymbol{\Sigma}_{i}^c)$ by matching the first and second moments of local distributions within the cluster, which can be calculated as follows:
\begin{equation}\label{eq6}
\begin{gathered}
\boldsymbol{\mu}_{i}^c = \sum_{m \in \mathcal{S}_i^c} \omega_m^c \boldsymbol{\mu}_m^c,\\
\boldsymbol{\Sigma}_{i}^c
= \big(\sum_{m \in \mathcal{S}_i^c} \omega_m^c( \boldsymbol{\Sigma}_m^c + \boldsymbol{\mu}_m^c (\boldsymbol{\mu}_m^c)^\top)\big)
- \boldsymbol{\mu}_{i}^c (\boldsymbol{\mu}_{i}^c)^\top,
\end{gathered}
\end{equation}
where $\omega_m^c = \frac{n_m^c}{\sum_{n \in \mathcal{S}_i^c} n_n^c}$, $n_m^c$ is the number of nodes with label $c$ on client $m$.

\textbf{Semantic Knowledge Alignment} After obtaining cluster-level representative distributions, we align local distributions with cluster-level distributions to facilitate semantic knowledge sharing. Specifically, we minimize the KL divergence between local distribution $q(\mathbf{Z}_m^c)$ and cluster-level distribution $q(\mathcal{S}_i^c)$ as follows:
\begin{equation}\label{eq7}
\mathcal{L}_\mathrm{node} = \sum_{c=1}^{C} \sum_{i=1}^{K_\mathrm{node}} \sum_{m \in \mathcal{S}_i^c} D_\mathrm{KL}\big(q(\mathbf{Z}_m^c) \parallel q(\mathcal{S}_i^c  ) \big),
\end{equation}
which implies that the optimization of $\mathcal{L}_\mathrm{node}$ can be performed in parallel across $M$ clients.

\subsection{Structural Knowledge Sharing}
To address structural heterogeneity, we aim to share structural knowledge among clients.

\textbf{Spectral GNN} Since spectral GNNs are constructed based on polynomial bases derived from graph Laplacian spectrum, they naturally capture structural information of graphs. Consequently, we employ a spectral GNN on each client to effectively capture structural information, which facilitates structural knowledge sharing. In this paper, we employ the polynomial filter-based spectral GNNs (\textit{e.g.}, ChebNet~\cite{defferrard2016convolutional}, BernNet~\cite{he2021bernnet}). Typically, they can be formulated as follows:
\begin{equation}\label{eq-polynomial-filter}
\mathbf{P}_m = \sum_{k=0}^K w_m^k\, \mathbf{H}_m^k,
\end{equation}
where $\mathbf{H}_m^k = (\mathbf{L}_m)^k \mathbf{X}_m \in \mathbb{R}^{n_m \times d}$ denotes the polynomial filters of the $k$-th order, $w_m^k \in \mathbb{R}$ represents the learnable coefficients of the $k$-th order, $\mathbf{P}_m$ denotes graph representation, and $K$ is the number of orders. According to the theory of graph Fourier transform~\cite{shuman2013emerging}, we can have $\mathbf{H}_m^k = \mathbf{U}_m \mathbf{\Lambda}_m^k \tilde{\mathbf{X}}_m$. Therefore, the physical meaning of $\mathbf{H}_m^k$ is the signal of the $k$-th spectral band. Meanwhile, $w_m^k$ represents learnable coefficients of the $k$-th spectral band. Consequently, $\mathbf{P}_m$ can be viewed as a weighted combination of signals from different spectral bands, where weights are determined by learnable coefficients $\{w_m^k\}_{k=0}^K$. In other words, spectral GNNs effectively capture the structural information of graphs by learning to combine signals from various spectral bands~\cite{yangdisentangled}.

\textbf{Spectral Energy Measure} To characterize the structural information of each client, we analyze the contribution of different spectral bands to the learned graph representation. Specifically, the term $w_m^k \mathbf{H}_m^k$ in Eq.~\eqref{eq-polynomial-filter} can be interpreted as the \textit{weighted spectral response} of the $k$-th band. It represents graph signal explicitly activated by the learnable filter for downstream task. To derive a descriptor effectively capturing the spectral characteristics of the $m$-th client, we aim to propose a spectral energy measure. First, we compute the average response vector for each band via
\begin{equation}\label{eq8}
\begin{gathered}
\mathbf{F}_m^k = w_m^k \mathbf{H}_m^k \in \mathbb{R}^{n_m \times d},\\
\mathbf{E}_m^k = \frac{1}{n_m} \sum_{i=1}^{n_m} (\mathbf{F}_m^k)_{[i, :]} \in \mathbb{R}^d,\\
\end{gathered}
\end{equation}
where $(\mathbf{F}_m^k)_{[i, :]}$ represents the $i$-th row of matrix $\mathbf{F}_m^k$. Here, $\mathbf{E}_m^k$ quantifies the \textit{expectation of spectral response} corresponding to the $k$-th spectral order. A larger magnitude of $\mathbf{E}_m^k$ implies that the $k$-th spectral band plays a more dominant role in graph representation. Subsequently, we define the spectral energy measure $\mathbf{S}_m$ for the $m$-th client by concatenating these response vectors as follows:
\begin{equation}\label{eq-spectral-energy}
\mathbf{S}_m = [\mathbf{E}_m^0, \mathbf{E}_m^1, \cdots, \mathbf{E}_m^K] \in \mathbb{R}^{d \times (K+1)},
\end{equation}
where $[\cdot, \cdot]$ denotes the concatenation operation. This measure $\mathbf{S}_m$ essentially acts as a \textit{spectral fingerprint}, which reflects the energy distribution of structural topologies as learned by spectral GNN. Therefore, it can be utilized to facilitate structural knowledge sharing among clients.

\textbf{Clustering via Spectral Energies} After obtaining the spectral energy measure for each client, we cluster clients based on these spectral energies to facilitate structural knowledge sharing. Specifically, we collect spectral energies $\{\mathbf{S}_m | m \in \mathcal{M}\}$ from all clients. Since each spectral energy measure can be spanned as a spectral subspace, we embed them into the Grassmann manifold, which provides a natural geometric space for measuring subspace-based spectral representations. To measure the distance of spectral subspaces, we employ Chordal distance due to its effectiveness and simplicity. Specifically, we first obtain an orthonormal basis of the column space of $\mathbf{S}_m$ via QR decomposition:
\begin{equation}
\mathbf{S}_m = \mathbf{Q}_m \mathbf{R}_m,\quad \mathbf{Q}_m^\top \mathbf{Q}_m = \mathbf{I}_{K+1},
\end{equation}
where $\mathbf{I}_{K+1} \in \mathbb{R}^{(K+1) \times (K+1)}$ is an identity matrix and $\mathbf{Q}_m \in \mathbb{R}^{d \times (K+1)}$. The Chordal distance between two spectral energy measures is then calculated as
\begin{equation}\label{Chordal_distance}
d_\mathrm{Chordal}(\mathbf{Q}_m, \mathbf{Q}_n) = \big(K+1 - \Vert  \mathbf{Q}_m^{\top} \mathbf{Q}_n \Vert_\mathrm{F}^2\big)^{\frac{1}{2}}.
\end{equation}
We then cluster spectral energies based on their Chordal distances into $K_\mathrm{struct}$ clusters, which can be defined as
\begin{equation}\label{eq9}
\{ \mathbf{S}_m\}_{m=1}^M \xrightarrow{\mathrm{Cluster}} \{\mathcal{T}_j\}_{j=1}^{K_\mathrm{struct}},
\end{equation}
where $\mathcal{T}_j$ denotes the set of clients in the $j$-th cluster.

\textbf{Structural Knowledge Alignment} After obtaining cluster-level spectral GNNs, we align the spectral characteristics of local spectral GNNs with those of cluster-level spectral GNNs to enable structural knowledge sharing. Specifically, for cluster $\mathcal{T}_j$, we compute the mean of learnable coefficients of spectral GNNs in $\mathcal{T}_j$ as follows:
\begin{equation}\label{eq10}
\overline{w}_j^k \leftarrow \frac{1}{\vert \mathcal{T}_j \vert} \sum_{m \in \mathcal{T}_j} w_m^k.
\end{equation}
Subsequently, we minimize the discrepancy between the coefficients of local spectral GNNs and those of cluster-level spectral GNNs as follows:
\begin{equation}\label{eq11}
\mathcal{L}_\mathrm{align} = \sum_{k=0}^K \sum_{j=1}^{K_\mathrm{struct}} \sum_{m \in \mathcal{T}_j} | w_m^k - \overline{w}_j^k | .
\end{equation}
Moreover, we add a regularization term to constrain the coefficients of spectral GNNs as:
\begin{equation}\label{eq12}
\mathcal{L}_\mathrm{reg} = \sum_{m=1}^M \sum_{k=0}^K \big( \lambda_1 | w_m^k | + \frac{\lambda_2}{2} (w_m^k)^2 \big).
\end{equation}
Finally, we can have $\mathcal{L}_\mathrm{struct} = \mathcal{L}_\mathrm{align} + \mathcal{L}_\mathrm{reg}$, where $\lambda_1,\lambda_2>0$ are hyperparameters adjusting regularization.

\subsection{Convergence Analysis}
Here we analyze the convergence property of our proposed FedSSA. For client $m \in \mathcal{M}$, let $\vecw_m^t$ denote its local model parameters after $t$ communication rounds. For simplicity, we omit subscript $m$ if no notational confusion is incurred. In particular, $\vecw^0$ denotes the initialized local model and $\vecw^T$ denotes the local model after $T$ communication rounds.
\begin{assumption}[Regularity and Boundedness]\label{asm:combined}
To facilitate theoretical analysis, we make following assumptions (Detailed descriptions are provided in~\cref{appendix:intro-assumptions}.):
\begin{itemize}[leftmargin=*]
    \item \textbf{Regularity of Loss Function:} The population risk function $F(\vecw)$ is $L_F$-smooth and $\lambda_F$-strongly convex.
    \item \textbf{Bounded Intra-cluster Heterogeneity:} We assume the divergence within each cluster is bounded. Specifically, for semantic knowledge, the differences in variational parameters are bounded by constants $\delta_\mu$ and $\delta_\Sigma$, respectively. For structural knowledge, Chordal distance between spectral energy matrices is bounded by $\epsilon_U$.
    \item \textbf{Properties of Learnable Components:} The variational parameters are differentiable with Jacobians bounded by $L_q$. Furthermore, the learnable coefficients of spectral GNNs are bounded by $w_{\max}$ and satisfy the Lipschitz continuity property with constant $L_w$ with respect to spectral energy matrices.
\end{itemize}
\end{assumption}
\begin{theorem}[Convergence of FedSSA]\label{thm:FedSSA-convergence}
Suppose that Assumption~\ref{asm:combined} holds. Let $\eta = \frac{1}{L_F}$ be the learning rate. After $T$ communication rounds, FedSSA satisfies
\begin{equation}\label{eq:convergence-bound}
\begin{aligned}
\|\vecw^T - \vecw^*\|_2
&\leq \left(1 - \frac{\lambda_F}{L_F + \lambda_F}\right)^T \|\vecw^0 - \vecw^*\|_2 \\
&\quad + \frac{L_F + \lambda_F}{\lambda_F L_F} \mathcal{E},
\end{aligned}
\end{equation}
where $\vecw^*$ denotes the local optimal solution of $F(\vecw)$, $F(\vecw)$ denotes population risk (\textit{i.e.}, expected loss), $L_F$ and $\lambda_F$ denote the smoothness and strong convexity constants of $F(\vecw)$, and $\mathcal{E}$ is aggregated error floor:
\begin{equation}\label{eq:delta-bound}
\mathcal{E} = \underbrace{C_1 \left( \delta_\mu + \delta_\mu^2 + \delta_\Sigma \right)}_{\text{semantic error}} + \underbrace{C_2 (K+1)\epsilon_U + \lambda_1 C_3 + \lambda_2 C_4}_{\text{structural error}},
\end{equation}
with constants $C_1, C_2, C_3, C_4 > 0$. Here $\delta_\mu$ and $\delta_\Sigma$ represent bounds on the discrepancies of mean and covariance, while $\epsilon_U$ denotes the bound on Chordal distance.
\end{theorem}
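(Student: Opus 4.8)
The plan is to cast one communication round as a single perturbed gradient-descent step on the total local objective and then unroll the resulting affine recursion. I would write the full objective minimized on client $m$ as $\tilde{F}(\vecw) = F(\vecw) + \mathcal{L}_\mathrm{node}(\vecw) + \mathcal{L}_\mathrm{struct}(\vecw)$, so that the local update reads $\vecw^{t+1} = \vecw^t - \eta\nabla\tilde{F}(\vecw^t)$. Splitting the update into the population-risk part and a perturbation $\vecg^t \triangleq \nabla\big(\mathcal{L}_\mathrm{node} + \mathcal{L}_\mathrm{struct}\big)(\vecw^t)$ gives $\vecw^{t+1} - \vecw^* = \big(\vecw^t - \eta\nabla F(\vecw^t) - \vecw^*\big) - \eta\vecg^t$, and the triangle inequality yields
\[
\|\vecw^{t+1} - \vecw^*\|_2 \le \|\vecw^t - \eta\nabla F(\vecw^t) - \vecw^*\|_2 + \eta\|\vecg^t\|_2 .
\]

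For the first term I would use the regularity of $F$. Since $\vecw^*$ minimizes $F$, we have $\nabla F(\vecw^*)=\mathbf{0}$, and the standard co-coercivity estimate for an $L_F$-smooth, $\lambda_F$-strongly convex function gives, after expanding $\|\vecw^t - \eta\nabla F(\vecw^t) - \vecw^*\|_2^2$ and taking $\eta = 1/L_F$ (which makes the gradient-norm coefficient nonpositive, hence droppable), the squared contraction $\big(1 - \tfrac{2\lambda_F}{L_F + \lambda_F}\big)\|\vecw^t - \vecw^*\|_2^2$. Taking square roots and invoking $\sqrt{1-2a}\le 1-a$ for $a = \tfrac{\lambda_F}{L_F+\lambda_F}\in[0,\tfrac12]$ (valid since $\lambda_F\le L_F$) produces exactly the contraction factor $\rho \triangleq 1 - \tfrac{\lambda_F}{L_F+\lambda_F}$ of the statement, so that $\|\vecw^t - \eta\nabla F(\vecw^t) - \vecw^*\|_2 \le \rho\,\|\vecw^t - \vecw^*\|_2$.

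The crux is bounding $\|\vecg^t\|_2$ by the error floor $\mathcal{E}$, which I would handle by treating the semantic and structural gradients separately. For the semantic part, each summand of $\mathcal{L}_\mathrm{node}$ is a Gaussian KL $D_\mathrm{KL}\big(\mathcal{N}(\boldsymbol{\mu}_m^c,\boldsymbol{\Sigma}_m^c)\,\|\,\mathcal{N}(\boldsymbol{\mu}_i^c,\boldsymbol{\Sigma}_i^c)\big)$ differentiated through the variational encoder, where the chain rule introduces the encoder Jacobian bounded by $L_q$. Using the closed form, $\nabla_{\boldsymbol\mu}D_\mathrm{KL} = (\boldsymbol{\Sigma}_i^c)^{-1}(\boldsymbol{\mu}_m^c - \boldsymbol{\mu}_i^c)$ is $O(\delta_\mu)$, the covariance gradient is $O(\delta_\Sigma)$, and the Mahalanobis term $(\boldsymbol{\mu}_m^c - \boldsymbol{\mu}_i^c)^\top(\boldsymbol{\Sigma}_i^c)^{-1}(\boldsymbol{\mu}_m^c - \boldsymbol{\mu}_i^c)$ contributes the quadratic $O(\delta_\mu^2)$; collecting these with $L_q$ yields $C_1(\delta_\mu + \delta_\mu^2 + \delta_\Sigma)$. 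For the structural part, $\partial\mathcal{L}_\mathrm{align}/\partial w_m^k = \operatorname{sign}(w_m^k - \overline{w}_j^k)$ weighted by the within-cluster coefficient gap, and the Lipschitz property (constant $L_w$) together with the Chordal bound $\epsilon_U$ controls $|w_m^k - \overline{w}_j^k| \lesssim L_w\epsilon_U$; summing over the $K+1$ orders gives $C_2(K+1)\epsilon_U$, while $\nabla\mathcal{L}_\mathrm{reg} = \lambda_1\operatorname{sign}(w_m^k) + \lambda_2 w_m^k$ is bounded using $|w_m^k|\le w_{\max}$ to produce $\lambda_1 C_3 + \lambda_2 C_4$. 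Summing the two pieces gives $\|\vecg^t\|_2 \le \mathcal{E}$.

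Finally I would unroll the one-step bound $\|\vecw^{t+1} - \vecw^*\|_2 \le \rho\|\vecw^t - \vecw^*\|_2 + \eta\mathcal{E}$ as a geometric recursion, obtaining
\[
\|\vecw^T - \vecw^*\|_2 \le \rho^T\|\vecw^0 - \vecw^*\|_2 + \eta\mathcal{E}\sum_{t=0}^{T-1}\rho^t \le \rho^T\|\vecw^0 - \vecw^*\|_2 + \frac{\eta}{1-\rho}\,\mathcal{E}.
\]
Substituting $\eta = 1/L_F$ and $1-\rho = \lambda_F/(L_F+\lambda_F)$ gives $\frac{\eta}{1-\rho} = \frac{L_F+\lambda_F}{\lambda_F L_F}$, reproducing Eq.~\eqref{eq:convergence-bound} exactly. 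The main obstacle is the semantic-gradient bound: it is the only genuinely computational step, since one must differentiate the Gaussian KL through the variational encoder and track how $\delta_\mu$ (both linearly and quadratically) and $\delta_\Sigma$ propagate, whereas the structural and contraction steps are essentially mechanical once the Lipschitz/boundedness assumptions and co-coercivity are invoked.
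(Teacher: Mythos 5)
Your skeleton---casting the round as perturbed gradient descent, splitting via the triangle inequality, using co-coercivity at $\eta = 1/L_F$ to get the contraction factor $1-\tfrac{\lambda_F}{L_F+\lambda_F}$, and unrolling the geometric recursion with $\tfrac{\eta}{1-\rho}=\tfrac{L_F+\lambda_F}{\lambda_F L_F}$---is exactly the paper's proof in Appendix~\ref{appendix:overall-convergence}, and your semantic bound follows the same route as Lemma~\ref{lem:semantic-gradient-error} (closed-form Gaussian KL, encoder Jacobian bound $L_q$, terms $\delta_\mu$, $\delta_\mu^2$, $\delta_\Sigma$). However, your structural error bound has a genuine gap. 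You take the perturbation to be the literal gradient $\vecg^t=\nabla(\mathcal{L}_\mathrm{node}+\mathcal{L}_\mathrm{struct})(\vecw^t)$ and assert that $\partial\mathcal{L}_\mathrm{align}/\partial w_m^k$ is a sign ``weighted by the within-cluster coefficient gap,'' hence of size $\lesssim L_w\epsilon_U$. No such weighting exists: $\mathcal{L}_\mathrm{align}$ in Eq.~\eqref{eq11} is an $\ell_1$ penalty, and the subgradient of $|w_m^k-\overline{w}_j^k|$ is $\mathrm{sign}(w_m^k-\overline{w}_j^k)$, of magnitude up to $1$ no matter how small the gap is (only a quadratic penalty $\tfrac12(w_m^k-\overline{w}_j^k)^2$ has a gradient proportional to the gap). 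Consequently, with your definition of $\vecg^t$, the alignment term contributes a quantity of order $\sqrt{M(K+1)}$ to $\|\vecg^t-\nabla F(\vecw^t)\|_2$ that does \emph{not} vanish as $\epsilon_U\to 0$, so the key step $\|\vecg^t-\nabla F(\vecw^t)\|_2\le\mathcal{E}$ fails and the error floor $C_2(K+1)\epsilon_U$ in Eq.~\eqref{eq:delta-bound} cannot be recovered.

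The paper avoids precisely this problem by never placing $\nabla\mathcal{L}_\mathrm{align}$ in the perturbation. In Lemma~\ref{lem:structural-gradient-error} the structural error is defined as $e_{\mathrm{struct}}(w)=\nabla F(w^{\mathrm{cm}})-\nabla F(w)+\lambda_1 g_{\ell_1}(w)+\lambda_2 w$, where $w^{\mathrm{cm}}$ stacks the cluster-mean coefficients $\overline{w}_{c(m)}^k$ (an analytical device, see the remark following that lemma) and $g_{\ell_1}$ is the subgradient of the $\ell_1$ term of $\mathcal{L}_\mathrm{reg}$ in Eq.~\eqref{eq12}, not of the alignment loss. The $\epsilon_U$-scaling then comes from \emph{smoothness of $F$} applied to a parameter shift rather than from any loss gradient: $\|\nabla F(w^{\mathrm{cm}})-\nabla F(w)\|_2\le L_F\|w^{\mathrm{cm}}-w\|_2\le L_F\sqrt{M}\,(K+1)L_w\epsilon_U$, using $|w_m^k-\overline{w}_j^k|\le L_w\epsilon_U$ from Assumptions~\ref{asm:bounded-heterogeneity} and~\ref{asm:bounded-filter}; the regularization pieces are bounded exactly as you did. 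To close your argument you would need to adopt this consensus-point comparison (or change the alignment penalty to a quadratic one); as written, the structural half of your bound on $\|\vecg^t\|_2$ is not valid.
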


The proof of Theorem~\ref{thm:FedSSA-convergence} is provided in Appendix~\ref{appendix:convergence-proof}. Moreover, we can derive the following corollary on the convergence rate from Eq.~\eqref{eq:convergence-bound}. 
\begin{corollary}[Convergence Rate]\label{cor:convergence-rate}
Under the conditions of Theorem~\ref{thm:FedSSA-convergence}, for any $\xi > 0$, if $T \geq \frac{L_F + \lambda_F}{\lambda_F} \log \frac{\|\vecw^0 - \vecw^*\|_2}{\xi}$, then the following holds:
\begin{equation}\label{eq:convergence-main}
\|\vecw^T - \vecw^*\|_2 \leq \xi + \frac{L_F + \lambda_F}{\lambda_F L_F} \mathcal{E}.
\end{equation}
\end{corollary}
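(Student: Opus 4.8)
The plan is to derive Corollary~\ref{cor:convergence-rate} directly from the two-term bound of Theorem~\ref{thm:FedSSA-convergence}, which I take as given. Writing $\rho \triangleq 1 - \frac{\lambda_F}{L_F + \lambda_F}$ for the contraction factor, the theorem already supplies
\[
\|\vecw^T - \vecw^*\|_2 \leq \rho^T \|\vecw^0 - \vecw^*\|_2 + \frac{L_F + \lambda_F}{\lambda_F L_F}\,\mathcal{E},
\]
whose two summands are a geometrically decaying transient term and an additive error floor produced by the semantic and structural alignment perturbations. The floor $\frac{L_F + \lambda_F}{\lambda_F L_F}\mathcal{E}$ is \emph{exactly} the constant appearing in Eq.~\eqref{eq:convergence-main} and needs no manipulation, so the whole task reduces to showing that the stated iteration threshold forces the transient term $\rho^T\|\vecw^0 - \vecw^*\|_2$ below the target accuracy $\xi$.

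First I would record that $0 < \rho < 1$, which follows from $0 < \lambda_F \leq L_F$ in Assumption~\ref{asm:combined} and guarantees geometric decay. If $\|\vecw^0 - \vecw^*\|_2 \leq \xi$ the transient term is already at most $\xi$ and any $T \geq 0$ works, so I would assume $\|\vecw^0 - \vecw^*\|_2 > \xi$. Imposing $\rho^T\|\vecw^0 - \vecw^*\|_2 \leq \xi$ and taking logarithms (with $\log \rho < 0$, so the inequality flips) produces the exact requirement
\[
T \;\geq\; \frac{\log\!\big(\|\vecw^0 - \vecw^*\|_2/\xi\big)}{\log(1/\rho)}.
\]

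The main — and essentially only nontrivial — step is reconciling the denominator $\log(1/\rho)$ with the clean prefactor $\frac{L_F + \lambda_F}{\lambda_F}$ in the hypothesis. Here I would invoke the elementary bound $-\log(1-x) \geq x$ for $x \in [0,1)$ with $x = \frac{\lambda_F}{L_F + \lambda_F}$, giving $\log(1/\rho) = -\log(1-x) \geq \frac{\lambda_F}{L_F + \lambda_F}$, equivalently $\frac{1}{\log(1/\rho)} \leq \frac{L_F + \lambda_F}{\lambda_F}$. Thus the assumed threshold $T \geq \frac{L_F + \lambda_F}{\lambda_F}\log\frac{\|\vecw^0 - \vecw^*\|_2}{\xi}$ dominates the exact condition above, so $\rho^T\|\vecw^0 - \vecw^*\|_2 \leq \xi$; substituting into the Theorem~\ref{thm:FedSSA-convergence} bound while leaving the error floor untouched yields Eq.~\eqref{eq:convergence-main}. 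The only places demanding care are the sign of $\log\rho$ and the direction of the logarithmic inequality, since a loose constant there would break the match with the stated prefactor; the remainder is bookkeeping.
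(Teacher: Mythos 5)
Your proposal is correct and follows essentially the same route as the paper's proof: both start from the two-term bound of Theorem~\ref{thm:FedSSA-convergence} with contraction factor $\rho = 1 - \frac{\lambda_F}{L_F+\lambda_F}$, and both use the elementary inequality $\log(1-x) \le -x$ to show that the stated threshold on $T$ forces $\rho^T\|\vecw^0 - \vecw^*\|_2 \le \xi$, leaving the error floor untouched. Your extra case split on $\|\vecw^0 - \vecw^*\|_2 \le \xi$ is harmless bookkeeping that the paper's uniform sign-handling argument covers implicitly.
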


\begin{remark}
Theorem~\ref{thm:FedSSA-convergence} reveals that FedSSA converges at a linear rate to an $\mathcal{O}(\mathcal{E})$-neighborhood of the optimal solution $\vecw^*$. The error floor $\mathcal{E}$ consists of two components: (i) semantic error induced by aligning local distributions with cluster-level distributions, and (ii) structural error induced by aligning local spectral GNNs with cluster-level spectral GNNs. When clustering quality improves to the optimal (\textit{i.e.}, $\delta_\mu, \delta_\Sigma, \epsilon_U \to 0$) and regularization parameters $\lambda_1, \lambda_2$ are sufficiently small, $\mathcal{E}$ approaches zero, and FedSSA converges to a neighborhood of the optimal solution.
\end{remark}

\section{Experiments}
To validate the effectiveness of our FedSSA, we perform extensive experiments on eleven widely used datasets, which include both homophilic and heterophilic graphs. Specifically, we compare FedSSA with eleven baseline methods. Following previous work~\cite{wentao2025fediih}, we evaluate 66 scenarios by varying the number of clients and adopting both non-overlapping and overlapping partitioning settings. To ensure a fair comparison, we compute the mean accuracy and standard deviation over ten independent runs. More detailed experimental settings can be found in~\cref{implementation_details}.

\subsection{Main Results}
\textbf{Homophilic Datasets} \cref{table1} shows experimental results on homophilic datasets under non-overlapping partitioning setting. FedSSA achieves the best performance among all methods, and standard deviations are relatively small as well, which suggests that FedSSA is more effective and stable than the compared methods. Moreover, experimental results under overlapping partitioning setting are provided in~\cref{additional_tables}.

\textbf{Heterophilic Datasets} \cref{table3} shows experimental results on heterophilic datasets under non-overlapping partitioning setting. FedSSA not only achieves the best average performance among all baseline methods, but also outperforms the second-best method (\textit{i.e.}, FedIIH) by 2.82\% in terms of classification accuracy. This is because our FedSSA explicitly tackles both node feature heterogeneity and structural heterogeneity through semantic and structural knowledge alignment. Similarly, experimental results under overlapping partitioning setting are provided in~\cref{additional_tables}.

\begin{table*}[t]
  \centering
  \scriptsize
  \caption{Accuracy (\%) of methods on six \textbf{homophilic} graph datasets under \textbf{non-overlapping} subgraph partitioning setting.}
    \label{table1}
    \renewcommand{\arraystretch}{0.9} 
       \scalebox{0.8}{
  \begin{tabular}{lcccccccccc}
  \hline
  \rowcolor{gray!50}
  \textbf{}     & \multicolumn{3}{c}{Cora}                                                    & \multicolumn{3}{c}{CiteSeer}                                                & \multicolumn{3}{c}{PubMed}                                                  & -              \\ \cline{2-11}
  Methods       & 5 Clients               & 10 Clients              & 20 Clients              & 5 Clients               & 10 Clients              & 20 Clients              & 5 Clients               & 10 Clients              & 20 Clients              & -              \\ \hline
  \rowcolor{gray!20}
  Local         & 81.30$\pm$0.21          & 79.94$\pm$0.24          & 80.30$\pm$0.25          & 69.02$\pm$0.05          & 67.82$\pm$0.13          & 65.98$\pm$0.17          & 84.04$\pm$0.18          & 82.81$\pm$0.39          & 82.65$\pm$0.03          & -              \\ \hline
  FedAvg~\cite{mcmahan2017communication}        & 74.45$\pm$5.64          & 69.19$\pm$0.67          & 69.50$\pm$3.58          & 71.06$\pm$0.60          & 63.61$\pm$3.59          & 64.68$\pm$1.83          & 79.40$\pm$0.11          & 82.71$\pm$0.29          & 80.97$\pm$0.26          & -              \\
  \rowcolor{gray!20}
  FedProx~\cite{MLSYS2020_1f5fe839}       & 72.03$\pm$4.56          & 60.18$\pm$7.04          & 48.22$\pm$6.18          & 71.73$\pm$1.11          & 63.33$\pm$3.25          & 64.85$\pm$1.35          & 79.45$\pm$0.25          & 82.55$\pm$0.24          & 80.50$\pm$0.25          & -              \\
  FedPer~\cite{Arivazhagan2019}        & 81.68$\pm$0.40          & 79.35$\pm$0.04          & 78.01$\pm$0.32          & 70.41$\pm$0.32          & 70.53$\pm$0.28          & 66.64$\pm$0.27          & 85.80$\pm$0.21          & 84.20$\pm$0.28          & 84.72$\pm$0.31          & -              \\
  \rowcolor{gray!20}
  GCFL~\cite{NEURIPS2021_9c6947bd}          & 81.47$\pm$0.65          & 78.66$\pm$0.27          & 79.21$\pm$0.70          & 70.34$\pm$0.57          & 69.01$\pm$0.12          & 66.33$\pm$0.05          & 85.14$\pm$0.33          & 84.18$\pm$0.19          & 83.94$\pm$0.36          & -              \\
  FedGNN~\cite{wu2021fedgnn}        & 81.51$\pm$0.68          & 70.12$\pm$0.99          & 70.10$\pm$3.52          & 69.06$\pm$0.92          & 55.52$\pm$3.17          & 52.23$\pm$6.00          & 79.52$\pm$0.23          & 83.25$\pm$0.45          & 81.61$\pm$0.59          & -              \\
  \rowcolor{gray!20}
  FedSage+\cite{NEURIPS2021_34adeb8e}      & 72.97$\pm$5.94          & 69.05$\pm$1.59          & 57.97$\pm$12.60          & 70.74$\pm$0.69          & 65.63$\pm$3.10          & 65.46$\pm$0.74          & 79.57$\pm$0.24          & 82.62$\pm$0.31          & 80.82$\pm$0.25          & -              \\
  FED-PUB~\cite{baek2023personalized}       & 83.70$\pm$0.19          & 81.54$\pm$0.12          & 81.75$\pm$0.56          & 72.68$\pm$0.44          & 72.35$\pm$0.53          & 67.62$\pm$0.12          & 86.79$\pm$0.09          & 86.28$\pm$0.18          & 85.53$\pm$0.30          & -              \\
  \rowcolor{gray!20}
  FedGTA~\cite{li2023fedgta}        & 80.06$\pm$0.63          & 80.59$\pm$0.38          & 79.01$\pm$0.31          & 70.12$\pm$0.10          & 71.57$\pm$0.34          & 69.94$\pm$0.14          & 87.75$\pm$0.01          & 86.80$\pm$0.01          & 87.12$\pm$0.05          & -              \\
  AdaFGL~\cite{li2024adafgl}        & 82.01$\pm$0.51          & 80.09$\pm$0.08          & 79.74$\pm$0.05          & 71.44$\pm$0.27          & 72.34$\pm$0.09          & 70.95$\pm$0.45          & 86.91$\pm$0.28          & 86.97$\pm$0.10          & 86.59$\pm$0.21          & -              \\ 
  \rowcolor{gray!20}
  FedTAD~\cite{zhu2024fedtad}       & 80.31$\pm$0.26          & 80.87$\pm$0.11          & 80.07$\pm$0.15          & 70.34$\pm$0.37          & 69.43$\pm$0.75          & 68.09$\pm$0.69          & 84.00$\pm$0.13          & 84.61$\pm$0.17          & 84.33$\pm$0.18          & -              \\ 
  FedIIH~\cite{wentao2025fediih}    & \underline{84.11$\pm$0.17}          & \underline{81.85$\pm$0.09}          & \underline{83.01$\pm$0.15}          & \underline{72.86$\pm$0.25}          & \underline{76.50$\pm$0.06}          & \underline{73.36$\pm$0.41}          & \underline{87.80$\pm$0.18}          & \underline{87.65$\pm$0.18}          & \underline{87.19$\pm$0.25}         & -              \\ \hline
  \rowcolor{yellow!30}
  FedSSA (Ours)                 & \textbf{84.67$\pm$0.05} & \textbf{82.32$\pm$0.04} & \textbf{84.13$\pm$0.09} & \textbf{73.06$\pm$0.08} & \textbf{77.65$\pm$0.10} & \textbf{74.09$\pm$0.09} & \textbf{88.11$\pm$0.07} & \textbf{87.78$\pm$0.13} & \textbf{87.37$\pm$0.14} & -              \\ \hline
  \rowcolor{gray!50}
  & \multicolumn{3}{c}{Amazon-Computer}                                         & \multicolumn{3}{c}{Amazon-Photo}                                            & \multicolumn{3}{c}{ogbn-arxiv}                                              & Avg.            \\ \cline{2-11} 
  Methods       & 5 Clients               & 10 Clients              & 20 Clients              & 5 Clients               & 10 Clients              & 20 Clients              & 5 Clients               & 10 Clients              & 20 Clients              & All           \\ \hline
  \rowcolor{gray!20}
  Local         & 89.22$\pm$0.13          & 88.91$\pm$0.17          & 89.52$\pm$0.20          & 91.67$\pm$0.09          & 91.80$\pm$0.02          & 90.47$\pm$0.15          & 66.76$\pm$0.07          & 64.92$\pm$0.09          & 65.06$\pm$0.05          & 79.57          \\ \hline
  FedAvg~\cite{mcmahan2017communication}        & 84.88$\pm$1.96          & 79.54$\pm$0.23          & 74.79$\pm$0.24          & 89.89$\pm$0.83          & 83.15$\pm$3.71          & 81.35$\pm$1.04          & 65.54$\pm$0.07          & 64.44$\pm$0.10          & 63.24$\pm$0.13          & 74.58          \\
  \rowcolor{gray!20}
  FedProx~\cite{MLSYS2020_1f5fe839}       & 85.25$\pm$1.27          & 83.81$\pm$1.09          & 73.05$\pm$1.30          & 90.38$\pm$0.48          & 80.92$\pm$4.64          & 82.32$\pm$0.29          & 65.21$\pm$0.20          & 64.37$\pm$0.18          & 63.03$\pm$0.04          & 72.84          \\
  FedPer~\cite{Arivazhagan2019}        & 89.67$\pm$0.34          & 89.73$\pm$0.04          & 87.86$\pm$0.43          & 91.44$\pm$0.37          & 91.76$\pm$0.23          & 90.59$\pm$0.06          & 66.87$\pm$0.05          & 64.99$\pm$0.18          & 64.66$\pm$0.11          & 79.94          \\
  \rowcolor{gray!20}
  GCFL~\cite{NEURIPS2021_9c6947bd}          & 89.07$\pm$0.91          & 90.03$\pm$0.16          & 89.08$\pm$0.25          & 91.99$\pm$0.29          & 92.06$\pm$0.25          & 90.79$\pm$0.17          & 66.80$\pm$0.12          & 65.09$\pm$0.08          & 65.08$\pm$0.04          & 79.90          \\
  FedGNN~\cite{wu2021fedgnn}        & 88.08$\pm$0.15          & 88.18$\pm$0.41          & 83.16$\pm$0.13          & 90.25$\pm$0.70          & 87.12$\pm$2.01          & 81.00$\pm$4.48          & 65.47$\pm$0.22          & 64.21$\pm$0.32          & 63.80$\pm$0.05          & 75.23          \\
  \rowcolor{gray!20}
  FedSage+\cite{NEURIPS2021_34adeb8e}      & 85.04$\pm$0.61          & 80.50$\pm$1.13          & 70.42$\pm$0.85          & 90.77$\pm$0.44          & 76.81$\pm$8.24          & 80.58$\pm$1.15          & 65.69$\pm$0.09          & 64.52$\pm$0.14          & 63.31$\pm$0.20          & 73.47          \\
  FED-PUB~\cite{baek2023personalized}       & \underline{90.74$\pm$0.05} & 90.55$\pm$0.13          & 90.12$\pm$0.09          & 93.29$\pm$0.19          & 92.73$\pm$0.18          & 91.92$\pm$0.12          & 67.77$\pm$0.09          & 66.58$\pm$0.08          & 66.64$\pm$0.12          & 81.59          \\
  \rowcolor{gray!20}
  FedGTA~\cite{li2023fedgta}        & 86.69$\pm$0.18          & 86.66$\pm$0.23          & 85.01$\pm$0.87          & 93.33$\pm$0.12          & 93.50$\pm$0.21          & 92.61$\pm$0.15          & 60.32$\pm$0.04                   & 60.22$\pm$0.09                   & 58.74$\pm$0.14                         & 79.45          \\ 
  AdaFGL~\cite{li2024adafgl}        & 80.20$\pm$0.05          & 83.62$\pm$0.26          & 84.53$\pm$0.23          & 86.69$\pm$0.19          & 89.85$\pm$0.83          & 88.11$\pm$0.05          & 52.73$\pm$0.19          & 51.77$\pm$0.36          & 50.94$\pm$0.08          & 76.97 \\     
  \rowcolor{gray!20}
  FedTAD~\cite{zhu2024fedtad}       & 82.20$\pm$1.20          & 85.50$\pm$0.33          & 83.91$\pm$1.54          & 92.29$\pm$0.39          & 90.59$\pm$0.09          & 89.18$\pm$0.84          & 65.35$\pm$0.14          & 64.06$\pm$0.25          & 64.45$\pm$0.13          & 78.87 \\    
  FedIIH~\cite{wentao2025fediih} & \underline{90.74$\pm$0.13}          & \underline{90.86$\pm$0.23} & \underline{90.44$\pm$0.05} & \underline{93.42$\pm$0.02} & \underline{94.22$\pm$0.08} & \underline{93.55$\pm$0.09} & \underline{70.30$\pm$0.06} & \underline{69.34$\pm$0.02} & \underline{68.65$\pm$0.04} & \underline{83.10} \\ \hline
  \rowcolor{yellow!30}
  FedSSA (Ours)                  & \textbf{91.09$\pm$0.07}          & \textbf{91.30$\pm$0.13} & \textbf{90.62$\pm$0.09} & \textbf{93.86$\pm$0.15} & \textbf{94.62$\pm$0.14} & \textbf{93.76$\pm$0.07} & \textbf{70.86$\pm$0.13} & \textbf{69.47$\pm$0.08} & \textbf{68.77$\pm$0.13} & \textbf{83.53} \\ \hline  
  \end{tabular}
  }
  \end{table*}

\begin{table*}[t]
\centering
\scriptsize
\caption{Comparisons on five \textbf{heterophilic} graph datasets under \textbf{non-overlapping} subgraph partitioning setting. Accuracy (\%) is reported for \textit{Roman-empire} and \textit{Amazon-ratings}, and AUC (\%) is reported for \textit{Minesweeper}, \textit{Tolokers}, and \textit{Questions}.}
\label{table3}
\renewcommand{\arraystretch}{0.9} 
  \scalebox{0.8}{
  \begin{tabular}{lcccccccccc}
  \hline
  \rowcolor{gray!50}
  \textbf{}     & \multicolumn{3}{c}{Roman-empire}                                            & \multicolumn{3}{c}{Amazon-ratings}                                          & \multicolumn{3}{c}{Minesweeper}                                             & -              \\ \cline{2-11} 
  Methods       & 5 Clients               & 10 Clients              & 20 Clients              & 5 Clients               & 10 Clients              & 20 Clients              & 5 Clients               & 10 Clients              & 20 Clients              & -              \\ \hline
  \rowcolor{gray!20}
  Local         & 33.65$\pm$0.13          & 28.42$\pm$0.26          & 23.89$\pm$0.32          & \underline{45.03$\pm$0.31} & \textbf{45.89$\pm$0.19} & \underline{46.02$\pm$0.02} & 71.35$\pm$0.17          & 69.96$\pm$0.16          & 69.31$\pm$0.09          & -              \\ \hline
  FedAvg~\cite{mcmahan2017communication}        & 38.93$\pm$0.32          & 35.43$\pm$0.32          & 32.00$\pm$0.39          & 41.26$\pm$0.53          & 41.66$\pm$0.14          & 42.20$\pm$0.21          & 72.60$\pm$0.08          & 69.68$\pm$0.09          & 71.36$\pm$0.16          & -              \\
  \rowcolor{gray!20}
  FedProx~\cite{MLSYS2020_1f5fe839}       & 27.95$\pm$0.59          & 26.43$\pm$1.41          & 23.12$\pm$0.49          & 36.92$\pm$0.02          & 36.86$\pm$0.14          & 36.96$\pm$0.05          & 71.91$\pm$0.27          & 70.66$\pm$0.20          & 71.50$\pm$0.37          & -              \\
  FedPer~\cite{Arivazhagan2019}        & 20.75$\pm$1.75          & 15.51$\pm$1.13          & 15.45$\pm$2.76          & 36.62$\pm$0.30          & 32.34$\pm$1.01          & 36.96$\pm$0.03          & 58.73$\pm$10.45         & 65.35$\pm$7.02          & 53.80$\pm$11.40         & -              \\
  \rowcolor{gray!20}
  GCFL~\cite{NEURIPS2021_9c6947bd}          & 40.65$\pm$0.14          & 40.51$\pm$0.24          & 37.85$\pm$0.25          & 36.92$\pm$0.05          & 36.86$\pm$0.14          & 36.96$\pm$0.02          & 72.04$\pm$0.13          & 71.88$\pm$0.12          & 69.20$\pm$0.18          & -              \\
  FedGNN~\cite{wu2021fedgnn}        & 30.26$\pm$0.11          & 29.09$\pm$0.13          & 26.60$\pm$0.02          & 36.80$\pm$0.06          & 36.72$\pm$0.00          & 36.45$\pm$0.09          & 72.15$\pm$0.13          & 71.08$\pm$0.07          &  71.71$\pm$0.27    & -              \\
  \rowcolor{gray!20}
  FedSage+\cite{NEURIPS2021_34adeb8e}      & 57.26$\pm$0.00          & 49.07$\pm$0.00          & 38.36$\pm$0.00          & 36.82$\pm$0.00          & 36.71$\pm$0.00          & 37.03$\pm$0.02          & \underline{77.74$\pm$0.00} & 72.80$\pm$0.00    & 69.70$\pm$0.00          & -              \\
  FED-PUB~\cite{baek2023personalized}       & 40.80$\pm$0.26          & 36.77$\pm$0.30          & 32.67$\pm$0.39          & 44.41$\pm$0.41    &  44.85$\pm$0.17    & 45.39$\pm$0.50    & 72.18$\pm$0.02          & 71.69$\pm$0.71          & 71.41$\pm$0.87          & -              \\
  \rowcolor{gray!20}
  FedGTA~\cite{li2023fedgta}        & 61.56$\pm$0.27    & 60.94$\pm$0.19    & 59.65$\pm$0.28    & 41.22$\pm$0.66          & 39.40$\pm$0.44          & 39.24$\pm$0.12          & 73.54$\pm$1.56          & 72.65$\pm$1.21          & 69.63$\pm$4.54          & -              \\
  AdaFGL~\cite{li2024adafgl}        & 67.64$\pm$0.18          & 64.55$\pm$0.09          & 62.42$\pm$0.26          & 41.70$\pm$0.06          & 42.30$\pm$0.00          & 42.59$\pm$0.14          & 73.24$\pm$1.13                  & 70.79$\pm$2.14                   & 71.26$\pm$1.31                         & -          \\ 
  \rowcolor{gray!20}
  FedTAD~\cite{zhu2024fedtad} & 45.26$\pm$0.19          & 44.71$\pm$0.38          & 42.04$\pm$0.13          & 43.59$\pm$0.33          & 43.35$\pm$0.29          & 44.50$\pm$0.26          & 72.39$\pm$0.43                  & 71.99$\pm$0.13                   & 72.74$\pm$0.03                         & -          \\ 
  FedIIH~\cite{wentao2025fediih} & \underline{68.32$\pm$0.05} & \underline{66.44$\pm$0.28} & \underline{64.61$\pm$0.13} & 44.26$\pm$0.24          & 44.24$\pm$0.10          & 45.19$\pm$0.04          & 74.29$\pm$0.02    & \underline{73.23$\pm$0.04} & \underline{72.81$\pm$0.02} &   -             \\ \hline
  \rowcolor{yellow!30}
FedSSA (Ours) & \textbf{68.67$\pm$0.10} & \textbf{66.81$\pm$0.09} & \textbf{65.14$\pm$0.15} & \textbf{45.18$\pm$0.14}          & \underline{45.11$\pm$0.15}          & \textbf{46.13$\pm$0.05}          & \textbf{82.26$\pm$0.14}    & \textbf{82.16$\pm$0.08} & \textbf{82.60$\pm$0.11} & -               \\ \hline
\rowcolor{gray!50}
                & \multicolumn{3}{c}{Tolokers}                                                & \multicolumn{3}{c}{Questions}                                               & \multicolumn{4}{c}{Avg.}                                                                     \\ \cline{2-11} 
  Methods       & 5 Clients               & 10 Clients              & 20 Clients              & 5 Clients               & 10 Clients              & 20 Clients              & 5 Clients               & 10 Clients              & 20 Clients              & All            \\ \hline
  \rowcolor{gray!20}
  Local         & 67.81$\pm$0.17          & 70.04$\pm$0.23          & 62.34$\pm$0.67          & 66.73$\pm$0.57          & 57.96$\pm$0.10          & 60.00$\pm$0.21          & 56.91                   & 54.45                   & 52.31                   & 54.56          \\ \hline
  FedAvg~\cite{mcmahan2017communication}        & 60.74$\pm$0.31          & 54.73$\pm$0.50          & 56.36$\pm$0.39          & 65.68$\pm$0.23          & 58.91$\pm$0.22          & 60.33$\pm$0.15          & 55.84                   & 52.08                   & 52.45                   & 53.46          \\
  \rowcolor{gray!20}
  FedProx~\cite{MLSYS2020_1f5fe839}       & 42.90$\pm$0.24          & 41.15$\pm$0.22          & 40.42$\pm$0.62          & 47.36$\pm$0.38          & 45.46$\pm$0.34          & 46.83$\pm$0.11          & 45.41                   & 44.11                   & 43.77                   & 44.43          \\
  FedPer~\cite{Arivazhagan2019}        & 46.61$\pm$9.88          & 54.97$\pm$13.23         & 44.82$\pm$11.61         & 58.38$\pm$9.39          & 59.40$\pm$9.71          & 62.32$\pm$1.56          & 44.22                   & 45.51                   & 42.67                   & 44.13          \\
  \rowcolor{gray!20}
  GCFL~\cite{NEURIPS2021_9c6947bd}          & 64.39$\pm$1.17          & 59.90$\pm$0.85          & 58.82$\pm$0.70          & 60.51$\pm$1.18          & 59.85$\pm$0.16          & 60.31$\pm$0.48          & 54.90                   & 53.80                   & 52.63                   & 53.78          \\
  FedGNN~\cite{wu2021fedgnn}        & 43.10$\pm$0.27          & 41.57$\pm$0.07          & 40.70$\pm$0.74          & 47.55$\pm$0.02          & 45.65$\pm$0.12          & 47.39$\pm$0.13          & 45.97                   & 44.82                   & 44.57                   & 45.12          \\
  \rowcolor{gray!20}
  FedSage+\cite{NEURIPS2021_34adeb8e}      & \underline{75.06$\pm$0.00} & 71.31$\pm$0.00          &  69.73$\pm$0.00    & 64.95$\pm$0.00          &  65.06$\pm$0.00    & 59.33$\pm$0.00          & 62.37            &  58.99             & 54.83                   &  58.73    \\
  FED-PUB~\cite{baek2023personalized}       & 70.88$\pm$0.58          & \underline{72.46$\pm$0.68} & 65.26$\pm$0.59          &  67.71$\pm$3.99    & 59.64$\pm$0.52          & 62.48$\pm$2.92    & 59.20                   & 57.08                   &  55.44             & 57.24          \\
  \rowcolor{gray!20}
  FedGTA~\cite{li2023fedgta}        & 60.83$\pm$0.45          & 55.18$\pm$1.20          & 57.89$\pm$1.61          & 65.56$\pm$1.91          & 58.29$\pm$1.57          & 61.70$\pm$0.35          & 60.54                   & 57.29                   & 57.62                   & 58.49          \\
  AdaFGL~\cite{li2024adafgl}        & 59.26$\pm$2.18          & 54.78$\pm$2.12          & 56.61$\pm$2.93          & 64.23$\pm$2.09          & 58.82$\pm$1.14          & 62.84$\pm$0.49          & 61.21                   & 58.25                   & 59.14                         & 59.54          \\ 
  \rowcolor{gray!20} 
  FedTAD~\cite{zhu2024fedtad} & 60.91$\pm$0.25          & 53.39$\pm$1.73          & 56.47$\pm$1.58          & 68.89$\pm$1.20          & 58.44$\pm$1.06          & 61.51$\pm$1.56          & 58.21                  & 54.38                   & 55.45                         & 56.01         \\
  FedIIH~\cite{wentao2025fediih} &  71.09$\pm$0.26    & 71.32$\pm$0.09    & \underline{70.30$\pm$0.10} & \underline{68.32$\pm$0.03} & \underline{67.99$\pm$0.09} & \underline{65.40$\pm$0.07} & \underline{65.26}          & \underline{64.64}          & \underline{63.66}          & \underline{64.52} \\ \hline
  \rowcolor{yellow!30}
FedSSA (Ours) & \textbf{75.82$\pm$0.05} & \textbf{73.96$\pm$0.10} & \textbf{72.29$\pm$0.11} & \textbf{69.51$\pm$0.15}          & \textbf{68.69$\pm$0.18}          & \textbf{65.74$\pm$0.07}          & \textbf{68.29}    & \textbf{67.35} & \textbf{66.10} & \textbf{67.34}               \\ \hline
      \end{tabular}
       }
      \end{table*}

\subsection{Ablation Study}
To shed light on the contributions of components in our FedSSA, we perform ablation studies on \textit{Cora} and \textit{Tolokers} datasets, which are shown in \cref{fig-ablation}. Specifically, we employ `w/o semantic' and `w/o structural' to represent the reduced methods by removing `sharing semantic knowledge' and `sharing structural knowledge', respectively. It can be observed that the performance decreases when any component is removed, which demonstrates that each component contributes significantly. For example, the accuracies on \textit{Cora} dataset are significantly decreased by more than 5\% when both components are disabled. Ablation studies on other datasets are presented in~\cref{ap_ablation_study}.

\begin{figure}[]
  \begin{center}
    \centerline{\includegraphics[width=6.5cm]{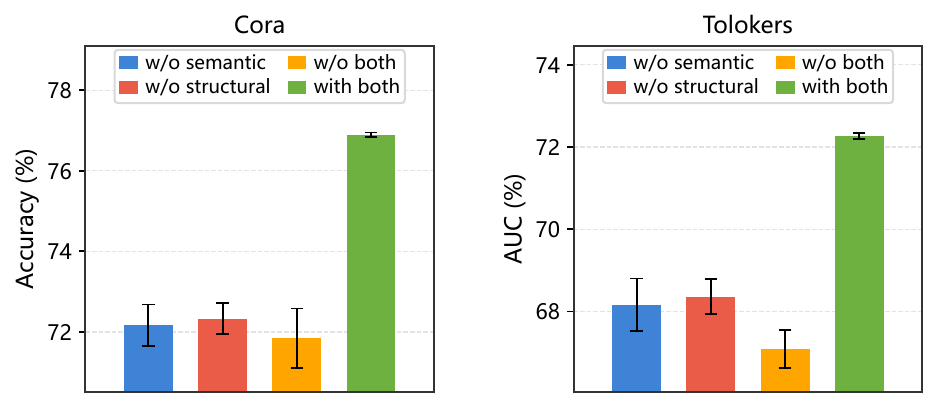}}
    \caption{Ablation studies under overlapping partitioning setting with 30 clients.}
     \vskip -0.5in
    \label{fig-ablation}
  \end{center}
\end{figure}


\subsection{Convergence Curves}
\label{Convergence_Curves}
As shown in \cref{fig5}, the convergence curves of our FedSSA exhibit small fluctuations, which validates its strong stability. Moreover, our FedSSA converges quickly within only a few communication rounds, which indicates its efficiency in real-world applications. For example, our method achieves convergence in an average of 60 communication rounds, while the compared baseline methods (\textit{e.g.}, GCFL) require an average of 90 communication rounds. This superior performance can be attributed to our proposed semantic and structural knowledge sharing, which effectively mitigates the impact of heterogeneity among clients, thereby accelerating convergence. More convergence curves are provided in~\cref{ap_convergence_curves}.

\begin{figure}[t]
  \centering
  \subfloat[\footnotesize{\textit{ogbn-arxiv}}]{\includegraphics[width=0.45\columnwidth]{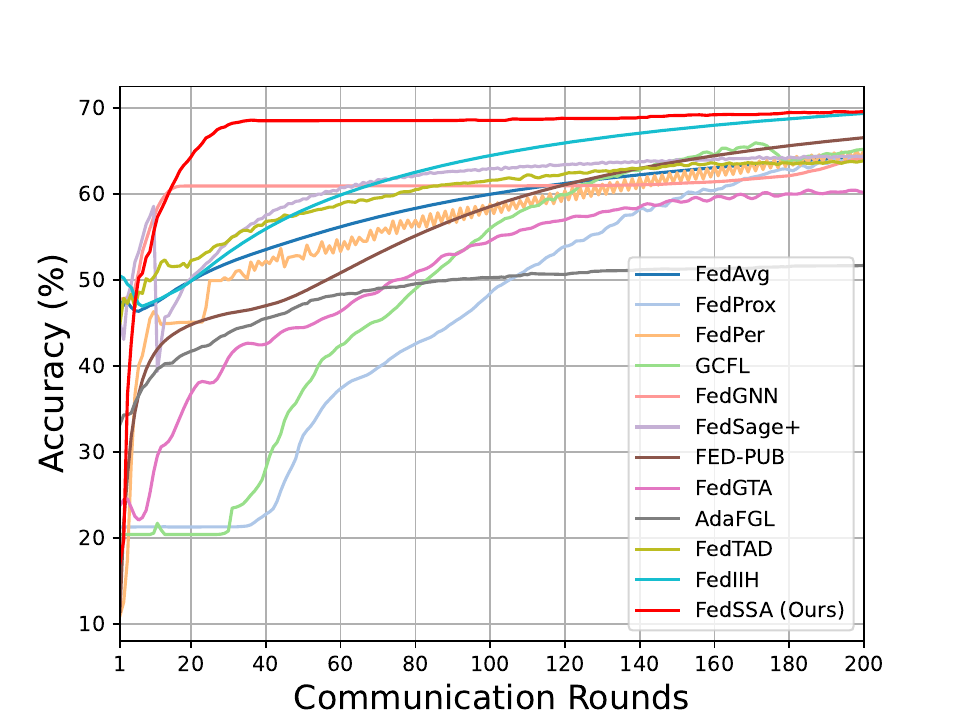}\label{fig5_1}}
  \hfill
  \subfloat[\footnotesize{\textit{Minesweeper}}]{\includegraphics[width=0.45\columnwidth]{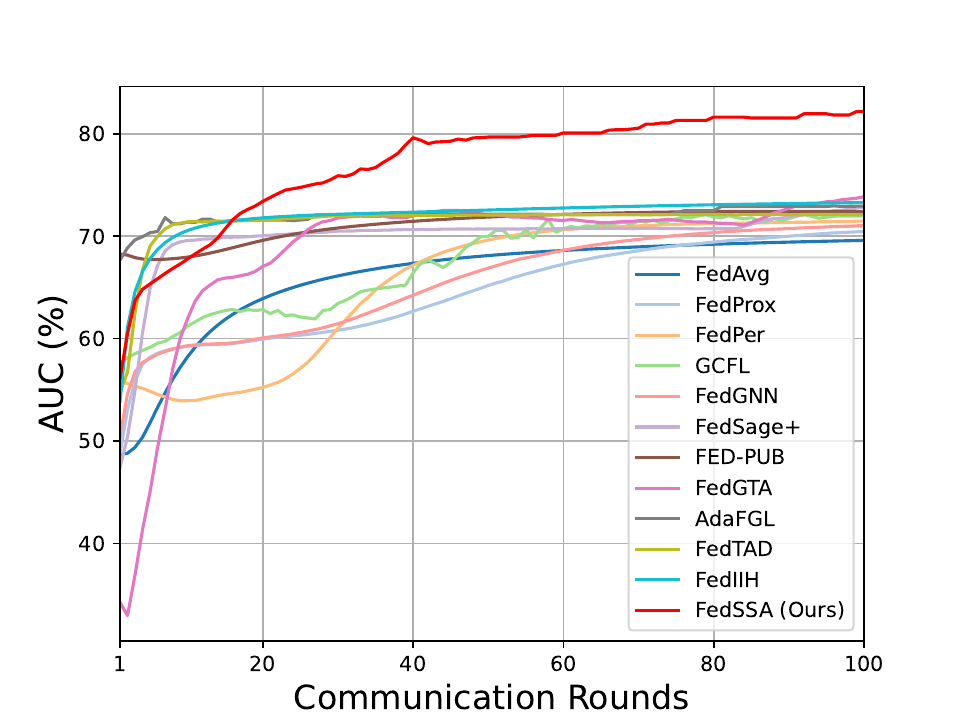}\label{fig5_2}}
  \vskip -0.1in
  \caption{Convergence curves on two datasets under non-overlapping partitioning setting with 10 clients.}
  \vskip -0.1in
  \label{fig5}
\end{figure}

\subsection{Sensitivity Analysis on Hyperparameters}
Here we perform a detailed sensitivity analysis of hyperparameters involved in our proposed FedSSA. Since our FedSSA includes four key hyperparameters (\textit{i.e.}, the number of node clusters $K_\mathrm{node}$, the number of structural clusters $K_\mathrm{struct}$, and regularization hyperparameters $\lambda_1$ and $\lambda_2$), we plot accuracy curves with variance bars under different values of hyperparameters on \textit{Cora} dataset. As shown in \cref{fig6}, the performance variations under different values of hyperparameters are small, which validates that FedSSA is not sensitive to these hyperparameters. More sensitivity analyses are provided in Appendix~\ref{ap_sensitivity_analysis}.

\begin{figure}[]
  \centering
  \subfloat[\footnotesize{$K_\mathrm{node}$}]{\includegraphics[width=0.4\columnwidth]{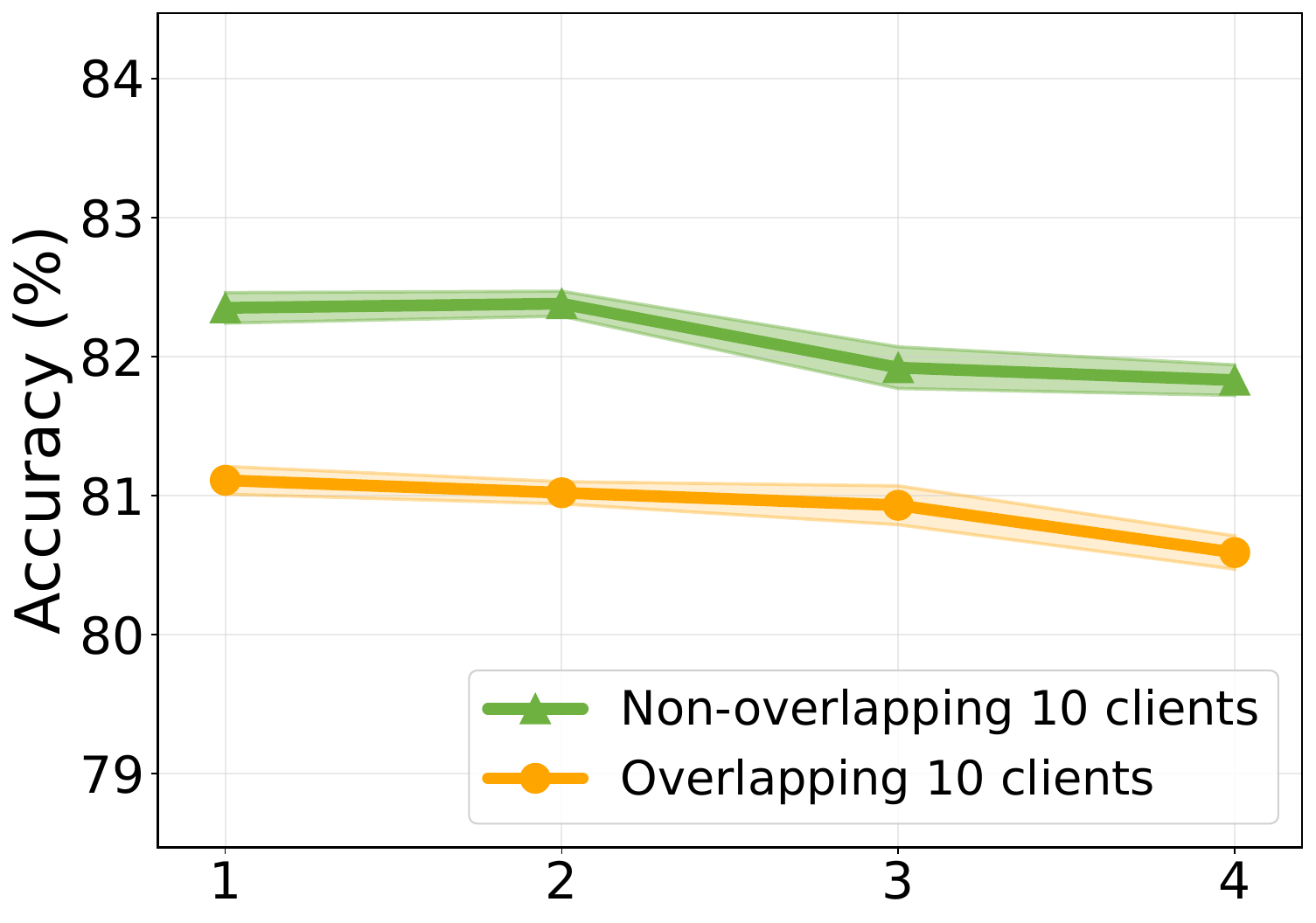}\label{fig6_1}}
  \hfill
  \subfloat[\footnotesize{$K_\mathrm{struct}$}]{\includegraphics[width=0.4\columnwidth]{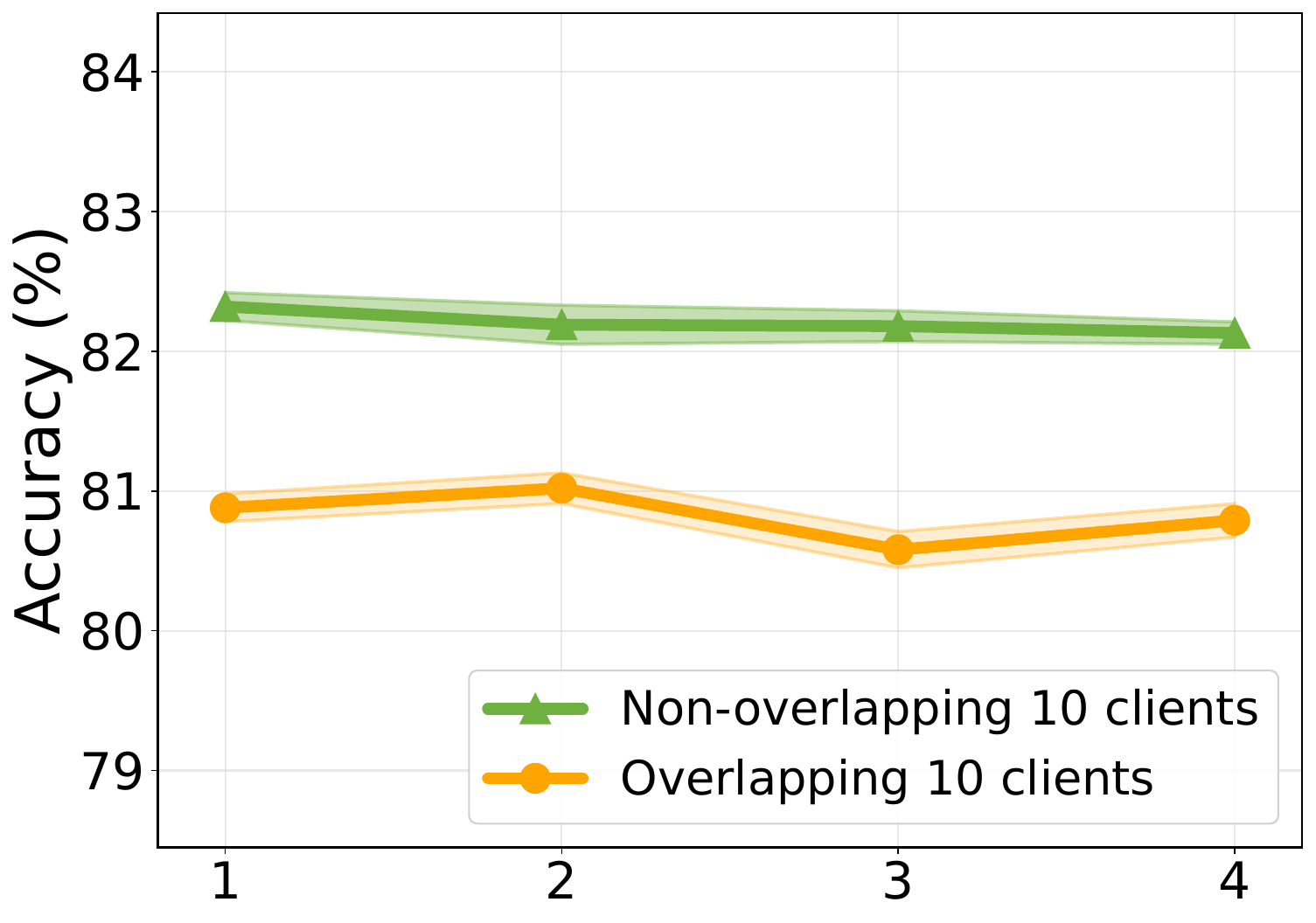}\label{fig6_2}}
  \hfill
  \subfloat[\footnotesize{$\lambda_1$}]{\includegraphics[width=0.4\columnwidth]{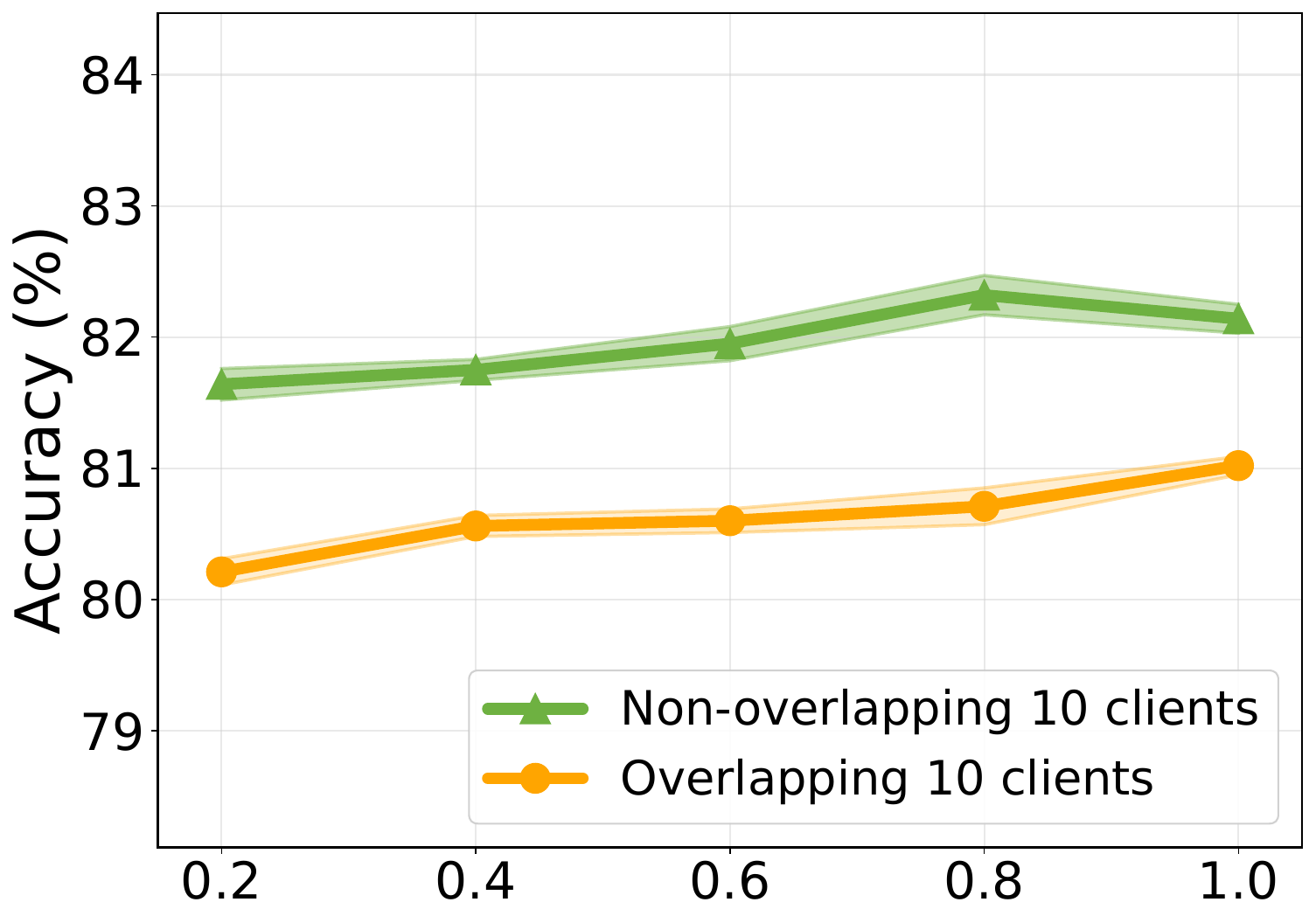}\label{fig6_3}}
  \hfill
    \subfloat[\footnotesize{$\lambda_2$}]{\includegraphics[width=0.4\columnwidth]{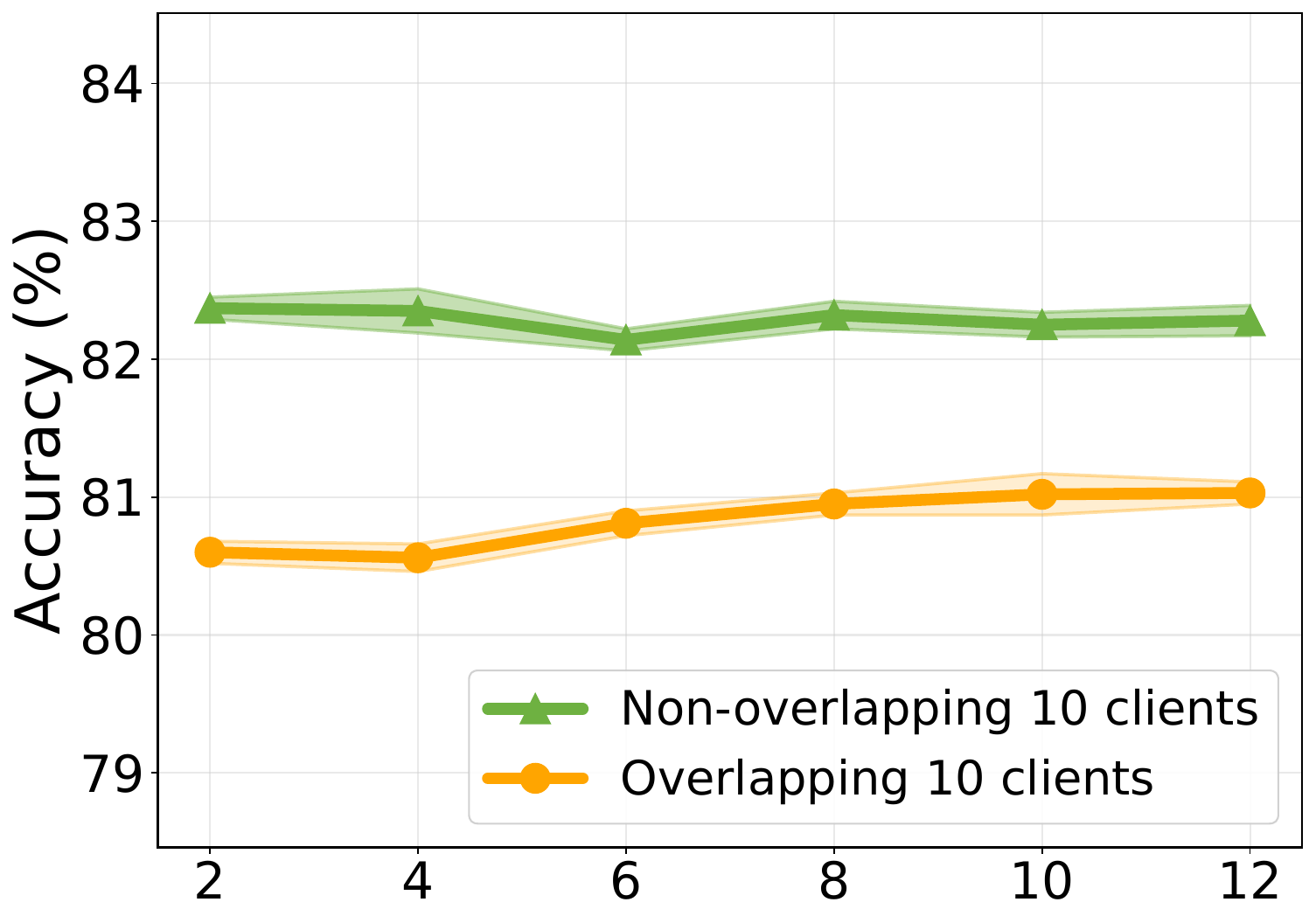}\label{fig6_4}}
  \caption{Accuracy curves with standard deviation bands on \textit{Cora} dataset under different values of $K_\mathrm{node}$, $K_\mathrm{struct}$, $\lambda_1$, and $\lambda_2$.}
  \vskip -0.3in
  \label{fig6}
\end{figure}

\subsection{Case Study}
To illustrate the effectiveness of FedSSA in mitigating heterogeneity, we conduct case studies on two datasets. Specifically, we first visualize semantic representations of various clients obtained from `w/o semantic' and `with semantic' (\textit{i.e.}, FedSSA without/with semantic knowledge sharing) by using t-SNE~\cite{van2008visualizing}. Subsequently, we plot spectral properties captured by local models under `w/o structural' and `with structural' (\textit{i.e.}, FedSSA without/with structural knowledge sharing). As shown in \cref{fig-case-study}, the 2D projections of representations obtained from `with semantic' show more compact clusters when compared with `w/o semantic', which demonstrates the effectiveness of FedSSA in mitigating node feature heterogeneity. Moreover, spectral properties obtained from `with structural' align more closely to those of cluster-level when compared with `w/o structural', which validates the effectiveness of FedSSA in addressing structural heterogeneity. More case studies are provided in Appendix~\ref{ap_case_study}.

\begin{figure}[]
\centering
  \subfloat[\footnotesize{semantic representations of clients}]{\includegraphics[width=0.87\columnwidth]{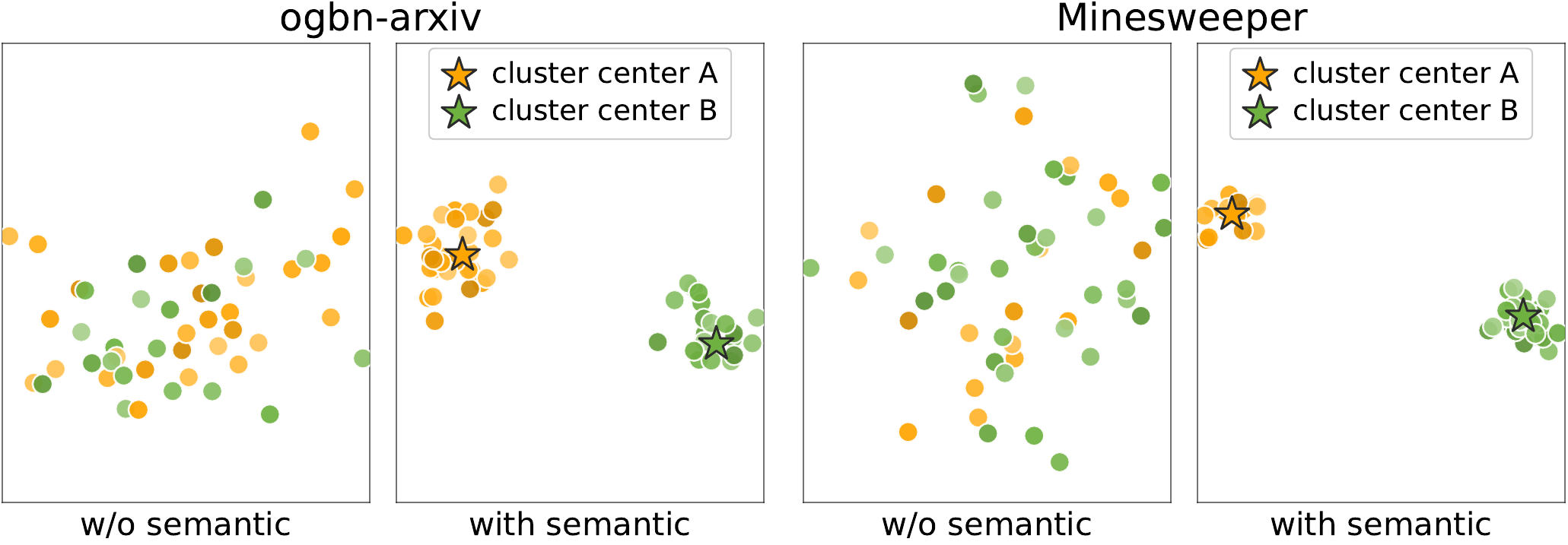}\label{fig-case-study_1}}
  \hfill
  \subfloat[\footnotesize{spectral properties of local models}]{\includegraphics[width=0.72\columnwidth]{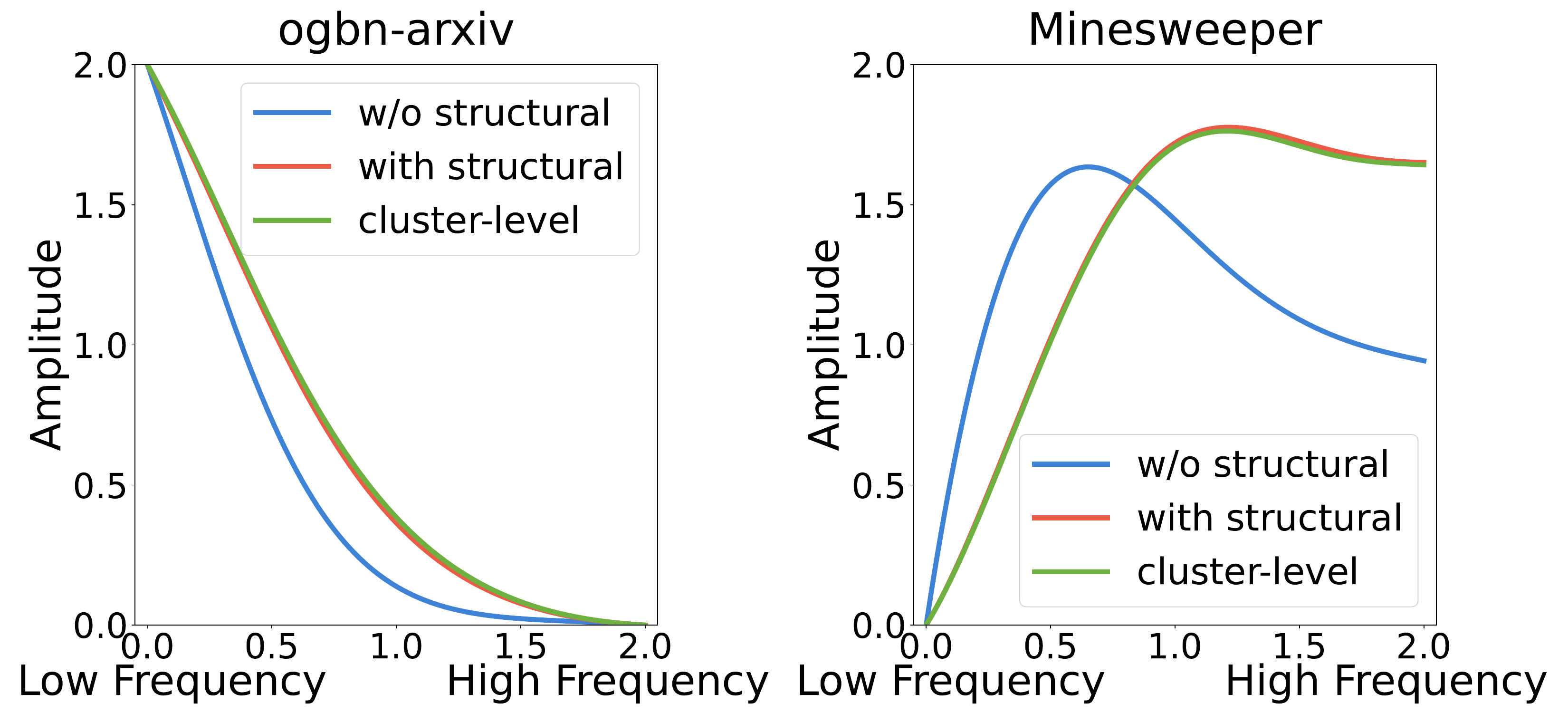}\label{fig-case-study_2}}
  \caption{Case studies on two datasets under overlapping partitioning setting with 50 clients.}
  \vskip -0.3in
  \label{fig-case-study}
\end{figure}

\section{Conclusion}\label{conclusion}
In this paper, we propose a novel graph \underline{\textbf{Fed}}erated learning method via \underline{\textbf{S}}emantic and \underline{\textbf{S}}tructural \underline{\textbf{A}}lignment (FedSSA) to address heterogeneity in Graph Federated Learning (GFL). Specifically, instead of simply performing the weighted aggregation of model parameters, our proposed FedSSA enforces class-wise semantic knowledge sharing and structural knowledge sharing. On one hand, we minimize the divergence between local distributions and cluster-level distributions to mitigate the heterogeneity in node features. On the other hand, we align the spectral characteristics of local spectral GNNs with those of cluster-level spectral GNNs to address the heterogeneity in structural topologies. By separately addressing two types of heterogeneity, our proposed FedSSA achieves strong performance on eleven datasets and outperforms the second-best method by a large margin of 2.82\% in terms of classification accuracy.




\section*{Impact Statement}
This work advances the development of Graph Federated Learning (GFL) methods that satisfy both privacy preservation and model effectiveness, thereby reducing the risks of privacy leakage in distributed graph learning scenarios. We encourage continued research into legally compliant and reliable GFL approaches that respect individual rights and intellectual property while maintaining robustness across diverse applications. By facilitating the broader deployment of GFL methods that can adapt to evolving legal and ethical standards, this research contributes to ensuring that such technologies remain trustworthy and socially beneficial.

\bibliography{example_paper}
\bibliographystyle{icml2026}

\newpage
\appendix
\onecolumn
\section{Notations}\label{ap_notations}
For convenience, key notations used throughout the paper are summarized in \cref{tab-notations}.

\begin{table}[t]
\centering
\scriptsize
\caption{Key notations used throughout the paper.}
\label{tab-notations}
\renewcommand{\arraystretch}{1.1}
\begin{tabular}{ll}
\toprule
\textbf{Notation} & \textbf{Description} \\
\hline
$\mathcal{M}$ & Set of clients, $\mathcal{M} = \{1, 2, \cdots, M\}$. \\
$\mathcal{G}_m = \langle \mathcal{V}_m, \mathcal{E}_m \rangle$ & Local graph on client $m$ with nodes $\mathcal{V}_m$ and edges $\mathcal{E}_m$. \\
$n_m$, $e_m$ & Number of nodes and edges on client $m$. \\
$d$, $C$ & Feature dimension and number of classes. \\
$\mathbf{X}_m \in \mathbb{R}^{n_m \times d}$ & Node feature matrix of client $m$. \\
$\mathbf{A}_m \in \mathbb{R}^{n_m \times n_m}$ & Adjacency matrix of client $m$. \\
$\mathbf{Y}_m$ & Label matrix of client $m$. \\
$\mathbf{L}_m$ & Symmetric normalized Laplacian of client $m$. \\
$\mathbf{U}_m$, $\mathbf{\Lambda}_m$ & Eigenvectors and eigenvalues of $\mathbf{L}_m$. \\
$\tilde{\mathbf{X}}_m$ & Graph Fourier transform of $\mathbf{X}_m$. \\
$K_\mathrm{node}$ & Number of node-level clusters. \\
$K_\mathrm{struct}$ & Number of structural clusters. \\
$T$ & Total number of communication rounds. \\
$\lambda_1$, $\lambda_2$ & Hyperparameters for regularization weights. \\
$K$ & Number of polynomial filter orders. \\
\bottomrule
\end{tabular}
\end{table}

\section{Derivation of Eq.~\eqref{eq1}}\label{appendix-elbo1}
The log marginal likelihood $\log p(\mathbf{X}_m, \mathbf{Y}_m)$ can be derived as
\begin{equation}\label{eq-log-marginal}
  \begin{aligned}
\log p(\mathbf{X}_m, \mathbf{Y}_m)
&= \log \int p(\mathbf{X}_m, \mathbf{Y}_m, \mathbf{Z}_m) \, d\mathbf{Z}_m\\
&=\log \int p(\mathbf{X}_m | \mathbf{Z}_m, \mathbf{Y}_m)\, p(\mathbf{Z}_m)\, p(\mathbf{Y}_m) \, d\mathbf{Z}_m\\
&=\log \int p(\mathbf{X}_m | \mathbf{Z}_m, \mathbf{Y}_m)\, p(\mathbf{Z}_m)\, p(\mathbf{Y}_m)\,
\frac{q(\mathbf{Z}_m | \mathbf{X}_m, \mathbf{Y}_m)}{q(\mathbf{Z}_m | \mathbf{X}_m, \mathbf{Y}_m)} \, d\mathbf{Z}_m\\
&=\log \mathbb{E}_{\mathbf{Z}_m \sim q(\mathbf{Z}_m | \mathbf{X}_m, \mathbf{Y}_m)}
\frac{p(\mathbf{X}_m | \mathbf{Z}_m, \mathbf{Y}_m)\, p(\mathbf{Z}_m)\, p(\mathbf{Y}_m)}
{q(\mathbf{Z}_m | \mathbf{X}_m, \mathbf{Y}_m)}\\
&\ge \mathbb{E}_{\mathbf{Z}_m \sim q(\mathbf{Z}_m | \mathbf{X}_m, \mathbf{Y}_m)}
\log \frac{p(\mathbf{X}_m | \mathbf{Z}_m, \mathbf{Y}_m)\, p(\mathbf{Z}_m)\, p(\mathbf{Y}_m)}
{q(\mathbf{Z}_m | \mathbf{X}_m, \mathbf{Y}_m)}\\
&\triangleq \mathcal{L}_\mathrm{ELBO}(\mathbf{X}_m, \mathbf{Y}_m),
  \end{aligned}
\end{equation}
where the inequality is obtained by Jensen's inequality. Afterwards, we can further derive the Evidence Lower BOund (ELBO) $\mathcal{L}_\mathrm{ELBO}(\mathbf{X}_m, \mathbf{Y}_m)$ as
\begin{equation}\label{eq-elbo-derivation1}
  \begin{aligned}
\mathcal{L}_\mathrm{ELBO}(\mathbf{X}_m, \mathbf{Y}_m) &= \mathbb{E}_{\mathbf{Z}_m \sim q(\mathbf{Z}_m | \mathbf{X}_m, \mathbf{Y}_m)} \log p(\mathbf{X}_m | \mathbf{Z}_m, \mathbf{Y}_m) + \mathbb{E}_{\mathbf{Z}_m \sim q(\mathbf{Z}_m | \mathbf{X}_m, \mathbf{Y}_m)}[\log p(\mathbf{Z}_m) + \log p(\mathbf{Y}_m)]\\
&\quad - \mathbb{E}_{\mathbf{Z}_m \sim q(\mathbf{Z}_m | \mathbf{X}_m, \mathbf{Y}_m)} \log q(\mathbf{Z}_m | \mathbf{X}_m, \mathbf{Y}_m),\\
&=\mathbb{E}_{\mathbf{Z}_m \sim q(\mathbf{Z}_m | \mathbf{X}_m, \mathbf{Y}_m)} \log p(\mathbf{X}_m | \mathbf{Z}_m, \mathbf{Y}_m) + \mathbb{E}_{\mathbf{Z}_m \sim q(\mathbf{Z}_m | \mathbf{X}_m, \mathbf{Y}_m)}\log p(\mathbf{Y}_m)\\
&\quad + \mathbb{E}_{\mathbf{Z}_m \sim q(\mathbf{Z}_m | \mathbf{X}_m, \mathbf{Y}_m)}\log p(\mathbf{Z}_m) - \mathbb{E}_{\mathbf{Z}_m \sim q(\mathbf{Z}_m | \mathbf{X}_m, \mathbf{Y}_m)} \log q(\mathbf{Z}_m | \mathbf{X}_m, \mathbf{Y}_m),\\
&=\mathbb{E}_{\mathbf{Z}_m \sim q(\mathbf{Z}_m | \mathbf{X}_m, \mathbf{Y}_m)} \log p(\mathbf{X}_m | \mathbf{Z}_m, \mathbf{Y}_m) + \log p(\mathbf{Y}_m) - D_\mathrm{KL}\big(q(\mathbf{Z}_m|\mathbf{X}_m, \mathbf{Y}_m) \parallel p(\mathbf{Z}_m) \big).
  \end{aligned}
\end{equation}
Finally, we obtain Eq.~\eqref{eq1} in the main paper.

\section{Details of Variational Graph AutoEncoder}\label{appendix-vgae}
In this section, we provide the architectural details of Variational Graph AutoEncoder (VGAE), which is proposed to infer the class-conditional latent distribution $q(\mathbf{Z}_m^c)$ in Eq.~\eqref{eq3}.

\subsection{Architecture of VGAE}
Here we present the architecture of our proposed VGAE. Since VGAE is deployed on each client, we take the $m$-th client as an illustrative example. Recall that the local graph $\mathcal{G}_m$ on client $m$ consists of a node feature matrix $\mathbf{X}_m$ and the corresponding label matrix $\mathbf{Y}_m$. As illustrated in \cref{fig-ap-vgae}, our VGAE is made up of an encoder and a decoder. The encoder approximates the posterior distribution $q(\mathbf{Z}_m | \mathbf{X}_m, \mathbf{Y}_m)$ by inferring latent variables $\mathbf{Z}_m$ from node features $\mathbf{X}_m$ conditioned on labels $\mathbf{Y}_m$. Specifically, we concatenate the one-hot encoding of node labels with node features as input to the encoder. Since class labels are explicitly incorporated, we can readily obtain the class-conditional distribution $q(\mathbf{Z}_m^c)$ for any given class $c$. Note that we employ reparameterization trick~\cite{kingma2013auto} to sample $\mathbf{Z}_m$. Meanwhile, we employ a decoder to approximate the generative distribution $p(\mathbf{X}_m|\mathbf{Y}_m, \mathbf{Z}_m)$. Specifically, the decoder is implemented as an inner product layer following~\cite{kipf2016variational}.

\begin{figure*}[t]
  \begin{center}
    \centerline{\includegraphics[width=10cm]{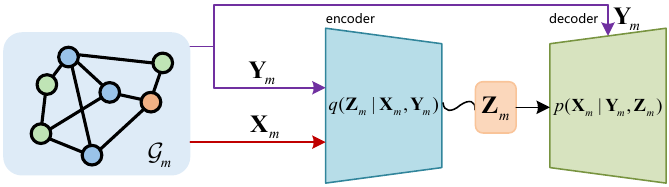}}
    \caption{The architecture of our proposed VGAE, which comprises an encoder to infer latent variables and a decoder for graph reconstruction.}
    \label{fig-ap-vgae}
  \end{center}
\end{figure*}

\subsection{ELBO Formulation}
As ELBO is defined in Eq.~\eqref{eq1}, we provide its detailed formulation as follows. First, ELBO in Eq.~\eqref{eq1} is composed of three terms. The first term (\textit{i.e.}, $\mathbb{E}_{\mathbf{Z}_m \sim q(\mathbf{Z}_m | \mathbf{X}_m, \mathbf{Y}_m)} \log p(\mathbf{X}_m | \mathbf{Z}_m, \mathbf{Y}_m)$) is the expected log-likelihood, which can be implemented by a reconstruction loss. The second term (\textit{i.e.}, $\log p(\mathbf{Y}_m)$) is the log prior of labels, which can be computed by the class distribution of labeled nodes. The third term (\textit{i.e.}, $- D_\mathrm{KL}\big(q(\mathbf{Z}_m|\mathbf{X}_m, \mathbf{Y}_m) \parallel p(\mathbf{Z}_m) \big)$) is the negative KL divergence between the variational posterior and the prior of latent variables. Since we assume that both the variational posterior and prior follow Gaussian distributions, this term can be computed in closed form~\cite{kingma2013auto}. Finally, the overall ELBO can be optimized by maximizing these three terms.

\section{Convergence Analysis of FedSSA}\label{appendix:convergence-proof}
In this section, we first provide formal and detailed mathematical definitions for assumptions summarized in~\cref{asm:combined}. Second, we provide the complete proof of Theorem~\ref{thm:FedSSA-convergence}, which consists of three parts:  (i) error bound for semantic knowledge sharing, (ii) error bound for structural knowledge sharing, and (iii) overall convergence analysis.

\subsection{Assumptions}\label{appendix:intro-assumptions}
To provide a rigorous theoretical foundation, we further decompose the summarized assumption in~\cref{asm:combined} into the following detailed assumptions, which are commonly adopted in the convergence analysis of federated learning~\cite{hu2024fedcross, yuintegrating, valdeiravertical}.

\begin{assumption}[Smoothness]\label{asm:smoothness}
The population risk function $F(\vecw)$ is $L_F$-smooth, namely, for all local models $\vecw, \vecw' \in \mathcal{W}$,
\[
\|\nabla F(\vecw) - \nabla F(\vecw')\|_2 \leq L_F \|\vecw - \vecw'\|_2.
\]
\end{assumption}

\begin{assumption}[Strong Convexity]\label{asm:strong-convexity}
The population risk function $F(\vecw)$ is $\lambda_F$-strongly convex, namely, for all local models $\vecw, \vecw' \in \mathcal{W}$,
\[
F(\vecw') \geq F(\vecw) + \langle \nabla F(\vecw), \vecw' - \vecw \rangle + \frac{\lambda_F}{2} \|\vecw' - \vecw\|_2^2.
\]
\end{assumption}

\begin{assumption}[Bounded Intra-cluster Heterogeneity]\label{asm:bounded-heterogeneity}
For semantic knowledge sharing, the intra-cluster distribution divergence is bounded.  Specifically, for any cluster $\mathcal{S}_i^c$ and any clients $m, n \in \mathcal{S}_i^c$, we assume
\[
\|\boldsymbol{\mu}_m^c - \boldsymbol{\mu}_n^c\|_2 \leq \delta_\mu, \quad \|\boldsymbol{\Sigma}_m^c - \boldsymbol{\Sigma}_n^c\|_\mathrm{F} \leq \delta_\Sigma. 
\]
Moreover, let $\sigma_{\min}^2 \triangleq \min_{c\in\{1, 2, \cdots,C\},\ i\in\{1, 2, \cdots,K_\mathrm{node}\}} \lambda_{\min}(\boldsymbol{\Sigma}_i^c)$ denote a uniform lower bound on the minimum eigenvalue of the cluster-level covariances $\boldsymbol{\Sigma}_i^c$ (defined in Eq.~\eqref{eq6}), and assume $\sigma_{\min}^2>0$ and
\[
\delta_\Sigma + \delta_\mu^2 \leq \frac{\sigma_{\min}^2}{2}.
\]
For structural knowledge sharing, the intra-cluster spectral energy divergence is bounded.  Specifically, for any cluster $\mathcal{T}_j$ and any clients $m, n \in \mathcal{T}_j$, we assume
\[
d_{\mathrm{Chordal}}(\mathbf{Q}_m, \mathbf{Q}_n) \leq \epsilon_U.
\]
\end{assumption}

\begin{assumption}[Bounded Jacobians of Variational Parameters]\label{asm:bounded-variational}
The variational parameters are differentiable functions of local model parameters $\vecw \in \mathcal{W}$. In particular, for each class $c$ and client $m$, $q(\mathbf{Z}_m^c)=\mathcal{N}(\boldsymbol{\mu}_m^c(\vecw),\boldsymbol{\Sigma}_m^c(\vecw))$. We assume there exists a constant $L_q>0$ such that for all $\vecw\in\mathcal{W}$,
\[
\|\nabla_{\vecw} \boldsymbol{\mu}_m^c(\vecw)\|_2 \leq L_q, \qquad \|\nabla_{\vecw} \boldsymbol{\Sigma}_m^c(\vecw)\|_\mathrm{F} \leq L_q,
\]
where $\nabla_{\vecw}\boldsymbol{\mu}_m^c(\vecw)$ denotes the Jacobian of $\boldsymbol{\mu}_m^c(\vecw)$, and $\nabla_{\vecw}\boldsymbol{\Sigma}_m^c(\vecw)$ denotes the Jacobian of $\boldsymbol{\Sigma}_m^c(\vecw)$.
\end{assumption}

\begin{assumption}[Bounded and Lipschitz Spectral Coefficients]\label{asm:bounded-filter}
The learnable coefficients of spectral GNNs satisfy the following properties.
\begin{itemize}
\item (Boundedness) For all $m \in \mathcal{M}$ and $k \in \{0, 1, \cdots, K\}$, there exists $w_{\max}>0$ such that
\[
|w_m^k|\le w_{\max}.
\]
\item (Lipschitz Dependence) There exists a constant $L_w>0$ such that for any clients $m,n$ and any filter order $k\in\{0,1,\cdots,K\}$,
\[
|w_m^k - w_n^k| \le L_w\, d_{\mathrm{Chordal}}(\mathbf{Q}_m,\mathbf{Q}_n).
\]
\end{itemize}
\end{assumption}

\begin{lemma}[Cluster-level Jacobian Bounds]\label{lem:cluster-jacobians}
Under Assumption~\ref{asm:bounded-heterogeneity} and Assumption~\ref{asm:bounded-variational}, the cluster-level parameters $\{\boldsymbol{\mu}_i^c(\vecw),\boldsymbol{\Sigma}_i^c(\vecw)\}$ defined in Eq.~\eqref{eq6} are differentiable in $\vecw$. Moreover, for all $\vecw\in\mathcal{W}$,
\[
\|\nabla_{\vecw}\boldsymbol{\mu}_i^c(\vecw)\|_2 \le L_q, \qquad
\|\nabla_{\vecw}\boldsymbol{\Sigma}_i^c(\vecw)\|_\mathrm{F} \le (1+4\delta_\mu)L_q.
\]
In particular, $L_q$ can be chosen sufficiently large so that the uniform bound $\|\nabla_{\vecw}\boldsymbol{\Sigma}_i^c(\vecw)\|_\mathrm{F} \leq L_q$ holds.
\end{lemma}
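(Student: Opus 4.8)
The plan is to differentiate the cluster-level parameters directly from their closed-form expressions in Eq.~\eqref{eq6} and bound the resulting Jacobians using Assumption~\ref{asm:bounded-variational} together with the deviation bound implied by Assumption~\ref{asm:bounded-heterogeneity}. Differentiability is immediate: both $\boldsymbol{\mu}_i^c$ and $\boldsymbol{\Sigma}_i^c$ are finite sums and products of the maps $\boldsymbol{\mu}_m^c(\vecw)$ and $\boldsymbol{\Sigma}_m^c(\vecw)$, each differentiable by Assumption~\ref{asm:bounded-variational}, while the mixture weights $\omega_m^c$ are constants independent of $\vecw$. For the mean, since $\boldsymbol{\mu}_i^c=\sum_{m\in\mathcal{S}_i^c}\omega_m^c\boldsymbol{\mu}_m^c$ is a convex combination, I would write $\nabla_{\vecw}\boldsymbol{\mu}_i^c=\sum_m\omega_m^c\nabla_{\vecw}\boldsymbol{\mu}_m^c$ and apply the triangle inequality together with $\sum_m\omega_m^c=1$ to obtain $\|\nabla_{\vecw}\boldsymbol{\mu}_i^c\|_2\le\sum_m\omega_m^c\|\nabla_{\vecw}\boldsymbol{\mu}_m^c\|_2\le L_q$.

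For the covariance, the crucial first move is to rewrite $\boldsymbol{\Sigma}_i^c$ via the weighted covariance identity
\[
\boldsymbol{\Sigma}_i^c = \sum_{m\in\mathcal{S}_i^c}\omega_m^c\boldsymbol{\Sigma}_m^c + \sum_{m\in\mathcal{S}_i^c}\omega_m^c(\boldsymbol{\mu}_m^c-\boldsymbol{\mu}_i^c)(\boldsymbol{\mu}_m^c-\boldsymbol{\mu}_i^c)^\top,
\]
which follows by expanding the outer products and using $\sum_m\omega_m^c\boldsymbol{\mu}_m^c=\boldsymbol{\mu}_i^c$. This reformulation is what makes the bound controllable: it expresses the second term through the centered deviations $\boldsymbol{\nu}_m\triangleq\boldsymbol{\mu}_m^c-\boldsymbol{\mu}_i^c$, whose norms satisfy $\|\boldsymbol{\nu}_m\|_2\le\sum_n\omega_n^c\|\boldsymbol{\mu}_m^c-\boldsymbol{\mu}_n^c\|_2\le\delta_\mu$ by Assumption~\ref{asm:bounded-heterogeneity}, rather than through the raw (and uncontrolled) magnitudes $\|\boldsymbol{\mu}_m^c\|_2$. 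The first term differentiates to a convex combination of the $\nabla_{\vecw}\boldsymbol{\Sigma}_m^c$ and is bounded by $L_q$ exactly as in the mean case.

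The remaining work is to bound the Jacobian of the deviation term. Taking a directional derivative along any unit direction $\mathbf{h}$, the product rule yields $\sum_m\omega_m^c(\boldsymbol{\nu}_m'\boldsymbol{\nu}_m^\top+\boldsymbol{\nu}_m\boldsymbol{\nu}_m'^\top)$, where $\boldsymbol{\nu}_m'$ is the directional derivative of $\boldsymbol{\nu}_m$. Using the identity $\|\mathbf{a}\mathbf{b}^\top\|_\mathrm{F}=\|\mathbf{a}\|_2\|\mathbf{b}\|_2$, the bound $\|\boldsymbol{\nu}_m\|_2\le\delta_\mu$, and the fact that $\|\boldsymbol{\nu}_m'\|_2\le\|D_{\mathbf{h}}\boldsymbol{\mu}_m^c\|_2+\|D_{\mathbf{h}}\boldsymbol{\mu}_i^c\|_2\le 2L_q$ (each directional derivative controlled by the operator-norm bound $L_q$), every summand contributes at most $4L_q\delta_\mu$. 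Since the weights sum to one, the deviation term's Jacobian has Frobenius norm at most $4L_q\delta_\mu$, and the triangle inequality gives $\|\nabla_{\vecw}\boldsymbol{\Sigma}_i^c\|_\mathrm{F}\le(1+4\delta_\mu)L_q$. The closing remark—that $L_q$ may be enlarged to absorb the factor $(1+4\delta_\mu)$—is then immediate.

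I expect the main obstacle to be essentially conceptual bookkeeping rather than hard analysis: correctly interpreting the matrix-valued Jacobian and its Frobenius norm, and, above all, recognizing that the covariance reformulation is indispensable. A naive product-rule expansion of $\boldsymbol{\mu}_m^c(\boldsymbol{\mu}_m^c)^\top$ would produce a bound scaling with the unbounded $\|\boldsymbol{\mu}_m^c\|_2$, whereas centering the means converts the dependence into the controlled deviation $\delta_\mu$, which is precisely what delivers the stated $(1+4\delta_\mu)L_q$ bound.
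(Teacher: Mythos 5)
Your proposal is correct and follows essentially the same route as the paper's proof: the convex-combination bound for $\nabla_{\vecw}\boldsymbol{\mu}_i^c$, the rewriting of $\boldsymbol{\Sigma}_i^c$ via the weighted covariance identity with centered deviations $\boldsymbol{\mu}_m^c-\boldsymbol{\mu}_i^c$, the bounds $\|\boldsymbol{\mu}_m^c-\boldsymbol{\mu}_i^c\|_2\le\delta_\mu$ and $\|\nabla_{\vecw}(\boldsymbol{\mu}_m^c-\boldsymbol{\mu}_i^c)\|_2\le 2L_q$, and the product rule yielding the $4\delta_\mu L_q$ contribution. Your treatment is if anything slightly more explicit than the paper's (spelling out the directional-derivative bookkeeping and deriving $\|\boldsymbol{\mu}_m^c-\boldsymbol{\mu}_i^c\|_2\le\delta_\mu$ from the pairwise assumption rather than asserting it), but it is the same argument.
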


\begin{proof}
According to Eq.~\eqref{eq6}, the cluster-level mean is given by
\[
\boldsymbol{\mu}_i^c(\vecw) =  \sum_{n \in \mathcal{S}_i^c} \omega_n^c \boldsymbol{\mu}_n^c(\vecw).
\]
Taking the Jacobian with respect to $\vecw$ and applying the triangle inequality, we obtain
\[
\|\nabla_{\vecw} \boldsymbol{\mu}_i^c(\vecw)\|_2
= \left\| \sum_{n \in \mathcal{S}_i^c} \omega_n^c \nabla_{\vecw} \boldsymbol{\mu}_n^c(\vecw) \right\|_2
\leq \sum_{n \in \mathcal{S}_i^c} \omega_n^c \|\nabla_{\vecw} \boldsymbol{\mu}_n^c(\vecw)\|_2
\leq L_q.
\]

In addition, for $\boldsymbol{\Sigma}_i^c(\vecw)$ in Eq.~\eqref{eq6}, we have
\[
\boldsymbol{\Sigma}_i^c(\vecw) = \sum_{n \in \mathcal{S}_i^c} \omega_n^c \boldsymbol{\Sigma}_n^c(\vecw)
+ \sum_{n \in \mathcal{S}_i^c} \omega_n^c \big(\boldsymbol{\mu}_n^c(\vecw) - \boldsymbol{\mu}_i^c(\vecw)\big)\big(\boldsymbol{\mu}_n^c(\vecw) - \boldsymbol{\mu}_i^c(\vecw)\big)^\top.
\]
Taking the Jacobian with respect to $\vecw$ and applying the triangle inequality, we obtain
\[
\begin{aligned}
\|\nabla_{\vecw}\boldsymbol{\Sigma}_i^c(\vecw)\|_\mathrm{F}
\le \sum_{n \in \mathcal{S}_i^c} \omega_n^c \|\nabla_{\vecw}\boldsymbol{\Sigma}_n^c(\vecw)\|_\mathrm{F} + \sum_{n \in \mathcal{S}_i^c} \omega_n^c 2 \|\boldsymbol{\mu}_n^c(\vecw) - \boldsymbol{\mu}_i^c(\vecw)\|_2 \cdot \|\nabla_{\vecw}(\boldsymbol{\mu}_n^c(\vecw) - \boldsymbol{\mu}_i^c(\vecw))\|_2.
\end{aligned}
\]
By Assumption~\ref{asm:bounded-heterogeneity}, $\|\boldsymbol{\mu}_n^c(\vecw) - \boldsymbol{\mu}_i^c(\vecw)\|_2 \le \delta_\mu$, and from the previous result, $\|\nabla_{\vecw}\boldsymbol{\mu}_n^c(\vecw)\|_2 \le L_q$, $\|\nabla_{\vecw}\boldsymbol{\mu}_i^c(\vecw)\|_2 \le L_q$. Therefore, we have
\[
\|\nabla_{\vecw}(\boldsymbol{\mu}_n^c(\vecw) - \boldsymbol{\mu}_i^c(\vecw))\|_2 \le \|\nabla_{\vecw}\boldsymbol{\mu}_n^c(\vecw)\|_2 + \|\nabla_{\vecw}\boldsymbol{\mu}_i^c(\vecw)\|_2 \le 2L_q.
\]
Combining the above, we obtain
\[
\|\nabla_{\vecw}\boldsymbol{\Sigma}_i^c(\vecw)\|_\mathrm{F} \le L_q + 2\delta_\mu \cdot 2L_q = (1 + 4\delta_\mu)L_q.
\]
\end{proof}

\subsection{Error Bound for Semantic Knowledge Sharing}\label{appendix:semantic-error}
We first analyze the error introduced by aligning local distributions with cluster-level distributions.  Recall that for each class $c$, the local distribution on client $m$ is $q(\mathbf{Z}_m^c) = \mathcal{N}(\boldsymbol{\mu}_m^c, \boldsymbol{\Sigma}_m^c)$, and the cluster-level distribution for cluster $\mathcal{S}_i^c$ is $q(\mathcal{S}_i^c) = \mathcal{N}(\boldsymbol{\mu}_i^c, \boldsymbol{\Sigma}_i^c)$.

\begin{lemma}[KL Divergence Bound]\label{lem:kl-bound}
Under Assumption~\ref{asm:bounded-heterogeneity}, for any client $m \in \mathcal{S}_i^c$, the KL divergence between the local distribution and the cluster-level distribution is bounded as
\begin{equation}\label{eq:kl-bound}
D_{\mathrm{KL}}\big(q(\mathbf{Z}_m^c) \| q(\mathcal{S}_i^c)\big) \leq \frac{1}{2\sigma_{\min}^2} \delta_\mu^2 + \frac{3d}{2\sigma_{\min}^2} (\delta_\Sigma + \delta_\mu^2),
\end{equation}
where $\sigma_{\min}^2>0$ is defined in Assumption~\ref{asm:bounded-heterogeneity}.
\end{lemma}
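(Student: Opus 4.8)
The plan is to begin from the closed-form KL divergence between two $d$-dimensional Gaussians. Writing $\boldsymbol{\Delta}_\mu \triangleq \boldsymbol{\mu}_m^c - \boldsymbol{\mu}_i^c$ and recalling $q(\mathbf{Z}_m^c) = \mathcal{N}(\boldsymbol{\mu}_m^c, \boldsymbol{\Sigma}_m^c)$ and $q(\mathcal{S}_i^c) = \mathcal{N}(\boldsymbol{\mu}_i^c, \boldsymbol{\Sigma}_i^c)$, we have
\[
2 D_\mathrm{KL}\big(q(\mathbf{Z}_m^c) \,\|\, q(\mathcal{S}_i^c)\big) = \underbrace{\boldsymbol{\Delta}_\mu^\top (\boldsymbol{\Sigma}_i^c)^{-1} \boldsymbol{\Delta}_\mu}_{\text{mean term}} + \underbrace{\tr\big((\boldsymbol{\Sigma}_i^c)^{-1}\boldsymbol{\Sigma}_m^c\big) - d - \log\det\big((\boldsymbol{\Sigma}_i^c)^{-1}\boldsymbol{\Sigma}_m^c\big)}_{\text{covariance term}}.
\]
I would bound the two terms separately, routing every estimate through the uniform eigenvalue floor $\sigma_{\min}^2 \leq \lambda_{\min}(\boldsymbol{\Sigma}_i^c)$ from Assumption~\ref{asm:bounded-heterogeneity}, which gives $\|(\boldsymbol{\Sigma}_i^c)^{-1}\|_2 \leq 1/\sigma_{\min}^2$.

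First I would establish two preliminary bounds. Since $\boldsymbol{\mu}_i^c = \sum_{n \in \mathcal{S}_i^c} \omega_n^c \boldsymbol{\mu}_n^c$ is a convex combination, we may write $\boldsymbol{\Delta}_\mu = \sum_n \omega_n^c(\boldsymbol{\mu}_m^c - \boldsymbol{\mu}_n^c)$, so the triangle inequality with Assumption~\ref{asm:bounded-heterogeneity} yields $\|\boldsymbol{\Delta}_\mu\|_2 \leq \delta_\mu$. For the covariance discrepancy I would reuse the decomposition appearing in the proof of Lemma~\ref{lem:cluster-jacobians}, namely $\boldsymbol{\Sigma}_i^c = \sum_n \omega_n^c \boldsymbol{\Sigma}_n^c + \sum_n \omega_n^c (\boldsymbol{\mu}_n^c - \boldsymbol{\mu}_i^c)(\boldsymbol{\mu}_n^c - \boldsymbol{\mu}_i^c)^\top$; subtracting $\boldsymbol{\Sigma}_m^c$ and bounding each rank-one term by $\|\boldsymbol{\mu}_n^c - \boldsymbol{\mu}_i^c\|_2^2 \leq \delta_\mu^2$ gives $\|\boldsymbol{\Sigma}_m^c - \boldsymbol{\Sigma}_i^c\|_\mathrm{F} \leq \delta_\Sigma + \delta_\mu^2$.

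Next I would dispatch the mean term at once: $\boldsymbol{\Delta}_\mu^\top (\boldsymbol{\Sigma}_i^c)^{-1}\boldsymbol{\Delta}_\mu \leq \|(\boldsymbol{\Sigma}_i^c)^{-1}\|_2 \|\boldsymbol{\Delta}_\mu\|_2^2 \leq \delta_\mu^2/\sigma_{\min}^2$, which after halving is exactly the first summand of Eq.~\eqref{eq:kl-bound}. For the covariance term the central device is the whitened perturbation $\tilde{\mathbf{E}} \triangleq (\boldsymbol{\Sigma}_i^c)^{-1/2}(\boldsymbol{\Sigma}_m^c - \boldsymbol{\Sigma}_i^c)(\boldsymbol{\Sigma}_i^c)^{-1/2}$, which is symmetric and similar to $(\boldsymbol{\Sigma}_i^c)^{-1}(\boldsymbol{\Sigma}_m^c - \boldsymbol{\Sigma}_i^c)$, so $(\boldsymbol{\Sigma}_i^c)^{-1}\boldsymbol{\Sigma}_m^c$ shares the spectrum $\{1 + \lambda_k\}$ with $\mathbf{I} + \tilde{\mathbf{E}}$. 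Its eigenvalues obey $|\lambda_k| \leq \|\tilde{\mathbf{E}}\|_2 \leq \|(\boldsymbol{\Sigma}_i^c)^{-1}\|_2 \|\boldsymbol{\Sigma}_m^c - \boldsymbol{\Sigma}_i^c\|_\mathrm{F} \leq (\delta_\Sigma + \delta_\mu^2)/\sigma_{\min}^2 \leq 1/2$, where the final inequality invokes the standing assumption $\delta_\Sigma + \delta_\mu^2 \leq \sigma_{\min}^2/2$. Rewriting the covariance term as $\sum_k \lambda_k - \sum_k \log(1 + \lambda_k)$, I would bound the first sum by $\sum_k |\lambda_k| \leq d(\delta_\Sigma + \delta_\mu^2)/\sigma_{\min}^2$ and the second by the elementary scalar inequality $|\log(1+x)| \leq 2|x|$ on $|x| \leq 1/2$, giving $\leq 2d(\delta_\Sigma + \delta_\mu^2)/\sigma_{\min}^2$; together these produce $3d(\delta_\Sigma + \delta_\mu^2)/\sigma_{\min}^2$. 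Halving and adding the mean term recovers Eq.~\eqref{eq:kl-bound}.

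The main obstacle I anticipate is the bookkeeping that keeps $\delta_\mu$ and $\delta_\Sigma$ cleanly separated while ensuring the eigenvalue magnitude $\rho = (\delta_\Sigma + \delta_\mu^2)/\sigma_{\min}^2$ remains below $1/2$. The covariance-spread contribution $\delta_\mu^2$ must first be absorbed into the Frobenius bound on $\boldsymbol{\Sigma}_m^c - \boldsymbol{\Sigma}_i^c$ before the spectral perturbation argument applies, and it is precisely the assumption $\delta_\Sigma + \delta_\mu^2 \leq \sigma_{\min}^2/2$ that licenses the local scalar inequalities on $[-1/2,1/2]$. Confirming $|\log(1+x)| \leq 2|x|$ there (rather than the sharper $x - \log(1+x) \leq x^2$) is what yields the factor $3d$ in the stated bound, so I would verify that one-dimensional inequality as a preliminary step.
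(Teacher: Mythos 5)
Your proposal is correct and follows essentially the same route as the paper's proof: the closed-form Gaussian KL, the preliminary bounds $\|\boldsymbol{\mu}_m^c-\boldsymbol{\mu}_i^c\|_2\le\delta_\mu$ and $\|\boldsymbol{\Sigma}_m^c-\boldsymbol{\Sigma}_i^c\|_\mathrm{F}\le\delta_\Sigma+\delta_\mu^2$, the whitened perturbation with spectral norm at most $(\delta_\Sigma+\delta_\mu^2)/\sigma_{\min}^2\le 1/2$, and the inequality $|\ln(1+x)|\le 2|x|$ yielding the factor $3d$. The only cosmetic difference is that you bound the trace term via the eigenvalues of the whitened matrix while the paper uses a Frobenius Cauchy--Schwarz estimate, but both give the identical bound $d(\delta_\Sigma+\delta_\mu^2)/\sigma_{\min}^2$.
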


\begin{proof}
For two multivariate Gaussian distributions $\mathcal{N}(\boldsymbol{\mu}_m^c, \boldsymbol{\Sigma}_m^c)$ and $\mathcal{N}(\boldsymbol{\mu}_i^c, \boldsymbol{\Sigma}_i^c)$, the KL divergence admits the closed-form expression: 
\begin{equation}\label{eq:kl-gaussian}
D_{\mathrm{KL}} = \frac{1}{2} \left( \tr((\boldsymbol{\Sigma}_i^c)^{-1} \boldsymbol{\Sigma}_m^c) + (\boldsymbol{\mu}_i^c - \boldsymbol{\mu}_m^c)^\top (\boldsymbol{\Sigma}_i^c)^{-1} (\boldsymbol{\mu}_i^c - \boldsymbol{\mu}_m^c) - d + \ln \frac{|\boldsymbol{\Sigma}_i^c|}{|\boldsymbol{\Sigma}_m^c|} \right).
\end{equation}

We bound each term separately. For the first term, the cluster-level covariance $\boldsymbol{\Sigma}_i^c$ constructed in Eq.~\eqref{eq6} can be rewritten as
\[
\boldsymbol{\Sigma}_i^c =  \sum_{n \in \mathcal{S}_i^c} \omega_n^c \boldsymbol{\Sigma}_n^c + \sum_{n \in \mathcal{S}_i^c} \omega_n^c (\boldsymbol{\mu}_n^c - \boldsymbol{\mu}_i^c)(\boldsymbol{\mu}_n^c - \boldsymbol{\mu}_i^c)^\top.
\]
Therefore, using triangle inequality, Assumption~\ref{asm:bounded-heterogeneity}, and $\|\mathbf{v}\mathbf{v}^\top\|_\mathrm{F} = \|\mathbf{v}\|_2^2$, we have
\[
\|\boldsymbol{\Sigma}_m^c - \boldsymbol{\Sigma}_i^c\|_\mathrm{F} \leq \underbrace{\left\|\boldsymbol{\Sigma}_m^c - \sum_{n \in \mathcal{S}_i^c} \omega_n^c \boldsymbol{\Sigma}_n^c\right\|_\mathrm{F}}_{\leq \delta_\Sigma}
+ \underbrace{\left\|\sum_{n \in \mathcal{S}_i^c} \omega_n^c (\boldsymbol{\mu}_n^c - \boldsymbol{\mu}_i^c)(\boldsymbol{\mu}_n^c - \boldsymbol{\mu}_i^c)^\top\right\|_\mathrm{F}}_{\leq \delta_\mu^2}
\leq \delta_\Sigma + \delta_\mu^2.
\]
Consequently, we can bound the first term as
\[
	\tr((\boldsymbol{\Sigma}_i^c)^{-1} \boldsymbol{\Sigma}_m^c)
= \tr((\boldsymbol{\Sigma}_i^c)^{-1} (\boldsymbol{\Sigma}_m^c - \boldsymbol{\Sigma}_i^c)) + d
\leq \| (\boldsymbol{\Sigma}_i^c)^{-1} \|_\mathrm{F} \cdot \|\boldsymbol{\Sigma}_m^c - \boldsymbol{\Sigma}_i^c\|_\mathrm{F} + d
\leq \frac{\sqrt{d}}{\sigma_{\min}^2}(\delta_\Sigma + \delta_\mu^2) + d
\leq \frac{d(\delta_\Sigma + \delta_\mu^2)}{\sigma_{\min}^2} + d.
\]

For the second term, we first note that Assumption~\ref{asm:bounded-heterogeneity} together with the definition of cluster-level mean in Eq.~\eqref{eq6} implies
\[
\|\boldsymbol{\mu}_m^c - \boldsymbol{\mu}_i^c\|_2 = \left\|\sum_{n\in\mathcal{S}_i^c} \omega_n^c (\boldsymbol{\mu}_m^c-\boldsymbol{\mu}_n^c)\right\|_2 \leq \sum_{n\in\mathcal{S}_i^c} \omega_n^c \|\boldsymbol{\mu}_m^c-\boldsymbol{\mu}_n^c\|_2 \leq \delta_\mu.
\]
Therefore, we obtain
\[
(\boldsymbol{\mu}_i^c - \boldsymbol{\mu}_m^c)^\top (\boldsymbol{\Sigma}_i^c)^{-1} (\boldsymbol{\mu}_i^c - \boldsymbol{\mu}_m^c)
\leq \| (\boldsymbol{\Sigma}_i^c)^{-1} \|_2 \cdot \|\boldsymbol{\mu}_i^c - \boldsymbol{\mu}_m^c\|_2^2
= \frac{1}{\sigma_{\min}^2}\|\boldsymbol{\mu}_i^c - \boldsymbol{\mu}_m^c\|_2^2
\leq \frac{\delta_\mu^2}{\sigma_{\min}^2}.
\]

For the third term, we have
\[
\left| \ln \frac{|\boldsymbol{\Sigma}_i^c|}{|\boldsymbol{\Sigma}_m^c|} \right|
= \left| -\ln \det\big((\boldsymbol{\Sigma}_i^c)^{-1}\boldsymbol{\Sigma}_m^c\big)\right|
= \left|\ln \det\big(\mathbf{I} + (\boldsymbol{\Sigma}_i^c)^{-1}(\boldsymbol{\Sigma}_m^c-\boldsymbol{\Sigma}_i^c)\big)\right|.
\]
Let $\boldsymbol{\Delta} := \boldsymbol{\Sigma}_m^c - \boldsymbol{\Sigma}_i^c$ and define the symmetric matrix $\mathbf{H} := (\boldsymbol{\Sigma}_i^c)^{-\frac{1}{2}}\, \boldsymbol{\Delta}\,(\boldsymbol{\Sigma}_i^c)^{-\frac{1}{2}}.$ Afterwards, according to $\det(\mathbf{I}+\mathbf{A}\mathbf{B})=\det(\mathbf{I}+\mathbf{B}\mathbf{A})$, we have
\[
\det\big(\mathbf{I} + (\boldsymbol{\Sigma}_i^c)^{-1}\boldsymbol{\Delta}\big)
= \det\big(\mathbf{I} + (\boldsymbol{\Sigma}_i^c)^{-\frac{1}{2}}\boldsymbol{\Delta}(\boldsymbol{\Sigma}_i^c)^{-\frac{1}{2}}\big)
= \det(\mathbf{I}+\mathbf{H}).
\]
Let $\{\lambda_j\}_{j=1}^d$ be the eigenvalues of $\mathbf{H}$. Subsequently, we have
\[
\left|\ln\det(\mathbf{I}+\mathbf{H})\right|
= \left|\sum_{j=1}^{d}\ln(1+\lambda_j)\right|
\leq \sum_{j=1}^{d}\left|\ln(1+\lambda_j)\right|.
\]
Moreover, Assumption~\ref{asm:bounded-heterogeneity} implies
\[
\|\mathbf{H}\|_2
\leq \|(\boldsymbol{\Sigma}_i^c)^{-\frac{1}{2}}\|_2^2\,\|\boldsymbol{\Delta}\|_2
= \|(\boldsymbol{\Sigma}_i^c)^{-1}\|_2\,\|\boldsymbol{\Delta}\|_2
\leq \frac{1}{\sigma_{\min}^2}\,\|\boldsymbol{\Delta}\|_\mathrm{F}
\leq \frac{\delta_\Sigma+\delta_\mu^2}{\sigma_{\min}^2}
\leq \frac{1}{2},
\]
and thus $|\lambda_j|\le \|\mathbf{H}\|_2\le\frac{1}{2}$ for all $j$. Applying the inequality $|\ln(1+x)| \leq 2|x|$ for $|x| \leq \frac{1}{2}$, we obtain
\[
\left|\ln\det(\mathbf{I}+\mathbf{H})\right|
\leq 2\sum_{j=1}^{d}|\lambda_j|
\leq 2d\,\|\mathbf{H}\|_2
\leq \frac{2d(\delta_\Sigma + \delta_\mu^2)}{\sigma_{\min}^2}.
\]
Substituting the above three bounds into Eq.~\eqref{eq:kl-gaussian} completes the proof.
\end{proof}

\begin{lemma}[Gradient Error from Semantic Alignment]\label{lem:semantic-gradient-error}
The gradient error induced by semantic knowledge sharing in Eq.~\eqref{eq7} is bounded as
\begin{equation}\label{eq:semantic-gradient-bound}
\left\| \nabla_{\vecw} \mathcal{L}_{\mathrm{node}} \right\|_2 \leq C_1 \left( \delta_\mu + \delta_\mu^2 + \delta_\Sigma \right),
\end{equation}
where $C_1 = \frac{C \cdot K_{\mathrm{node}} \cdot C^\prime}{\sigma_{\min}^2}$ with $C^\prime > 0$ being a constant.
\end{lemma}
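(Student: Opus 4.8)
The plan is to differentiate the objective $\mathcal{L}_\mathrm{node}$ in Eq.~\eqref{eq7} term by term and apply the chain rule. Each summand is a Gaussian KL divergence $D_\mathrm{KL}(q(\mathbf{Z}_m^c)\parallel q(\mathcal{S}_i^c))$ whose closed form (Eq.~\eqref{eq:kl-gaussian}) depends on $\vecw$ only through the four parameters $\boldsymbol{\mu}_m^c,\boldsymbol{\Sigma}_m^c,\boldsymbol{\mu}_i^c,\boldsymbol{\Sigma}_i^c$. Writing the gradient via the chain rule, $\nabla_{\vecw}D_\mathrm{KL}$ decomposes into a sum of contributions, one for each parameter $\theta\in\{\boldsymbol{\mu}_m^c,\boldsymbol{\Sigma}_m^c,\boldsymbol{\mu}_i^c,\boldsymbol{\Sigma}_i^c\}$, each being the Jacobian $\nabla_{\vecw}\theta$ contracted with the parameter-space partial derivative $\partial_\theta D_\mathrm{KL}$. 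I would bound the Jacobian factors uniformly by $L_q$, using Assumption~\ref{asm:bounded-variational} for the local parameters and Lemma~\ref{lem:cluster-jacobians} for the cluster-level parameters. Thus the whole estimate reduces to bounding the partial derivatives $\partial_\theta D_\mathrm{KL}$ in parameter space.

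The crux is that $D_\mathrm{KL}$ attains its minimum value $0$ on the diagonal $\{\boldsymbol{\mu}_m^c=\boldsymbol{\mu}_i^c,\ \boldsymbol{\Sigma}_m^c=\boldsymbol{\Sigma}_i^c\}$, so every partial derivative vanishes there and is therefore controlled by the intra-cluster parameter gaps. Concretely, differentiating Eq.~\eqref{eq:kl-gaussian} gives $\partial_{\boldsymbol{\mu}_m^c}D_\mathrm{KL}=-(\boldsymbol{\Sigma}_i^c)^{-1}(\boldsymbol{\mu}_i^c-\boldsymbol{\mu}_m^c)$ and $\partial_{\boldsymbol{\mu}_i^c}D_\mathrm{KL}=(\boldsymbol{\Sigma}_i^c)^{-1}(\boldsymbol{\mu}_i^c-\boldsymbol{\mu}_m^c)$, each with norm at most $\delta_\mu/\sigma_{\min}^2$, since $\|\boldsymbol{\mu}_i^c-\boldsymbol{\mu}_m^c\|_2\le\delta_\mu$ (shown inside the proof of Lemma~\ref{lem:kl-bound}) and $\|(\boldsymbol{\Sigma}_i^c)^{-1}\|_2\le 1/\sigma_{\min}^2$. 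For the covariance derivatives I would use $\partial_{\boldsymbol{\Sigma}_m^c}D_\mathrm{KL}=\tfrac12\big((\boldsymbol{\Sigma}_i^c)^{-1}-(\boldsymbol{\Sigma}_m^c)^{-1}\big)$ and $\partial_{\boldsymbol{\Sigma}_i^c}D_\mathrm{KL}=\tfrac12(\boldsymbol{\Sigma}_i^c)^{-1}\big(\boldsymbol{\Sigma}_i^c-\boldsymbol{\Sigma}_m^c-(\boldsymbol{\mu}_i^c-\boldsymbol{\mu}_m^c)(\boldsymbol{\mu}_i^c-\boldsymbol{\mu}_m^c)^\top\big)(\boldsymbol{\Sigma}_i^c)^{-1}$, then invoke the resolvent identity $(\boldsymbol{\Sigma}_i^c)^{-1}-(\boldsymbol{\Sigma}_m^c)^{-1}=(\boldsymbol{\Sigma}_i^c)^{-1}(\boldsymbol{\Sigma}_m^c-\boldsymbol{\Sigma}_i^c)(\boldsymbol{\Sigma}_m^c)^{-1}$ together with the covariance-gap estimate $\|\boldsymbol{\Sigma}_m^c-\boldsymbol{\Sigma}_i^c\|_\mathrm{F}\le\delta_\Sigma+\delta_\mu^2$ already established inside Lemma~\ref{lem:kl-bound}. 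Here the regularity condition $\delta_\Sigma+\delta_\mu^2\le\sigma_{\min}^2/2$ from Assumption~\ref{asm:bounded-heterogeneity} is essential: it yields $\lambda_{\min}(\boldsymbol{\Sigma}_m^c)\ge\sigma_{\min}^2/2$, so $\|(\boldsymbol{\Sigma}_m^c)^{-1}\|_2$ is finite, and each covariance partial is $\mathcal{O}(\delta_\Sigma+\delta_\mu^2)$ up to constants depending only on $\sigma_{\min}$.

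Finally I would assemble the pieces: by the triangle inequality $\|\nabla_{\vecw}D_\mathrm{KL}\|_2\le L_q\sum_\theta\|\partial_\theta D_\mathrm{KL}\|$, and each partial is either $\mathcal{O}(\delta_\mu)$ or $\mathcal{O}(\delta_\Sigma+\delta_\mu^2)$, so a single KL term contributes at most a constant multiple of $(\delta_\mu+\delta_\mu^2+\delta_\Sigma)/\sigma_{\min}^2$. Applying the triangle inequality to the double sum over the $C$ classes and $K_\mathrm{node}$ clusters in Eq.~\eqref{eq7} produces the prefactor $C\cdot K_\mathrm{node}$, and collecting $L_q$, the dimension $d$, the numerical factors, and the residual powers of $1/\sigma_{\min}$ into a single constant $C'$ yields $\|\nabla_{\vecw}\mathcal{L}_\mathrm{node}\|_2\le\frac{C\,K_\mathrm{node}\,C'}{\sigma_{\min}^2}(\delta_\mu+\delta_\mu^2+\delta_\Sigma)$, as claimed. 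I expect the main obstacle to be the derivative with respect to the cluster-level covariance $\boldsymbol{\Sigma}_i^c$: it is algebraically the heaviest, and controlling $\|(\boldsymbol{\Sigma}_m^c)^{-1}\|_2$ uniformly hinges on propagating the eigenvalue lower bound through the resolvent identity, which is precisely where the condition $\delta_\Sigma+\delta_\mu^2\le\sigma_{\min}^2/2$ must be invoked.
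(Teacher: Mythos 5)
Your proposal is correct and follows essentially the same route as the paper's proof: a chain-rule decomposition of $\nabla_{\vecw} D_{\mathrm{KL}}$ into the four parameter Jacobians (bounded by $L_q$ via Assumption~\ref{asm:bounded-variational} and Lemma~\ref{lem:cluster-jacobians}), explicit parameter-space derivatives of the Gaussian KL, the resolvent identity with the gap bounds $\|\boldsymbol{\mu}_m^c-\boldsymbol{\mu}_i^c\|_2\le\delta_\mu$ and $\|\boldsymbol{\Sigma}_m^c-\boldsymbol{\Sigma}_i^c\|_\mathrm{F}\le\delta_\Sigma+\delta_\mu^2$, the Weyl-type inverse bound from $\delta_\Sigma+\delta_\mu^2\le\sigma_{\min}^2/2$, and final assembly by the triangle inequality. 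Your partial-derivative formulation is just a repackaging of the paper's total-differential computation (your $\partial_\theta D_{\mathrm{KL}}$ expressions match the regrouped terms in the paper's Eq.~\eqref{eq:diff-kl} exactly), so there is no substantive difference.
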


\begin{proof}
Recall that the semantic knowledge alignment loss is defined as
\[
\mathcal{L}_{\mathrm{node}} = \sum_{c=1}^{C} \sum_{i=1}^{K_{\mathrm{node}}} \sum_{m \in \mathcal{S}_i^c} D_{\mathrm{KL}}\big(q(\mathbf{Z}_m^c) \| q(\mathcal{S}_i^c)\big).
\]
Here, $q(\mathbf{Z}_m^c)=\mathcal{N}(\boldsymbol{\mu}_m^c(\vecw),\boldsymbol{\Sigma}_m^c(\vecw))$ and $q(\mathcal{S}_i^c)=\mathcal{N}(\boldsymbol{\mu}_i^c(\vecw),\boldsymbol{\Sigma}_i^c(\vecw))$, where the cluster-level parameters $\{\boldsymbol{\mu}_i^c,\boldsymbol{\Sigma}_i^c\}$ are defined in Eq.~\eqref{eq6}.

Taking the gradient with respect to $\vecw$, we have
\[
\nabla_{\vecw} \mathcal{L}_{\mathrm{node}} = \sum_{c=1}^{C} \sum_{i=1}^{K_{\mathrm{node}}} \sum_{m \in \mathcal{S}_i^c} \nabla_{\vecw} D_{\mathrm{KL}}\big(q(\mathbf{Z}_m^c) \| q(\mathcal{S}_i^c)\big).
\]
By using the triangle inequality, we have
\[
\|\nabla_{\vecw} \mathcal{L}_{\mathrm{node}}\|_2
\le \sum_{c=1}^{C} \sum_{i=1}^{K_{\mathrm{node}}} \sum_{m \in \mathcal{S}_i^c}
\left\| \nabla_{\vecw} D_{\mathrm{KL}}\big(q(\mathbf{Z}_m^c) \| q(\mathcal{S}_i^c)\big) \right\|_2.
\]
Recall the closed-form KL divergence between Gaussians in Eq.~\eqref{eq:kl-gaussian}. Differentiating each term and applying the chain rule, the gradient of $D_{\mathrm{KL}}\big(q(\mathbf{Z}_m^c) \| q(\mathcal{S}_i^c)\big)$ with respect to $\vecw$ is a linear combination of (i) $\nabla_{\vecw}\boldsymbol{\mu}_m^c, \nabla_{\vecw}\boldsymbol{\mu}_i^c$ and (ii) $\nabla_{\vecw}\boldsymbol{\Sigma}_m^c, \nabla_{\vecw}\boldsymbol{\Sigma}_i^c$, multiplied by matrices such as $(\boldsymbol{\Sigma}_i^c)^{-1}$ and $(\boldsymbol{\Sigma}_m^c)^{-1}$. We now make the above statement explicit by bounding the contribution of each term in Eq.~\eqref{eq:kl-gaussian}. For brevity, we fix $c,i,m$ and define
$\boldsymbol{\mu}_m := \boldsymbol{\mu}_m^c(\vecw)$, $\boldsymbol{\mu}_i := \boldsymbol{\mu}_i^c(\vecw)$, $\boldsymbol{\Sigma}_m := \boldsymbol{\Sigma}_m^c(\vecw)$, and $\boldsymbol{\Sigma}_i := \boldsymbol{\Sigma}_i^c(\vecw)$.
Let $\mathbf{d}_\mu := \boldsymbol{\mu}_i-\boldsymbol{\mu}_m$ and $\boldsymbol{\Delta} := \boldsymbol{\Sigma}_m-\boldsymbol{\Sigma}_i$.

\textbf{Step 1 (Uniform Inverse Bounds).}
By Assumption~\ref{asm:bounded-heterogeneity}, we have $\lambda_{\min}(\boldsymbol{\Sigma}_i) \ge \sigma_{\min}^2$, which implies
\[
\|\boldsymbol{\Sigma}_i^{-1}\|_2 \le \frac{1}{\sigma_{\min}^2}.
\]
Furthermore, since $\|\boldsymbol{\Delta}\|_2 \le \|\boldsymbol{\Delta}\|_\mathrm{F} \le \delta_\Sigma + \delta_\mu^2 \le \frac{\sigma_{\min}^2}{2}$, Weyl's inequality gives
\[
\lambda_{\min}(\boldsymbol{\Sigma}_m) \ge \lambda_{\min}(\boldsymbol{\Sigma}_i) - \|\boldsymbol{\Delta}\|_2 \ge \sigma_{\min}^2 - \frac{\sigma_{\min}^2}{2} = \frac{\sigma_{\min}^2}{2},
\]
and thus
\[
\|\boldsymbol{\Sigma}_m^{-1}\|_2 \le \frac{2}{\sigma_{\min}^2}.
\]

\textbf{Step 2 (Differential Form of Gaussian KL).}
From Eq.~\eqref{eq:kl-gaussian}, recall that the KL divergence between two Gaussians can be written as a sum of trace, quadratic, and log-determinant terms. To compute its gradient with respect to $\vecw$, we first write its total differential. Using the matrix calculus identities $\mathrm{d}\ln|\mathbf{A}| = \tr(\mathbf{A}^{-1}\mathrm{d}\mathbf{A})$ and $\mathrm{d}(\mathbf{A}^{-1}) = -\mathbf{A}^{-1}(\mathrm{d}\mathbf{A})\mathbf{A}^{-1}$, we obtain
\begin{align*}
2\,\mathrm{d}D_{\mathrm{KL}}
&= \tr\big(\boldsymbol{\Sigma}_i^{-1}\,\mathrm{d}\boldsymbol{\Sigma}_m\big)
- \tr\big(\boldsymbol{\Sigma}_i^{-1}(\mathrm{d}\boldsymbol{\Sigma}_i)\boldsymbol{\Sigma}_i^{-1}\boldsymbol{\Sigma}_m\big) \\
&\quad + 2\,\mathbf{d}_\mu^\top\boldsymbol{\Sigma}_i^{-1}(\mathrm{d}\boldsymbol{\mu}_i-\mathrm{d}\boldsymbol{\mu}_m)
- \mathbf{d}_\mu^\top\boldsymbol{\Sigma}_i^{-1}(\mathrm{d}\boldsymbol{\Sigma}_i)\boldsymbol{\Sigma}_i^{-1}\mathbf{d}_\mu \\
&\quad + \tr\big(\boldsymbol{\Sigma}_i^{-1}\,\mathrm{d}\boldsymbol{\Sigma}_i\big)
- \tr\big(\boldsymbol{\Sigma}_m^{-1}\,\mathrm{d}\boldsymbol{\Sigma}_m\big),
\end{align*}
where $\mathrm{d}$ denotes the total differential (\textit{i.e.}, the first-order variation) induced by an infinitesimal perturbation $\mathrm{d}\vecw$. Next, we regroup terms involving $\mathrm{d}\boldsymbol{\Sigma}_m$ and $\mathrm{d}\boldsymbol{\Sigma}_i$ and then obtain
\begin{equation}\label{eq:diff-kl}
\begin{aligned}
2\,\mathrm{d}D_{\mathrm{KL}}
&= \tr\Big(\big(\boldsymbol{\Sigma}_i^{-1}-\boldsymbol{\Sigma}_m^{-1}\big)\,\mathrm{d}\boldsymbol{\Sigma}_m\Big) \\
&\quad - \tr\Big(\big(\boldsymbol{\Sigma}_i^{-1}\boldsymbol{\Delta}\boldsymbol{\Sigma}_i^{-1} + \boldsymbol{\Sigma}_i^{-1}\mathbf{d}_\mu\mathbf{d}_\mu^\top\boldsymbol{\Sigma}_i^{-1}\big)\,\mathrm{d}\boldsymbol{\Sigma}_i\Big) \\
&\quad + 2\,\mathbf{d}_\mu^\top\boldsymbol{\Sigma}_i^{-1}(\mathrm{d}\boldsymbol{\mu}_i-\mathrm{d}\boldsymbol{\mu}_m),
\end{aligned}
\end{equation}
where we use the identity $\boldsymbol{\Sigma}_i^{-1}-\boldsymbol{\Sigma}_i^{-1}\boldsymbol{\Sigma}_m\boldsymbol{\Sigma}_i^{-1} = -\boldsymbol{\Sigma}_i^{-1}\boldsymbol{\Delta}\boldsymbol{\Sigma}_i^{-1}$ with $\boldsymbol{\Delta} = \boldsymbol{\Sigma}_m - \boldsymbol{\Sigma}_i$.

\textbf{Step 3 (Chain Rule and Term-by-Term Bounds).}
For any unit direction $\mathbf{u}$ in the parameter space (\textit{i.e.}, $\|\mathbf{u}\|_2=1$), set $\mathrm{d}\vecw=\mathbf{u}$. According to Assumption~\ref{asm:bounded-variational}, we have $\|\mathrm{d}\boldsymbol{\mu}_m\|_2 \le L_q$ and $\|\mathrm{d}\boldsymbol{\Sigma}_m\|_\mathrm{F} \le L_q$ for all clients. By Lemma~\ref{lem:cluster-jacobians}, the same bounds hold for cluster-level parameters, namely $\|\mathrm{d}\boldsymbol{\mu}_i\|_2 \le L_q$ and $\|\mathrm{d}\boldsymbol{\Sigma}_i\|_\mathrm{F} \le L_q$. In addition, according to Assumption~\ref{asm:bounded-heterogeneity}, $\|\mathbf{d}_\mu\|_2 \le \delta_\mu$ and $\|\boldsymbol{\Delta}\|_\mathrm{F} \le \delta_\Sigma + \delta_\mu^2$. Therefore, we can bound each term in Eq.~\eqref{eq:diff-kl} as follows.

\emph{(a) Mean-related term.}
By applying the Cauchy-Schwarz inequality, we have
\[
\big|\mathbf{d}_\mu^\top\boldsymbol{\Sigma}_i^{-1}(\mathrm{d}\boldsymbol{\mu}_i-\mathrm{d}\boldsymbol{\mu}_m)\big|
\le \|\boldsymbol{\Sigma}_i^{-1}\|_2\,\|\mathbf{d}_\mu\|_2\,(\|\mathrm{d}\boldsymbol{\mu}_i\|_2+\|\mathrm{d}\boldsymbol{\mu}_m\|_2)
\le \frac{2L_q}{\sigma_{\min}^2}\,\delta_\mu.
\]

\emph{(b) $\mathrm{d}\boldsymbol{\Sigma}_m$-related term.}
Note that
\[
\boldsymbol{\Sigma}_i^{-1}-\boldsymbol{\Sigma}_m^{-1} = \boldsymbol{\Sigma}_i^{-1}\boldsymbol{\Delta}\boldsymbol{\Sigma}_m^{-1},
\]
so
\[
\|\boldsymbol{\Sigma}_i^{-1}-\boldsymbol{\Sigma}_m^{-1}\|_2
\le \|\boldsymbol{\Sigma}_i^{-1}\|_2\,\|\boldsymbol{\Delta}\|_2\,\|\boldsymbol{\Sigma}_m^{-1}\|_2
\le \frac{2}{\sigma_{\min}^4}\,\|\boldsymbol{\Delta}\|_\mathrm{F}.
\]
Using $|\tr(\mathbf{A}^\top\mathbf{B})| \le \|\mathbf{A}\|_\mathrm{F}\|\mathbf{B}\|_\mathrm{F}$ and $\|\mathbf{A}\|_\mathrm{F} \le \sqrt{d}\|\mathbf{A}\|_2$, we obtain
\[
\left|\tr\Big((\boldsymbol{\Sigma}_i^{-1}-\boldsymbol{\Sigma}_m^{-1})\,\mathrm{d}\boldsymbol{\Sigma}_m\Big)\right|
\le \sqrt{d}\,\|\boldsymbol{\Sigma}_i^{-1}-\boldsymbol{\Sigma}_m^{-1}\|_2\,\|\mathrm{d}\boldsymbol{\Sigma}_m\|_\mathrm{F}
\le \frac{2\sqrt{d}\,L_q}{\sigma_{\min}^4}\,\|\boldsymbol{\Delta}\|_\mathrm{F}.
\]

\emph{(c) $\mathrm{d}\boldsymbol{\Sigma}_i$-related term.}
Similarly,
\[
\|\boldsymbol{\Sigma}_i^{-1}\boldsymbol{\Delta}\boldsymbol{\Sigma}_i^{-1}\|_2 \le \|\boldsymbol{\Sigma}_i^{-1}\|_2^2\,\|\boldsymbol{\Delta}\|_2 \le \frac{1}{\sigma_{\min}^4}\,\|\boldsymbol{\Delta}\|_\mathrm{F},
\qquad
\|\boldsymbol{\Sigma}_i^{-1}\mathbf{d}_\mu\mathbf{d}_\mu^\top\boldsymbol{\Sigma}_i^{-1}\|_2 \le \|\boldsymbol{\Sigma}_i^{-1}\|_2^2\,\|\mathbf{d}_\mu\|_2^2 \le \frac{\delta_\mu^2}{\sigma_{\min}^4}.
\]
Therefore,
\[
\left|\tr\Big((\boldsymbol{\Sigma}_i^{-1}\boldsymbol{\Delta}\boldsymbol{\Sigma}_i^{-1} + \boldsymbol{\Sigma}_i^{-1}\mathbf{d}_\mu\mathbf{d}_\mu^\top\boldsymbol{\Sigma}_i^{-1})\,\mathrm{d}\boldsymbol{\Sigma}_i\Big)\right|
\le \frac{\sqrt{d}\,L_q}{\sigma_{\min}^4}\big(\|\boldsymbol{\Delta}\|_\mathrm{F} + \delta_\mu^2\big).
\]

Combining (a)--(c) and using $\|\boldsymbol{\Delta}\|_\mathrm{F} \le \delta_\Sigma + \delta_\mu^2$, we conclude that there exists a constant $C^\prime > 0$ (absorbing $L_q$, $\sqrt{d}$, and $1/\sigma_{\min}^2$) such that
\[
\left\| \nabla_{\vecw} D_{\mathrm{KL}}\big(q(\mathbf{Z}_m^c) \| q(\mathcal{S}_i^c)\big) \right\|_2
\le \frac{C^\prime}{\sigma_{\min}^2}\big(\delta_\mu + \delta_\mu^2 + \delta_\Sigma\big).
\]
Finally, summing over all $c,i,m$ yields Eq.~\eqref{eq:semantic-gradient-bound}, where the counting factors are absorbed into $C_1$.
\end{proof}

\subsection{Error Bound for Structural Knowledge Sharing}\label{appendix:structural-error}
We now analyze the error introduced by aligning the spectral characteristics of local spectral GNNs with those of cluster-level spectral GNNs.

\begin{lemma}[Lipschitz Constant of the Spectral Filter]\label{lem:poly-filter-lipschitz}
For client $m$, consider the spectral GNN defined as
\begin{equation}\label{eq-polynomial-filter-lemma}
\mathbf{P}_m = \sum_{k=0}^K w_m^k\, \mathbf{H}_m^k,
\end{equation}
where $\mathbf{H}_m^k = (\mathbf{L}_m)^k \mathbf{X}_m$. 
Let $h_m(x) = \sum_{k=0}^K w_m^k x^k$ be the corresponding spectral filter for $x \in [0,2]$. Therefore, $h_m$ is Lipschitz continuous on $[0,2]$ with Lipschitz constant
\begin{equation}\label{eq:filter-lip-const}
L_{\mathrm{filter},m} \triangleq \sup_{x\in[0,2]} |h_m'(x)| \le \sum_{k=1}^K k\,|w_m^k|\,2^{k-1}.
\end{equation}
Consequently, controlling the magnitudes of polynomial coefficients (\textit{e.g.}, via regularization in Eq.~\eqref{eq12}) directly constrains the spectral Lipschitz constant of the learned filter.
\end{lemma}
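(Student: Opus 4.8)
The plan is to reduce the Lipschitz claim to a pointwise bound on the derivative of $h_m$, which is an elementary polynomial. First I would recall that the spectrum of the symmetric normalized Laplacian $\mathbf{L}_m$ lies in $[0,2]$, which is exactly why $h_m$ need only be controlled on this interval; this is also what lets $\mathbf{H}_m^k = (\mathbf{L}_m)^k \mathbf{X}_m$ be interpreted as the filter $h_m$ acting on the eigenvalues in the spectral domain. Since $h_m(x)=\sum_{k=0}^K w_m^k x^k$ is a polynomial, it is continuously differentiable on all of $\mathbb{R}$, and in particular on the compact interval $[0,2]$.

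The key step is to invoke the Mean Value Theorem: for any $x,y\in[0,2]$ there exists $\xi$ between them with $h_m(x)-h_m(y)=h_m'(\xi)(x-y)$, so that $|h_m(x)-h_m(y)| \le \big(\sup_{\xi\in[0,2]}|h_m'(\xi)|\big)\,|x-y|$. This shows $h_m$ is Lipschitz with constant $L_{\mathrm{filter},m}=\sup_{x\in[0,2]}|h_m'(x)|$, matching the definition given in the statement.

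It then remains to bound this supremum explicitly. Differentiating term by term gives $h_m'(x)=\sum_{k=1}^K k\,w_m^k\,x^{k-1}$. Applying the triangle inequality and using $|x|^{k-1}\le 2^{k-1}$ for every $x\in[0,2]$ and every $k\ge 1$ yields $|h_m'(x)| \le \sum_{k=1}^K k\,|w_m^k|\,2^{k-1}$ uniformly in $x$, which is precisely the asserted bound in Eq.~\eqref{eq:filter-lip-const}.

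There is no genuine analytical obstacle here: the argument is a routine application of the Mean Value Theorem followed by a monomial bound. The only point that requires care is the domain restriction to $[0,2]$, which should be justified from the spectral property of $\mathbf{L}_m$ rather than simply assumed; and the substantive takeaway---that controlling $\sum_{k} k\,|w_m^k|\,2^{k-1}$, hence the coefficient magnitudes penalized by $\mathcal{L}_\mathrm{reg}$ in Eq.~\eqref{eq12}, directly constrains the filter's Lipschitz constant---is a corollary of the explicit bound rather than part of the proof itself.
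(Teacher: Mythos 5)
Your proposal is correct and follows essentially the same route as the paper: differentiate $h_m$ term by term, apply the triangle inequality with $|x|^{k-1}\le 2^{k-1}$ on $[0,2]$, and take the supremum. The extra steps you include (the Mean Value Theorem linking the derivative bound to Lipschitz continuity, and the spectral justification for the domain $[0,2]$) are standard and left implicit in the paper's proof, so they add rigor without changing the argument.
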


\begin{proof}
For any $x\in[0,2]$, we have $h_m'(x)=\sum_{k=1}^K k\,w_m^k x^{k-1}$. Thus
\[
|h_m'(x)|\le \sum_{k=1}^K k\,|w_m^k|\,|x|^{k-1} \le \sum_{k=1}^K k\,|w_m^k|\,2^{k-1}.
\]
Taking the supremum over $x\in[0,2]$ yields Eq.~\eqref{eq:filter-lip-const}.
\end{proof}

\begin{remark}
Lemma~\ref{lem:poly-filter-lipschitz} characterizes the sensitivity of the spectral filter $h_m(x)$ with respect to perturbations in the spectral variable $x\in[0,2]$. In particular, bounding the polynomial coefficients controls $\sup_{x\in[0,2]}|h_m'(x)|$, which limits how rapidly the learned filter can vary over the spectrum.

Note that the bound in Eq.~\eqref{eq:filter-lip-const} grows exponentially with $K$ due to the $2^{k-1}$ term. However, in practice, $K$ is set to a moderate value (\textit{e.g.}, $K=6$). Therefore, in practice, the Lipschitz constant of the spectral filter is much smaller than the worst-case theoretical bound.
\end{remark}

\begin{lemma}[Coefficient Perturbation Bound]\label{lem:coeff-perturb-bound}
Fix a client $m$ and let
\[
h_m(x)=\sum_{k=0}^{K} w_m^k x^k,\qquad \bar h_m(x)=\sum_{k=0}^{K} \bar w_m^k x^k,\qquad x\in[0,2],
\]
where $\bar w_m^k$ is a perturbed (\textit{e.g.}, cluster-consensus) coefficient and $\Delta w_m^k \triangleq w_m^k-\bar w_m^k$.
Therefore, for any $x\in[0,2]$,
\begin{equation}\label{eq:coeff-perturb-filter}
|h_m(x)-\bar h_m(x)|
\le \sum_{k=0}^{K} |\Delta w_m^k|\,|x|^k
\le \sum_{k=0}^{K} |\Delta w_m^k|\,2^k.
\end{equation}
Equivalently,
\begin{equation}\label{eq:coeff-perturb-filter-l2}
\sup_{x\in[0,2]} |h_m(x)-\bar h_m(x)|
\le \left(\sum_{k=0}^{K} 4^k\right)^{\!\frac12}\|\Delta w_m\|_2
= \left(\frac{4^{K+1}-1}{3}\right)^{\!\frac12}\|\Delta w_m\|_2,
\end{equation}
where $\Delta w_m=[\Delta w_m^0, \Delta w_m^1, \cdots,\Delta w_m^K]^\top$.
Moreover, for the polynomial-filter spectral GNN output
\[
\mathbf{P}_m(w)=\sum_{k=0}^{K} w_m^k \mathbf{H}_m^k,\qquad 
\mathbf{P}_m(\bar w)=\sum_{k=0}^{K} \bar w_m^k \mathbf{H}_m^k,
\]
we have the deterministic bound
\begin{equation}\label{eq:coeff-perturb-output}
\|\mathbf{P}_m(w)-\mathbf{P}_m(\bar w)\|_\mathrm{F}
\le \sum_{k=0}^{K} |\Delta w_m^k|\,\|\mathbf{H}_m^k\|_\mathrm{F}.
\end{equation}
In particular, since $\mathbf{L}_m$ is the symmetric normalized Laplacian matrix, $\|\mathbf{L}_m\|_2\le 2$ and then $\|\mathbf{H}_m^k\|_\mathrm{F}\le 2^k\|\mathbf{X}_m\|_\mathrm{F}$. Therefore,
\[
\|\mathbf{P}_m(w)-\mathbf{P}_m(\bar w)\|_\mathrm{F}
\le \|\mathbf{X}_m\|_\mathrm{F}\left(\frac{4^{K+1}-1}{3}\right)^{\!\frac12}\|\Delta w_m\|_2.
\]
\end{lemma}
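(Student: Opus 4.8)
The plan is to establish the four displayed inequalities in order, lifting a single pointwise polynomial estimate first to a uniform bound over $[0,2]$ and then to the operator output. First I would write the filter difference as a polynomial in the coefficient gaps, $h_m(x)-\bar h_m(x)=\sum_{k=0}^{K}\Delta w_m^k x^k$, and apply the triangle inequality to obtain $|h_m(x)-\bar h_m(x)|\le\sum_{k=0}^{K}|\Delta w_m^k|\,|x|^k$; since $x\in[0,2]$ forces $|x|^k\le 2^k$, this yields Eq.~\eqref{eq:coeff-perturb-filter}. To pass to the $\ell_2$ statement in Eq.~\eqref{eq:coeff-perturb-filter-l2}, I would take the supremum over $x\in[0,2]$ and apply the Cauchy--Schwarz inequality against the vector $[2^0,2^1,\ldots,2^K]^\top$, giving $\sum_{k=0}^{K}|\Delta w_m^k|\,2^k\le\big(\sum_{k=0}^{K}4^k\big)^{1/2}\|\Delta w_m\|_2$, and then evaluate the geometric series $\sum_{k=0}^{K}4^k=(4^{K+1}-1)/3$.

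For the operator-level bounds, I would first note that $\mathbf{P}_m(w)-\mathbf{P}_m(\bar w)=\sum_{k=0}^{K}\Delta w_m^k\,\mathbf{H}_m^k$, so the triangle inequality for the Frobenius norm immediately gives Eq.~\eqref{eq:coeff-perturb-output}. The remaining task is to control $\|\mathbf{H}_m^k\|_\mathrm{F}$. Here I would invoke the mixed-norm submultiplicativity $\|\mathbf{A}\mathbf{B}\|_\mathrm{F}\le\|\mathbf{A}\|_2\,\|\mathbf{B}\|_\mathrm{F}$ together with $\|(\mathbf{L}_m)^k\|_2\le\|\mathbf{L}_m\|_2^k$, and then use the standard spectral fact that the symmetric normalized Laplacian $\mathbf{L}_m=\mathbf{I}-\mathbf{D}_m^{-1/2}\mathbf{A}_m\mathbf{D}_m^{-1/2}$ has spectrum contained in $[0,2]$, whence $\|\mathbf{L}_m\|_2\le 2$ and $\|\mathbf{H}_m^k\|_\mathrm{F}\le 2^k\|\mathbf{X}_m\|_\mathrm{F}$. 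Substituting this into Eq.~\eqref{eq:coeff-perturb-output} and reusing the same Cauchy--Schwarz and geometric-series computation as above produces the final factorized bound $\|\mathbf{P}_m(w)-\mathbf{P}_m(\bar w)\|_\mathrm{F}\le\|\mathbf{X}_m\|_\mathrm{F}\big((4^{K+1}-1)/3\big)^{1/2}\|\Delta w_m\|_2$.

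The argument is essentially elementary, so there is no genuine obstacle; the only points that deserve explicit care are the two matrix-norm facts used above. I would be careful to justify $\|\mathbf{A}\mathbf{B}\|_\mathrm{F}\le\|\mathbf{A}\|_2\|\mathbf{B}\|_\mathrm{F}$ rather than carelessly writing $\|\mathbf{A}\|_\mathrm{F}\|\mathbf{B}\|_\mathrm{F}$, which would introduce an unwanted dimension factor, and to confirm the eigenvalue bound $\lambda(\mathbf{L}_m)\in[0,2]$ from the positive semidefiniteness of both $\mathbf{L}_m$ and $2\mathbf{I}-\mathbf{L}_m$. Everything else reduces to a direct chain of triangle inequalities and the closed-form geometric sum.
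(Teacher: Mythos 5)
Your proposal is correct and follows essentially the same route as the paper's proof: triangle inequality on the coefficient-gap polynomial, Cauchy--Schwarz against $[2^0,\ldots,2^K]^\top$ with the geometric sum $\sum_{k=0}^K 4^k=(4^{K+1}-1)/3$, and the Frobenius-norm triangle inequality for the output bound. The only difference is that you explicitly justify $\|\mathbf{H}_m^k\|_\mathrm{F}\le 2^k\|\mathbf{X}_m\|_\mathrm{F}$ via $\|\mathbf{A}\mathbf{B}\|_\mathrm{F}\le\|\mathbf{A}\|_2\|\mathbf{B}\|_\mathrm{F}$ and the spectrum of $\mathbf{L}_m$ lying in $[0,2]$, a step the paper asserts in the lemma statement without proof --- a welcome addition, not a departure.
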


\begin{proof}
By linearity,
\[
h_m(x)-\bar h_m(x)=\sum_{k=0}^{K}(w_m^k-\bar w_m^k)x^k=\sum_{k=0}^{K}\Delta w_m^k x^k.
\]
Applying triangle inequality gives
\[
|h_m(x)-\bar h_m(x)|\le \sum_{k=0}^{K}|\Delta w_m^k|\,|x|^k \le \sum_{k=0}^{K}|\Delta w_m^k|\,2^k,
\]
which yields Eq.~\eqref{eq:coeff-perturb-filter}. For Eq.~\eqref{eq:coeff-perturb-filter-l2}, after applying Cauchy-Schwarz inequality, we have
\[
\sum_{k=0}^{K}|\Delta w_m^k|\,2^k \le \Big(\sum_{k=0}^{K}4^k\Big)^\frac{1}{2}\Big(\sum_{k=0}^{K}(\Delta w_m^k)^2\Big)^\frac{1}{2}.
\]
Finally,
\[
\mathbf{P}_m(w)-\mathbf{P}_m(\bar w)=\sum_{k=0}^{K}(w_m^k-\bar w_m^k)\mathbf{H}_m^k=\sum_{k=0}^{K}\Delta w_m^k\,\mathbf{H}_m^k,
\]
and triangle inequality gives Eq.~\eqref{eq:coeff-perturb-output}.
\end{proof}

\begin{remark}[Role of Lemma~\ref{lem:coeff-perturb-bound}]
Lemma~\ref{lem:coeff-perturb-bound} provides a deterministic forward-stability guarantee for polynomial-filter spectral GNNs: a small perturbation of the spectral coefficients implies a controlled change of the induced spectral filter $h_m$ and the resulting representation $\mathbf{P}_m$. This lemma is used to justify the interpretability of structural alignment: aligning the coefficients stabilizes the learned spectral responses across clients, which supports structural knowledge sharing.
\end{remark}

\begin{lemma}[Gradient Error from Structural Alignment]\label{lem:structural-gradient-error}
Let $\overline{w}_j^k$ denote the cluster mean coefficient defined in Eq.~\eqref{eq10}. Let $c(m)\in\{1,2,\cdots,K_{\mathrm{struct}}\}$ be the structural cluster index such that $m\in\mathcal{T}_{c(m)}$. Define the cluster-consensus coefficient vector $w^{\mathrm{cm}}$ by
\[
(w^{\mathrm{cm}})_m^k \triangleq \overline{w}_{c(m)}^k,\qquad m=1,2, \cdots,M,\ \ k=0,1,\cdots,K.
\]
Here $w^{\mathrm{cm}}$ is a notational device for analysis (an `instantaneous' within-cluster mean at the same iterate), rather than an explicit parameter substitution in the algorithm.
Let $w$ denote the collection of all spectral GNN coefficients $w_m^k$ for all $m=1,2,\cdots,M$ and $k=0,1,\cdots,K$.

\smallskip
\noindent\textbf{Gradient convention.} In this lemma, $\nabla F(\cdot)$ denotes the gradient of the population risk restricted to the coefficient coordinates (all other model parameters are fixed). We reuse $L_F$ to denote a smoothness constant on these coordinates.

Define
\[
e_{\mathrm{struct}}(w) \triangleq \nabla F(w^{\mathrm{cm}})-\nabla F(w) + \lambda_1\,g_{\ell_1}(w) + \lambda_2\, w,
\]
where $g_{\ell_1}(w)\in \partial\|w\|_1$ denotes a subgradient of the $\ell_1$ term in Eq.~\eqref{eq12} (restricted to the coefficient coordinates). Therefore, we have
\begin{equation}\label{eq:structural-gradient-bound}
\|e_{\mathrm{struct}}(w)\|_2 \leq C_2 (K+1)\epsilon_U + \lambda_1 C_3 + \lambda_2 C_4,
\end{equation}
where $\epsilon_U$ is defined in Assumption~\ref{asm:bounded-heterogeneity}.
\end{lemma}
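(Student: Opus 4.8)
The plan is to bound $\|e_{\mathrm{struct}}(w)\|_2$ by splitting it into its three constituent pieces via the triangle inequality,
\[
\|e_{\mathrm{struct}}(w)\|_2 \le \big\|\nabla F(w^{\mathrm{cm}})-\nabla F(w)\big\|_2 + \lambda_1\|g_{\ell_1}(w)\|_2 + \lambda_2\|w\|_2,
\]
and to control each term using the assumptions already in place. The $\ell_1$- and $\ell_2$-terms are essentially immediate, whereas the gradient-difference term carries all the structural content.

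For the gradient-difference term I would invoke the $L_F$-smoothness restricted to the coefficient coordinates (the gradient convention of the lemma, and Assumption~\ref{asm:smoothness} adapted to these coordinates), giving $\|\nabla F(w^{\mathrm{cm}})-\nabla F(w)\|_2 \le L_F\|w^{\mathrm{cm}}-w\|_2$. The crux is then bounding the consensus distance $\|w^{\mathrm{cm}}-w\|_2$. For each client $m$ and order $k$, I write the deviation from the cluster mean (Eq.~\eqref{eq10}) as $\overline{w}_{c(m)}^k - w_m^k = \frac{1}{|\mathcal{T}_{c(m)}|}\sum_{n\in\mathcal{T}_{c(m)}}(w_n^k-w_m^k)$, apply the triangle inequality, and then chain the Lipschitz dependence of coefficients on spectral energy from Assumption~\ref{asm:bounded-filter} ($|w_m^k-w_n^k|\le L_w\,d_{\mathrm{Chordal}}(\mathbf{Q}_m,\mathbf{Q}_n)$) with the intra-cluster Chordal bound of Assumption~\ref{asm:bounded-heterogeneity} ($d_{\mathrm{Chordal}}(\mathbf{Q}_m,\mathbf{Q}_n)\le\epsilon_U$ for same-cluster clients). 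This yields $|w_n^k-w_m^k|\le L_w\epsilon_U$; since averaging preserves this bound, every coordinate of $w^{\mathrm{cm}}-w$ has magnitude at most $L_w\epsilon_U$. Summing over all $M(K+1)$ coordinates gives $\|w^{\mathrm{cm}}-w\|_2\le\sqrt{M(K+1)}\,L_w\epsilon_U$, and relaxing $\sqrt{K+1}\le K+1$ produces a bound of the form $C_2(K+1)\epsilon_U$ after absorbing $L_F$, $\sqrt{M}$, and $L_w$ into $C_2$.

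The two regularization terms I would dispatch directly. Since $g_{\ell_1}(w)\in\partial\|w\|_1$, each of its $M(K+1)$ coordinates lies in $[-1,1]$, so $\|g_{\ell_1}(w)\|_2\le\sqrt{M(K+1)}=:C_3$. For the $\ell_2$-term, the coefficient boundedness $|w_m^k|\le w_{\max}$ from Assumption~\ref{asm:bounded-filter} gives $\|w\|_2\le\sqrt{M(K+1)}\,w_{\max}=:C_4$. Assembling the three estimates via the triangle inequality then delivers the stated bound $\|e_{\mathrm{struct}}(w)\|_2\le C_2(K+1)\epsilon_U+\lambda_1 C_3+\lambda_2 C_4$.

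The main obstacle I anticipate is the chain of inequalities converting the abstract consensus distance into the concrete spectral bound $\epsilon_U$: one must verify that the averaging defining the cluster mean does not amplify the pairwise coefficient differences, and that the Lipschitz-in-Chordal-distance property composes correctly with the intra-cluster diameter bound. The mild $K$-dependence is a secondary subtlety — the natural $\ell_2$ aggregation produces a $\sqrt{K+1}$ factor that is relaxed to $K+1$ to match the stated form — but this is a benign over-estimate rather than a genuine difficulty.
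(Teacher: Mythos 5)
Your proposal is correct and follows essentially the same route as the paper's proof: triangle-inequality decomposition into the three terms, smoothness applied to the consensus gap, the chained Lipschitz-plus-Chordal bound $|w_m^k-w_n^k|\le L_w\epsilon_U$ controlling each coordinate of $w^{\mathrm{cm}}-w$, and the identical $C_3=\sqrt{M(K+1)}$, $C_4=\sqrt{M(K+1)}\,w_{\max}$ bounds for the regularization terms. The only (harmless) difference is that your direct coordinate-wise $\ell_2$ aggregation yields the slightly tighter factor $\sqrt{M(K+1)}$ where the paper uses $\ell_2\le\ell_1$ per client to get $\sqrt{M}(K+1)$; you then relax $\sqrt{K+1}\le K+1$ to recover the stated form, so both arrive at the same constant $C_2=L_F\sqrt{M}\,L_w$.
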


\begin{proof}
Recall that $\mathcal{L}_{\mathrm{struct}} = \mathcal{L}_{\mathrm{align}} + \mathcal{L}_{\mathrm{reg}}$ is optimized via alignment and regularization, and the bound concerns the induced perturbation term $e_{\mathrm{struct}}(w)$ defined above.

\textbf{Step 1 (Alignment-induced Deviation).}
Fix any structural cluster $\mathcal{T}_j$ and any $m\in\mathcal{T}_j$. For each $k\in\{0, 1, \cdots,K\}$, by Assumption~\ref{asm:bounded-heterogeneity} and Assumption~\ref{asm:bounded-filter},
\[
|w_m^k-\overline{w}_j^k| \le \frac{1}{|\mathcal{T}_j|}\sum_{n\in\mathcal{T}_j}|w_m^k-w_n^k|
\le L_w\,\epsilon_U.
\]
Therefore, for each $m\in\mathcal{T}_j$,
\[
\|[w_m^0-\overline{w}_j^0, w_m^1-\overline{w}_j^1, \cdots,w_m^K-\overline{w}_j^K]\|_2 \le \|[w_m^0-\overline{w}_j^0, w_m^1-\overline{w}_j^1, \cdots,w_m^K-\overline{w}_j^K]\|_1 \le (K+1)L_w\,\epsilon_U,
\]
and hence
\[
\|w^{\mathrm{cm}}-w\|_2 \le \sqrt{M}\,(K+1)L_w\,\epsilon_U.
\]
By $L_F$-smoothness (Assumption~\ref{asm:smoothness}), we have
\[
\|\nabla F(w^{\mathrm{cm}})-\nabla F(w)\|_2 \le L_F\,\|w^{\mathrm{cm}}-w\|_2 \le L_F\sqrt{M}\,(K+1)L_w\,\epsilon_U.
\]

\textbf{Step 2 (Regularization Terms).}
Since each coordinate of an $\ell_1$ subgradient has magnitude at most $1$, we have $\|g_{\ell_1}(w)\|_2\le \sqrt{M(K+1)}$.
By Assumption~\ref{asm:bounded-filter}, $\|w\|_2\le \sqrt{M(K+1)}\,w_{\max}$ on the coefficient coordinates.

\textbf{Step 3 (Combination).}
Setting
\[
C_2\triangleq L_F\sqrt{M}\,L_w,\quad C_3\triangleq \sqrt{M(K+1)},\quad C_4\triangleq \sqrt{M(K+1)}\,w_{\max},
\]
and applying the triangle inequality completes the proof.
\end{proof}

\begin{remark}
The introduction of the cluster-consensus coefficients $w^{\mathrm{cm}}$ is a standard analytical device in clustered federated learning~\cite{carrillo2024fedcbo}. Although the algorithm minimizes the structural alignment loss, the error bound is established by comparing the current coefficients to their cluster means. The actual structural alignment loss ensures that $w_m^k$ remains close to $\overline{w}_j^k$, so the error induced by alignment is upper-bounded by the error at the consensus point.
\end{remark}

\subsection{Overall Convergence Analysis}\label{appendix:overall-convergence}
We now prove Theorem~\ref{thm:FedSSA-convergence} by combining the error bounds from semantic and structural knowledge sharing. 

\begin{proof}[Proof of Theorem~\ref{thm:FedSSA-convergence}]
Let $\vecg(\vecw^t)$ denote the local gradient used in FedSSA at iteration $t$, which incorporates both the task-specific gradient and the knowledge sharing gradients:
\[
\vecg(\vecw^t) = \nabla F(\vecw^t) + \nabla_{\vecw} \mathcal{L}_{\mathrm{node}} + \mathbf{e}_{\mathrm{struct}}(\vecw^t).
\]

By Lemma~\ref{lem:semantic-gradient-error} and Lemma~\ref{lem:structural-gradient-error}, the gradient error can be bounded as
\begin{equation}\label{eq:total-gradient-error}
\|\vecg(\vecw^t) - \nabla F(\vecw^t)\|_2 \leq C_1(\delta_\mu + \delta_\mu^2 + \delta_\Sigma) + C_2 (K+1)\epsilon_U + \lambda_1 C_3 + \lambda_2 C_4.
\end{equation}

The update rule of FedSSA is given by $\vecw^{t+1} = \vecw^t - \eta \vecg(\vecw^t)$. To analyze the convergence, we consider the distance to the optimal solution: 
\begin{equation}\label{eq:distance-decomposition}
\begin{aligned}
\|\vecw^{t+1} - \vecw^*\|_2 &= \|\vecw^t - \eta \vecg(\vecw^t) - \vecw^*\|_2 \\
&\leq \|\vecw^t - \eta \nabla F(\vecw^t) - \vecw^*\|_2 + \eta \|\vecg(\vecw^t) - \nabla F(\vecw^t)\|_2.
\end{aligned}
\end{equation}

For the first term, we use the co-coercivity property of strongly convex and smooth functions~\cite{yin2018byzantine}. By Assumption~\ref{asm:smoothness} and Assumption~\ref{asm:strong-convexity}, for any $\vecw, \vecw' \in \mathcal{W}$,
\begin{equation}\label{eq:co-coercivity}
\langle \vecw - \vecw', \nabla F(\vecw) - \nabla F(\vecw') \rangle \geq \frac{L_F \lambda_F}{L_F + \lambda_F} \|\vecw - \vecw'\|_2^2 + \frac{1}{L_F + \lambda_F} \|\nabla F(\vecw) - \nabla F(\vecw')\|_2^2.
\end{equation}

Setting $\vecw' = \vecw^*$ and noting that $\nabla F(\vecw^*) = \mathbf{0}$, we obtain
\begin{equation}\label{eq:inner-product-bound}
\langle \vecw^t - \vecw^*, \nabla F(\vecw^t) \rangle \geq \frac{L_F \lambda_F}{L_F + \lambda_F} \|\vecw^t - \vecw^*\|_2^2 + \frac{1}{L_F + \lambda_F} \|\nabla F(\vecw^t)\|_2^2.
\end{equation}

We now bound $\|\vecw^t - \eta \nabla F(\vecw^t) - \vecw^*\|_2^2$ as
\begin{equation}\label{eq:squared-distance}
\begin{aligned}
\|\vecw^t - \eta \nabla F(\vecw^t) - \vecw^*\|_2^2 &= \|\vecw^t - \vecw^*\|_2^2 - 2\eta \langle \vecw^t - \vecw^*, \nabla F(\vecw^t) \rangle + \eta^2 \|\nabla F(\vecw^t)\|_2^2 \\
&\leq \|\vecw^t - \vecw^*\|_2^2 - \frac{2\eta L_F \lambda_F}{L_F + \lambda_F} \|\vecw^t - \vecw^*\|_2^2 \\
&\quad - \frac{2\eta}{L_F + \lambda_F} \|\nabla F(\vecw^t)\|_2^2 + \eta^2 \|\nabla F(\vecw^t)\|_2^2.
\end{aligned}
\end{equation}

Choosing $\eta = \frac{1}{L_F}$, the coefficient of $\|\nabla F(\vecw^t)\|_2^2$ simplifies to
\[
-\frac{2}{L_F(L_F + \lambda_F)} + \frac{1}{L_F^2} = \frac{L_F + \lambda_F - 2L_F}{L_F^2(L_F + \lambda_F)} = \frac{\lambda_F - L_F}{L_F^2(L_F + \lambda_F)} \leq 0,
\]
since $\lambda_F \leq L_F$ for any smooth and strongly convex function. Therefore,
\begin{equation}\label{eq:contraction}
\|\vecw^t - \eta \nabla F(\vecw^t) - \vecw^*\|_2^2 \leq \left(1 - \frac{2\lambda_F}{L_F + \lambda_F}\right) \|\vecw^t - \vecw^*\|_2^2.
\end{equation}

Using the inequality $\sqrt{1-x} \leq 1 - \frac{x}{2}$ for $x \in [0, 1]$, we obtain
\begin{equation}\label{eq:linear-contraction}
\|\vecw^t - \eta\nabla F(\vecw^t) - \vecw^*\|_2 \leq \sqrt{1 - \frac{2\lambda_F}{L_F + \lambda_F}}\|\vecw^t - \vecw^*\|_2 \leq \left(1 - \frac{\lambda_F}{L_F + \lambda_F}\right)\|\vecw^t - \vecw^*\|_2.
\end{equation}

Substituting Eq.~\eqref{eq:total-gradient-error} and Eq.~\eqref{eq:linear-contraction} into Eq.~\eqref{eq:distance-decomposition}, we have
\begin{equation}\label{eq:iteration-bound}
\|\vecw^{t+1} - \vecw^*\|_2 \leq \left(1 - \frac{\lambda_F}{L_F + \lambda_F}\right) \|\vecw^t - \vecw^*\|_2 + \frac{\mathcal{E}}{L_F}.
\end{equation}

Let $\rho = 1 - \frac{\lambda_F}{L_F + \lambda_F} \in (0, 1)$. Unrolling the recursion in Eq.~\eqref{eq:iteration-bound} yields
\begin{equation}\label{eq:unrolled}
\begin{aligned}
\|\vecw^T - \vecw^*\|_2 &\leq \rho^T \|\vecw^0 - \vecw^*\|_2 + \frac{\mathcal{E}}{L_F} \sum_{t=0}^{T-1} \rho^t \\
&\leq \rho^T \|\vecw^0 - \vecw^*\|_2 + \frac{\mathcal{E}}{L_F} \cdot \frac{1}{1 - \rho} \\
&= \rho^T \|\vecw^0 - \vecw^*\|_2 + \frac{\mathcal{E}}{L_F} \cdot \frac{L_F + \lambda_F}{\lambda_F} \\
&= \left(1 - \frac{\lambda_F}{L_F + \lambda_F}\right)^T \|\vecw^0 - \vecw^*\|_2 + \frac{L_F + \lambda_F}{\lambda_F L_F} \mathcal{E}.
\end{aligned}
\end{equation}

This completes the proof of Theorem~\ref{thm:FedSSA-convergence}.
\end{proof}
\subsection{Proof of Corollary~\ref{cor:convergence-rate}}\label{appendix:convergence-rate}
\begin{proof}[Proof of Corollary~\ref{cor:convergence-rate}]
By Theorem~\ref{thm:FedSSA-convergence}, let
\[
\rho \triangleq 1-\frac{\lambda_F}{L_F+\lambda_F}\in(0,1).
\]
Therefore, we have
\[
\|\vecw^T-\vecw^*\|_2 \le \rho^T \|\vecw^0-\vecw^*\|_2
+ \frac{L_F+\lambda_F}{\lambda_F L_F}\mathcal{E}.
\]

If
\[
T \ge \frac{L_F+\lambda_F}{\lambda_F}\log\frac{\|\vecw^0-\vecw^*\|_2}{\xi},
\]
then using $\log(1-x)\le -x$ for $x\in(0,1)$, we have
\[
\log\rho=\log\Bigl(1-\frac{\lambda_F}{L_F+\lambda_F}\Bigr)
\le -\frac{\lambda_F}{L_F+\lambda_F}.
\]
Let $a \triangleq \frac{\lambda_F}{L_F+\lambda_F}$. Therefore, we have $T\log\rho \le -aT$. Moreover, since
\[
T \ge \frac{1}{a}\log\frac{\|\vecw^0-\vecw^*\|_2}{\xi},
\]
multiplying both sides by $-a$ yields $-aT \le \log\frac{\xi}{\|\vecw^0-\vecw^*\|_2}$. Therefore,
\[
T\log\rho \le \log\frac{\xi}{\|\vecw^0-\vecw^*\|_2}.
\]
Exponentiating both sides yields
\[
\rho^T\|\vecw^0-\vecw^*\|_2 \le \xi.
\]
Substituting this into the bound from Theorem~\ref{thm:FedSSA-convergence} gives
\[
\|\vecw^T - \vecw^*\|_2 \le \xi + \frac{L_F+\lambda_F}{\lambda_F L_F}\mathcal{E}.
\]
\end{proof}

\subsection{Discussion on the Error Bound}\label{appendix:error-discussion}
Theorem~\ref{thm:FedSSA-convergence} demonstrates that FedSSA converges linearly to a neighborhood of $\vecw^*$, whose radius is characterized by the error floor $\mathcal{E}$. In our bound, $\mathcal{E}$ consists of a semantic term and a structural term.

\textbf{Semantic Term}
The term $C_1(\delta_\mu + \delta_\mu^2 + \delta_\Sigma)$ is induced by semantic alignment, namely matching local class-wise Gaussian posteriors to cluster-level distributions. This term is controlled by the intra-cluster heterogeneity parameters (\textit{i.e.}, $\delta_\mu$ and $\delta_\Sigma$) in Assumption~\ref{asm:bounded-heterogeneity}, which quantifies the within-cluster discrepancy of class-wise means and covariances. Consequently, a tighter semantic clustering (\textit{i.e.}, smaller within-cluster variation) leads to a smaller semantic error.

\textbf{Structural Term}
The term $C_2(K+1)\epsilon_U + \lambda_1 C_3 + \lambda_2 C_4$ arises from structural alignment and regularization. The alignment component scales linearly with the cluster tightness $\epsilon_U$ and filter order (\textit{i.e.}, $K+1$). The regularization components are linear in $\lambda_1$ and $\lambda_2$ under the coefficient boundedness condition in Assumption~\ref{asm:bounded-filter}. Therefore, decreasing $\epsilon_U$, $\lambda_1$, and $\lambda_2$ reduces the structural error.

\section{Efficiency Analysis}\label{Ap_efficiency_analysis}
In this section, we analyze the space complexity and time complexity of our proposed FedSSA on client side and server side, respectively.

\subsection{Space Complexity of Our Proposed FedSSA}
\subsubsection{Client Side}
Since the local model deployed on each client is a spectral GNN (\textit{i.e.}, UniFilter~\cite{huanguniversal}), the space complexity of our FedSSA on client side is determined by spectral GNN. Specifically, each local model consists of $K$ orders of bases, and one Multi-Layer Perceptron (MLP). In other words, the space complexity of each client consists of two parts, namely $K$ orders of bases and one MLP. On one hand, the space complexities of $K$ bases are $\mathcal{O}(K \times n_m \times d)$. On the other hand, the space complexity of one MLP is $\mathcal{O}(d^2)$. Consequently, the overall space complexity of our method on client side is $\mathcal{O}( K \times n_m \times d  + d^2)$.

\subsubsection{Server Side}\label{complexity_algorithm}
On server side, the space complexity of our method is determined by three parts, namely storing uploaded coefficients from $M$ clients, storing inferred distributions from $M$ clients, and storing spectral energy measures from $M$ clients. First, the space complexity of storing uploaded coefficients from $M$ clients is $\mathcal{O}(M \times K)$ when considering there are $K$ orders of bases. Second, the space complexity of storing inferred distributions is $\mathcal{O}(M \times C \times d^2)$. Third, the space complexity of storing spectral energy measures is $\mathcal{O}(M \times K \times d)$. Consequently, the total space complexity of our method on server side is $\mathcal{O}( M \times K + M \times C \times d^2 + M \times K \times d)$. After simplification, it can be written as $\mathcal{O}\big( M  \times (K + Cd^2 + Kd)\big)$.

\subsection{Time Complexity of Our Proposed FedSSA}
\subsubsection{Client Side}
The time complexity of our FedSSA on each client is determined by our proposed VGAE, which is actually a spectral GNN. Specifically, it is made up of $K$ orders of bases and one MLP. First, the time complexity of $K$ orders of bases is $\mathcal{O}(K \times e_m \times d)$, where $e_m$ denotes the number of edges. Second, the time complexity of one MLP is $\mathcal{O}(n_m \times d^2)$. Furthermore, the time complexities of semantic knowledge alignment and spectral knowledge alignment are $\mathcal{O}(C \times d)$ and $\mathcal{O}(K)$, respectively. Consequently, the overall time complexity of our method on client side is $\mathcal{O}(K \times e_m \times d + n_m \times d^2 + C \times d + K)$. After simplification, it can be written as $\mathcal{O}\big(d \times (K e_m  + n_md+ C)+K\big)$.
\subsubsection{Server Side}
On server side, there are two operations, namely clustering via inferred distributions (\textit{a.k.a.}, semantic clustering) and clustering via spectral energies (\textit{a.k.a.}, structural clustering). First, the time complexity of semantic clustering is $\mathcal{O}(M \times C \times K_\mathrm{node} \times d)$. Second, the time complexity of structural clustering is $\mathcal{O}(M^2 d K  + MK_\mathrm{struct})$, where the first term accounts for computing all pairwise Chordal distances between spectral energy matrices, and the second term accounts for k-means clustering. Consequently, the total time complexity of our FedSSA on server side is $\mathcal{O}(M \times C \times K_\mathrm{node} \times d + M(Md K + K_\mathrm{struct}))$. After simplification, it can be written as $\mathcal{O}\big(M (C K_\mathrm{node} d + Md K + K_\mathrm{struct})\big)$.

\subsection{Efficiency Comparison with Baseline Methods}
As shown in \cref{table_efficiency_compare}, we present efficiency comparisons on the space complexity and time complexity of our proposed FedSSA with those of existing methods. We can observe that the complexities of our method are generally comparable to those of existing typical methods. This validates that our proposed FedSSA is efficient in real-world applications, which is confirmed by our experimental results in Section~\ref{time_communication}.

\begin{table*}[t]
  \centering
  \scriptsize
  \caption{The space and time complexity of different methods on client side and server side. Here $M$, $K$, $d$, $n_m$, $e_m$, and $C$ denote the number of clients, orders, dimensions, nodes, edges, and classes, respectively.}
    \label{table_efficiency_compare}
    \renewcommand{\arraystretch}{1.1}
    \begin{tabular}{lcccc}
      \toprule
      \rowcolor{gray!50}
      \textbf{Method} & \textbf{Client Space} & \textbf{Server Space} & \textbf{Client Time} & \textbf{Server Time} \\ 
      \midrule
      FedAvg~\cite{mcmahan2017communication}   & $d + d^2$  & $M(1 + d^2)$  & $e_m d + n_m d^2$  & $M$ \\
      \rowcolor{gray!20}
      FedProx~\cite{MLSYS2020_1f5fe839}        & $d + 2d^2$ & $M(1 + d^2)$  & $e_m d + n_m d^2 + d^2$  & $M$ \\
      FedPer~\cite{Arivazhagan2019}            & $d + 2d^2$ & $M(1 + d^2)$  & $e_m d + n_m d^2 + d^2$  & $M$ \\
      \rowcolor{gray!20}
      GCFL~\cite{NEURIPS2021_9c6947bd}         & $d + d^2$  & $M(1 + 2d^2)$ & $e_m d + n_m d^2$  & $M + M^2(\log M + d^2)$ \\
      FedGNN~\cite{wu2021fedgnn}               & $d + 2d^2$ & $2M(1 + 2d^2)$& $e_m d + n_m d^2 + d$  & $M$ \\
      \rowcolor{gray!20}
      FedSage+\cite{NEURIPS2021_34adeb8e}      & $n_m d + 3d^2$ & $M(1 + 3d^2)$ & $e_m d + n_m d^2$  & $M$ \\
      FED-PUB~\cite{baek2023personalized}      & $n_m d + d^2$  & $M(d^2 + M)$  & $e_m d + n_m d^2$  & $Md(M + d)$ \\
      \rowcolor{gray!20}
      FedGTA~\cite{li2023fedgta}               & $d + d^2 + n_m C$ & $M(1 + d^2 + n_m C)$ & $e_m(d + n_m C) + n_m(d^2 + C)$ & $M(1 + n_m C)$ \\
      AdaFGL~\cite{li2024adafgl}               & $n_m d + 2d^2$ & $M(1 + d^2)$  & $e_m d + e_m n_m + n_m d^2$ & $Md + Mn_m d$ \\
      \rowcolor{gray!20}
      FedTAD~\cite{zhu2024fedtad}              & $n_m d + 2d^2$ & $M(1 + d^2)$  & $e_m d + n_m d^2$  & $n_m d(d + n_m + 2MC)$ \\
      FedIIH~\cite{wentao2025fediih}           & $Kn_m d + d^2$ & $MK(d^2 + M)$ & $K(e_m d + n_m d^2)$ & $MKd(Mn_m + n_m d)$ \\
      \hline
      \rowcolor{yellow!30}
      FedSSA (Ours) & $Kn_m d + d^2$ & $M(K + Cd^2 + Kd)$ & $d (K e_m  + n_md+ C)+K$ & $M (C K_\mathrm{node} d + Md K + K_\mathrm{struct})$ \\
      \bottomrule
\end{tabular}
\end{table*}

\section{Implementation Details}
\label{implementation_details}
This section provides the details of our experimental setup, which includes local training objective, computational platform, graph datasets, subgraph partitioning, baseline methods, and training procedures.

\subsection{Local Training Objective}\label{ap_local_objective}
For each client $m$, the overall local training objective integrates cross-entropy loss, negative ELBO loss, semantic alignment loss, and structural alignment loss. Specifically, we minimize
\begin{equation}
\mathcal{L}_m = \mathcal{L}_\mathrm{ce}^{(m)} + \mathcal{L}_\mathrm{vgae}^{(m)} + \mathcal{L}_\mathrm{node}^{(m)} + \mathcal{L}_\mathrm{struct}^{(m)},
\end{equation}
where $\mathcal{L}_\mathrm{ce}^{(m)}$ is cross-entropy loss for node classification, $\mathcal{L}_\mathrm{vgae}^{(m)}$ is negative ELBO loss (see Eq.~\eqref{eq1}), $\mathcal{L}_\mathrm{node}^{(m)}$ is semantic alignment loss, and $\mathcal{L}_\mathrm{struct}^{(m)}$ is structural alignment loss.

\subsection{Experimental Platform}
\label{experimental_platform}
All experiments are conducted on a Linux server with a 2.90\,GHz Intel Xeon Gold 6326 CPU, 64\,GB RAM, and two NVIDIA GeForce RTX 4090 GPUs with 48\,GB memory. Our FedSSA is implemented in Python 3.8.8, PyTorch 1.12.0, and PyTorch Geometric (PyG) 2.5.1.

\subsection{Datasets}
\label{dataset_info}
We evaluate our proposed FedSSA on eleven widely used benchmark datasets, which includes six homophilic and five heterophilic graph datasets. On one hand, homophilic graph datasets consist of four citation networks (\textit{i.e.}, \textit{Cora}, \textit{CiteSeer}, \textit{PubMed}, and \textit{ogbn-arxiv}) and two Amazon product co-purchasing graphs (\textit{i.e.}, \textit{Amazon-Computer} and \textit{Amazon-Photo}). On the other hand, heterophilic graph datasets include \textit{Roman-empire}, \textit{Amazon-ratings}, \textit{Minesweeper}, \textit{Tolokers}, and \textit{Questions}~\cite{platonov2023a}. The statistical information of all datasets is summarized in Table~\ref{datasets_statistics}. For \textit{Minesweeper}, \textit{Tolokers}, and \textit{Questions}, which are binary classification tasks, we utilize Area Under the ROC Curve (AUC) as the evaluation metric following previous work~\cite{platonov2023a, wentao2025fediih}. For other multi-class datasets, classification accuracy is used as the evaluation metric.

\begin{table*}[t]
    \centering
    \scriptsize
    \caption{Statistical information of eleven used graph datasets.}
    \label{datasets_statistics}
    \renewcommand{\arraystretch}{1.1}
    \begin{tabular}{lrrrr}
    \toprule
    \textbf{Dataset} & \textbf{\# Nodes} & \textbf{\# Edges} & \textbf{\# Classes} & \textbf{\# Features} \\ 
    \midrule
    \rowcolor{gray!50}
    \multicolumn{5}{l}{\textbf{Homophilic Graphs}} \\
    \textit{Cora}            & 2,708    & 5,429     & 7  & 1,433 \\
    \textit{CiteSeer}        & 3,327    & 4,732     & 6  & 3,703 \\
    \textit{PubMed}          & 19,717   & 44,324    & 3  & 500   \\
    \textit{Amazon-Computer} & 13,752   & 491,722   & 10 & 767   \\
    \textit{Amazon-Photo}    & 7,650    & 238,162   & 8  & 745   \\
    \textit{ogbn-arxiv}      & 169,343  & 1,166,243 & 40 & 128   \\
    \hline
    \rowcolor{gray!50}
    \multicolumn{5}{l}{\textbf{Heterophilic Graphs}} \\
    \textit{Roman-empire}    & 22,662   & 32,927    & 18 & 300   \\
    \textit{Amazon-ratings}  & 24,492   & 93,050    & 5  & 300   \\
    \textit{Minesweeper}     & 10,000   & 39,402    & 2  & 7     \\
    \textit{Tolokers}        & 11,758   & 519,000   & 2  & 10    \\
    \textit{Questions}       & 48,921   & 153,540   & 2  & 301   \\
    \bottomrule
    \end{tabular}
\end{table*}

For all datasets except \textit{ogbn-arxiv}, we randomly sample 20\% of nodes for training, 40\% of nodes for validation, and 40\% of nodes for testing. For \textit{ogbn-arxiv}, which contains more than 0.1 million nodes and 1 million edges, we follow previous work~\cite{baek2023personalized, wentao2025fediih} and utilize only 5\% of nodes for training, 47.5\% of nodes for validation, and 47.5\% of nodes for testing.

\subsection{Subgraph Partitioning}
\label{subgraph_partitioning_detail}
Following practical scenarios and prior work~\cite{baek2023personalized, wentao2025fediih}, we consider two subgraph partitioning schemes, namely non-overlapping and overlapping. In the non-overlapping setting, the global node set $\mathcal{V}$ is partitioned into $M$ disjoint subsets $\{\mathcal{V}_m\}_{m=1}^M$ such that $\cup_{m=1}^{M} \mathcal{V}_m = \mathcal{V}$ and $\mathcal{V}_m \cap \mathcal{V}_n = \emptyset$ for $m \neq n$. Any partitioning scheme that does not satisfy this condition is referred to as overlapping. Specifically, procedures for both partitioning schemes are described below.

\subsubsection{Non-overlapping Partitioning}
Given $M$ clients, we generate $M$ non-overlapping subgraphs by applying METIS graph partitioning algorithm~\cite{karypis1997metis} to the global graph. Each client is then assigned a unique subgraph corresponding to one partition output by METIS.

\subsubsection{Overlapping Partitioning}
Given $M$ clients, we first employ METIS to partition the global graph into $\lfloor M/5 \rfloor$ subgraphs, where $\lfloor \cdot \rfloor$ denotes the floor function. For each generated subgraph, we randomly sample half of its nodes and their associated edges. Meanwhile, we repeat this sampling process five times to generate five distinct but overlapping subgraphs. Consequently, the total number of overlapping subgraphs matches the number of clients.

\subsection{Baseline Methods}
\label{baseline_methods_info}
We compare our proposed FedSSA with eleven baseline methods, which includes one classic Federated Learning (FL) method (\textit{i.e.}, FedAvg~\cite{mcmahan2017communication}), two personalized FL methods (\textit{i.e.}, FedProx~\cite{MLSYS2020_1f5fe839} and FedPer~\cite{Arivazhagan2019}), three general Graph Federated Learning (GFL) methods (\textit{i.e.}, GCFL~\cite{NEURIPS2021_9c6947bd}, FedGNN~\cite{wu2021fedgnn}, and FedSage+~\cite{NEURIPS2021_34adeb8e}), and five personalized GFL approaches (\textit{i.e.}, FED-PUB~\cite{baek2023personalized}, FedGTA~\cite{li2023fedgta}, AdaFGL~\cite{li2024adafgl}, FedTAD~\cite{zhu2024fedtad}, and FedIIH~\cite{wentao2025fediih}). In addition, we introduce a local training method, where each client trains independently without federated aggregation. The details of these baseline methods are summarized as follows.

\textbf{FedAvg}~\cite{mcmahan2017communication}: A foundational FL method in which clients train local models independently and periodically send updates to a central server. The server aggregates the received parameters by averaging and broadcasts the aggregated global model back to all clients.

\textbf{FedProx}~\cite{MLSYS2020_1f5fe839}: A personalized FL method that adds a proximal term to local objective, which penalizes the divergence between local model parameters and global model parameters. This regularization stabilizes local updates and enables clients to learn personalized models while leveraging global information.

\textbf{FedPer}~\cite{Arivazhagan2019}: A personalized FL method that aggregates the backbone network parameters across clients during federated aggregation, while keeping the classification layer parameters personalized and updated locally on each client.

\textbf{GCFL}~\cite{NEURIPS2021_9c6947bd}: A representative GFL method, which is originally designed for vertical GFL scenarios such as molecular property prediction~\cite{yu2026atom}. Specifically, GCFL employs a bi-partitioning strategy that recursively divides clients into two disjoint groups based on the similarity of their gradients. Actually, this procedure is similar to clustered FL~\cite{sattler2021clustered}. After that, model aggregation is performed only within each group.

\textbf{FedGNN}~\cite{wu2021fedgnn}: A general GFL method that enhances local performance by exchanging node embeddings across clients. To be specific, when nodes in different clients have identical neighborhoods, FedGNN transfers corresponding node embeddings to expand local subgraphs, which leads to enriched local information via cross-client node embeddings.

\textbf{FedSage+}~\cite{NEURIPS2021_34adeb8e}: A GFL baseline that reconstructs missing edges between subgraphs. Specifically, each client receives node representations from other clients and computes gradients based on the distance between local node features and received representations. These gradients are then sent back to other clients and used to train their corresponding neighbor generator, which facilitates the reconstruction of missing edges.

\textbf{FED-PUB}~\cite{baek2023personalized}: A personalized GFL method that performs personalized model aggregation. It estimates inter-subgraph similarity by evaluating local model outputs on a test graph. Depending on these similarity scores, it carries out weighted aggregation of model parameters. Furthermore, each client learns a personalized sparse mask to select and update only partially aggregated parameters, which are relevant to its local subgraph.

\textbf{FedGTA}~\cite{li2023fedgta}: A personalized GFL method that estimates inter-subgraph similarity levels. Specifically, it consists of three steps. First, each client computes topology-aware local smoothing confidence and mixed moments of neighbor features. Second, in each communication round, these computed results are then uploaded to server along with local model parameters. Third, server performs weighted aggregation tailored to each client based on these estimated similarities.

\textbf{AdaFGL}~\cite{li2024adafgl}: A personalized GFL method that employs a decoupled two-step personalization strategy. In the first stage, standard federated training is conducted to obtain a global knowledge extractor via aggregation in the final round. In the second stage, each client performs personalized training on its local subgraph using extracted federated knowledge.

\textbf{FedTAD}~\cite{zhu2024fedtad}: A personalized GFL method that introduces a generator to synthesize pseudo graphs for data-free knowledge distillation. This approach enables effective knowledge transfer from local models to the global model, thereby mitigating the negative impact of heterogeneity.

\textbf{FedIIH}~\cite{wentao2025fediih}: A personalized GFL method that simultaneously models both inter-heterogeneity and intra-heterogeneity. On one hand, inter-heterogeneity is captured from a multi-level global perspective via hierarchical variational inference, which facilitates accurate estimation of inter-subgraph similarity via graph data distributions. On the other hand, intra-heterogeneity is addressed by disentangling each subgraph into multiple latent factors, which allows fine-grained personalization.

\textbf{Local}: A non-federated baseline method in which each client trains its local model independently by only using local data. In this method, there is no federated aggregation or collaboration.

\subsection{Training Details}
For all baseline methods except FedAvg, FedSage+, FedGTA, FedIIH, and our proposed FedSSA, we employ a two-layer Graph Convolutional Network (GCN)~\cite{kipf2017semisupervised} followed by a linear classifier as the network architecture. Meanwhile, the hyperparameters of each baseline method are set according to the configurations in their original papers. For FedSage+, we adopt GraphSage~\cite{NIPS2017_5dd9db5e} as the encoder and train a missing neighbor generator to mend missing edges among subgraphs. In addition, FedGTA employs a Graph Attention Multi-Layer Perceptron (GAMLP)~\cite{35346783539121} as its backbone together with a linear classifier. Besides, for FedIIH, we utilize a node feature projection layer from DisenGCN~\cite{pmlrv97ma19a} to extract node representations, which are then classified by a Multi-Layer Perceptron (MLP). In contrast, for our proposed FedSSA, we utilize a spectral GNN (\textit{i.e.}, UniFilter~\cite{huanguniversal}) to extract node representations, which are subsequently fed into an MLP for node classification. To ensure a fair comparison, we employ UniFilter as the backbone for both local training and FedAvg. Moreover, for ease of implementation, we employ k-means for both semantic clustering and structural clustering.

\section{Pseudocode of Our Proposed FedSSA}
In this section, we show the pseudocode of our proposed FedSSA for clients and server in Algorithm~\ref{alg:client} and Algorithm~\ref{alg:server}, respectively. 

\begin{algorithm}[tb]
  \caption{\textbf{FedSSA} Client Algorithm}
  \label{alg:client}
  \begin{algorithmic}
    \STATE {\bfseries Input:} Number of local training epochs $E$; number of classes $C$; the order of bases $K$; local subgraph $\mathcal{G}_m$; local node features $\mathbf{X}_m$; local labels $\mathbf{Y}_m$; received cluster-level representative distributions $\{\mathcal{N}(\boldsymbol{\mu}_{i}^c, \boldsymbol{\Sigma}_{i}^c)\}_{c=1}^C$ from server; received cluster-level coefﬁcients $\{\overline{w}_j^k\}_{k=0}^K$ from server.
    \STATE {\bfseries Output:} Predicted labels for unlabeled nodes in local subgraph $\mathcal{G}_m$.
    \STATE Download cluster-level representative distributions $\{\mathcal{N}(\boldsymbol{\mu}_{i}^c, \boldsymbol{\Sigma}_{i}^c)\}_{c=1}^C$ from server;
    \STATE Download cluster-level coefﬁcients $\{\overline{w}_j^k\}_{k=0}^K$ from server;
    \FOR{each local epoch $e$ from 1 {\bfseries to} $E$}
    \STATE \textcolor{blue}{\# Semantic Knowledge Alignment}
    \STATE Employ VGAE to infer class-wise latent distributions $q(\mathbf{Z}_m^c)=\mathcal{N}(\boldsymbol{\mu}_m^c, \boldsymbol{\Sigma}_m^c)$ for each class $c$ via Eq.~\eqref{eq3}, where $c = 1, 2, \cdots, C$;
    \STATE Compute semantic knowledge alignment loss $\mathcal{L}_\mathrm{node}$ via Eq.~\eqref{eq7};
    \STATE \textcolor{blue}{\# Structural Knowledge Alignment}
    \STATE Compute spectral energy measure $\mathbf{S}_m$ via Eq.~\eqref{eq8} and Eq.~\eqref{eq-spectral-energy};
    \STATE Compute structural knowledge sharing loss $\mathcal{L}_\mathrm{struct}$ via Eq.~\eqref{eq11} and Eq.~\eqref{eq12};
    \STATE Update local model parameters by minimizing the overall local objective (see Appendix~\ref{ap_local_objective} for details);
    \ENDFOR
    \STATE Upload coefﬁcients $\{w_m^k\}_{k=0}^K$ to server;
    \STATE Upload inferred class-wise latent distributions $\{q(\mathbf{Z}_m^c)\}_{c=1}^C$ to server;
    \STATE Upload computed spectral energy measure $\mathbf{S}_m$ to server;
    \STATE Predict labels for unlabeled nodes in local subgraph $\mathcal{G}_m$.
  \end{algorithmic}
\end{algorithm}

\begin{algorithm}[tb]
  \caption{\textbf{FedSSA} Server Algorithm}
  \label{alg:server}
  \begin{algorithmic}
    \STATE {\bfseries Input:} Number of communication rounds $R$; number of clients $M$; number of classes $C$; the order of bases $K$; coefﬁcients $\{w_m^k\}_{k=0}^K$ from client $m$; inferred class-wise latent distributions $\{q(\mathbf{Z}_m^c)\}_{c=1}^C$ from client $m$; spectral energy measure $\mathbf{S}_m$ from client $m$.
    \STATE {\bfseries Output:} Cluster-level representative distributions $\{\mathcal{N}(\boldsymbol{\mu}_{i}^c, \boldsymbol{\Sigma}_{i}^c)\}_{c=1}^C$; cluster-level coefﬁcients $\{\overline{w}_j^k\}_{k=0}^K$.
    \FOR{each communication round $r$ from 1 {\bfseries to} $R$}
    \FOR{client $m \in \{1, 2, \cdots, M\}$ {\bfseries in parallel}}
    \STATE Perform Algorithm~\ref{alg:client} on client $m$;
    \STATE Receive coefﬁcients $\{w_m^k\}_{k=0}^K$ from client $m$;
    \STATE Receive inferred class-wise latent distributions $\{q(\mathbf{Z}_m^c)=\mathcal{N}(\boldsymbol{\mu}_m^c, \boldsymbol{\Sigma}_m^c)\}_{c=1}^C$ from client $m$;
    \STATE Receive computed spectral energy measure $\mathbf{S}_m$ from client $m$;
    \ENDFOR
    \STATE \textcolor{blue}{\# Semantic Clustering}
    \FOR{each class $c$ from 1 {\bfseries to} $C$}
        \STATE Employ reparameterization trick to obtain $\tilde{\mathbf{Z}}_m^c$, where ${m\in\mathcal{M}}$, $\tilde{\mathbf{Z}}_m^c=\boldsymbol{\mu}_m^c + (\boldsymbol{\Sigma}_m^c)^{\frac{1}{2}} \bm{\epsilon}_m^c$;
        \STATE Cluster $\{\tilde{\mathbf{Z}}_m^c \mid m \in \mathcal{M}\}$ into $K_\mathrm{node}$ clusters to obtain $\{\mathcal{S}_i^c\}_{i=1}^{K_\mathrm{node}}$ via Eq.~\eqref{eq4};
        \STATE Construct cluster-level representative distributions $\{\mathcal{N}(\boldsymbol{\mu}_{i}^c, \boldsymbol{\Sigma}_{i}^c)\}_{i=1}^{K_\mathrm{node}}$ via Eq.~\eqref{eq6};
        \FOR{each client $m \in \mathcal{M}$}
            \STATE Find $i$ such that $m \in \mathcal{S}_i^c$ and record the semantic cluster index of client $m$ as $i$;
            \STATE Send $\mathcal{N}(\boldsymbol{\mu}_{i}^c, \boldsymbol{\Sigma}_{i}^c)$ to client $m$;
        \ENDFOR
    \ENDFOR
    \STATE \textcolor{blue}{\# Structural Clustering}
    \STATE Compute orthonormal bases $\mathbf{Q}_m$ from $\mathbf{S}_m$ via QR decomposition, where $m \in \mathcal{M}$;
    \STATE Measure pairwise Chordal distances $d_{\mathrm{Chordal}}(\mathbf{Q}_m,\mathbf{Q}_n)$ via Eq.~\eqref{Chordal_distance};
    \STATE Cluster clients into $K_\mathrm{struct}$ clusters to obtain $\{\mathcal{T}_j\}_{j=1}^{K_\mathrm{struct}}$ via Eq.~\eqref{eq9};
    \FOR{each cluster $j$ from 1 {\bfseries to} $K_\mathrm{struct}$}
        \STATE Compute cluster-level coefficients $\{\overline{w}_j^k\}_{k=0}^K$ via Eq.~\eqref{eq10};
    \ENDFOR
    \FOR{each client $m \in \mathcal{M}$}
        \STATE Find $j$ such that $m \in \mathcal{T}_j$ and record the structural cluster index of client $m$ as $j$;
        \STATE Send $\{\overline{w}_j^k\}_{k=0}^K$ to client $m$.
    \ENDFOR
    \ENDFOR
  \end{algorithmic}
\end{algorithm}

\section{Additional Experiments}
In this section, we provide additional experiments. First, we provide additional experimental results on eleven datasets under overlapping partitioning setting. Second, we provide supplementary ablation studies on eleven datasets under both non-overlapping and overlapping partitioning settings. Third, we present additional convergence curves on other datasets under two representative settings, namely non-overlapping with 10 clients and overlapping with 30 clients. Fourth, we provide additional sensitivity analysis on hyperparameters. Fifth, we present additional case studies on other datasets to illustrate the effectiveness of our FedSSA in mitigating heterogeneity among clients. Finally, we analyze computational efficiency by reporting time consumption (seconds) per communication round. 

\subsection{Additional Experiments on Overlapping Subgraph Partitioning Setting}
\label{additional_tables}
Due to space limitations, we only present experimental results under non-overlapping partitioning setting in the main paper. Here, we provide additional experimental results on the same eleven datasets under overlapping partitioning setting in~\cref{table2} and~\cref{table4}. We can observe that our proposed FedSSA consistently outperforms all baseline methods on all datasets with varying numbers of clients. For example, in~\cref{table4}, the average accuracy of FedSSA is 63.90\%, which is 1.40\% higher than the second-best method (\textit{i.e.}, FedIIH). This further validates the effectiveness of our proposed FedSSA in handling graph data heterogeneity in GFL scenarios.

  \begin{table*}[t]
      \centering
          \scriptsize
          \caption{Accuracy (\%) of methods on six \textbf{homophilic} graph datasets under \textbf{overlapping} subgraph partitioning setting.}
          \label{table2}
    \renewcommand{\arraystretch}{0.9} 
           \scalebox{0.8}{
      \begin{tabular}{lcccccccccc}
      \hline
      \rowcolor{gray!50}
      \multicolumn{1}{c}{} & \multicolumn{3}{c}{Cora}                                                    & \multicolumn{3}{c}{CiteSeer}                                                      & \multicolumn{3}{c}{PubMed}                                                  & -              \\ \cline{2-11} 
      Methods              & 10 Clients              & 30 Clients              & 50 Clients              & 10 Clients                    & 30 Clients              & 50 Clients              & 10 Clients              & 30 Clients              & 50 Clients              & -              \\ \hline
      \rowcolor{gray!20}
      Local                & 73.98$\pm$0.25          & 71.65$\pm$0.12          & 76.63$\pm$0.10          & 65.12$\pm$0.08                & 64.54$\pm$0.42          & 66.68$\pm$0.44          & 82.32$\pm$0.07          & 80.72$\pm$0.16          & 80.54$\pm$0.11          & -              \\ \hline
      FedAvg~\cite{mcmahan2017communication}               & 76.48$\pm$0.36          & 53.99$\pm$0.98          & 53.99$\pm$4.53          & 69.48$\pm$0.15                & 66.15$\pm$0.64          & 66.51$\pm$1.00          & 82.67$\pm$0.11          & 82.05$\pm$0.12          & 80.24$\pm$0.35          & -              \\
      \rowcolor{gray!20}
      FedProx~\cite{MLSYS2020_1f5fe839}              & 77.85$\pm$0.50          & 51.38$\pm$1.74          & 56.27$\pm$9.04          & 69.39$\pm$0.35                & 66.11$\pm$0.75          & 66.53$\pm$0.43          & 82.63$\pm$0.17          & 82.13$\pm$0.13          & 80.50$\pm$0.46          & -              \\
      FedPer~\cite{Arivazhagan2019}               & 78.73$\pm$0.31          & 74.18$\pm$0.24          & 74.42$\pm$0.37          & 69.81$\pm$0.28                & 65.19$\pm$0.81          & 67.64$\pm$0.44          & 85.31$\pm$0.06          & 84.35$\pm$0.38          & 83.94$\pm$0.10          & -              \\
      \rowcolor{gray!20}
      GCFL~\cite{NEURIPS2021_9c6947bd}                 & 78.84$\pm$0.26          & 73.41$\pm$0.27          & 76.63$\pm$0.16          & 69.48$\pm$0.39                & 64.92$\pm$0.18          & 65.98$\pm$0.30          & 83.59$\pm$0.25          & 80.77$\pm$0.12          & 81.36$\pm$0.11          & -              \\
      FedGNN~\cite{wu2021fedgnn}               & 70.63$\pm$0.83          & 61.38$\pm$2.33          & 56.91$\pm$0.82          & 68.72$\pm$0.39                & 59.98$\pm$1.52          & 58.98$\pm$0.98          & 84.25$\pm$0.07          & 82.02$\pm$0.22          & 81.85$\pm$0.10          & -              \\
      \rowcolor{gray!20}
      FedSage+\cite{NEURIPS2021_34adeb8e}             & 77.52$\pm$0.46          & 51.99$\pm$0.42          & 55.48$\pm$11.5          & 68.75$\pm$0.48                & 65.97$\pm$0.02          & 65.93$\pm$0.30          & 82.77$\pm$0.08          & 82.14$\pm$0.11          & 80.31$\pm$0.68          & -              \\
      FED-PUB~\cite{baek2023personalized}              & 79.60$\pm$0.12          & 75.40$\pm$0.54          & \underline{77.84$\pm$0.23}          & 70.58$\pm$0.20                & 68.33$\pm$0.45          & 69.21$\pm$0.30          & 85.70$\pm$0.08          & 85.16$\pm$0.10          & 84.84$\pm$0.12          & -              \\
      \rowcolor{gray!20}
      FedGTA~\cite{li2023fedgta}               & 76.42$\pm$0.62          & 75.63$\pm$0.33          & 77.69$\pm$0.14          & 70.43$\pm$0.08 & 71.71$\pm$0.33          & 69.19$\pm$0.32          & 85.34$\pm$0.42          & 84.99$\pm$0.05          & 84.47$\pm$0.06          & -              \\
      AdaFGL~\cite{li2024adafgl}               & 78.50$\pm$0.19          & 75.80$\pm$0.23          & 74.41$\pm$0.00          & 72.63$\pm$0.15 & 68.18$\pm$0.31          & 62.90$\pm$0.75          & 85.58$\pm$0.23          & 85.85$\pm$0.41          & 84.45$\pm$0.07          & -              \\
	    \rowcolor{gray!20}
      FedTAD~\cite{zhu2024fedtad}              & 79.29$\pm$0.78          & 60.92$\pm$2.17          & 68.08$\pm$0.44          & \textbf{73.47$\pm$0.16}            & 67.74$\pm$0.57          & 63.51$\pm$0.68          & 82.98$\pm$0.20          & 82.11$\pm$0.15          & 81.63$\pm$0.19          & -              \\ 
      FedIIH~\cite{wentao2025fediih}        & \underline{80.57$\pm$0.23}  & \underline{76.82$\pm$0.24} & \textbf{78.58$\pm$0.25} & \underline{73.16$\pm$0.18}       & \underline{72.27$\pm$0.21}    & \underline{69.56$\pm$0.11} & \underline{85.87$\pm$0.03} & \underline{86.65$\pm$0.11} & \textbf{85.65$\pm$0.12} & -              \\ \hline
      \rowcolor{yellow!30}
      FedSSA (Ours)                         & \textbf{81.02$\pm$0.09}    & \textbf{76.89$\pm$0.06} & \textbf{78.58$\pm$0.15}          & 72.19$\pm$0.08                & \textbf{72.38$\pm$0.13}    & \textbf{69.68$\pm$0.10} & \textbf{86.40$\pm$0.09} & \textbf{86.83$\pm$0.06} & \underline{85.20$\pm$0.04} & -              \\ \hline
      \rowcolor{gray!50}
                           & \multicolumn{3}{c}{Amazon-Computer}                                         & \multicolumn{3}{c}{Amazon-Photo}                                                  & \multicolumn{3}{c}{ogbn-arxiv}                                              & Avg.            \\ \cline{2-11} 
      Methods              & 10 Clients              & 30 Clients              & 50 Clients              & 10 Clients                    & 30 Clients              & 50 Clients              & 10 Clients              & 30 Clients              & 50 Clients              & All              \\ \hline
      \rowcolor{gray!20}
      Local                & 88.50$\pm$0.20          & 86.66$\pm$0.00          & 87.04$\pm$0.02          & 92.17$\pm$0.12                & 90.16$\pm$0.12          & 90.42$\pm$0.15          & 62.52$\pm$0.07          & 61.32$\pm$0.04          & 60.04$\pm$0.04          & 76.72          \\ \hline
      FedAvg~\cite{mcmahan2017communication}               & 88.99$\pm$0.19          & 83.37$\pm$0.47          & 76.34$\pm$0.12          & 92.91$\pm$0.07                & 89.30$\pm$0.22          & 74.19$\pm$0.57          & 63.56$\pm$0.02          & 59.72$\pm$0.06          & 60.94$\pm$0.24          & 73.38          \\
      \rowcolor{gray!20}
      FedProx~\cite{MLSYS2020_1f5fe839}              & 88.84$\pm$0.20          & 83.84$\pm$0.89          & 76.60$\pm$0.47          & 92.67$\pm$0.19                & 89.17$\pm$0.40          & 72.36$\pm$2.06          & 63.52$\pm$0.11          & 59.86$\pm$0.16          & 61.12$\pm$0.04          & 73.38          \\
      FedPer~\cite{Arivazhagan2019}               & 89.30$\pm$0.04          & 87.99$\pm$0.23          & 88.22$\pm$0.27          & 92.88$\pm$0.24                & 91.23$\pm$0.16          & 90.92$\pm$0.38          & 63.97$\pm$0.08          & 62.29$\pm$0.04          & 61.24$\pm$0.11          & 78.42          \\
      \rowcolor{gray!20}
      GCFL~\cite{NEURIPS2021_9c6947bd}                 & 89.01$\pm$0.22          & 87.24$\pm$0.09          & 87.02$\pm$0.22          & 92.45$\pm$0.10                & 90.58$\pm$0.11          & 90.54$\pm$0.08          & 63.24$\pm$0.02          & 61.66$\pm$0.10          & 60.32$\pm$0.01          & 77.61          \\
      FedGNN~\cite{wu2021fedgnn}               & 88.15$\pm$0.09          & 87.00$\pm$0.10          & 83.96$\pm$0.88          & 91.47$\pm$0.11                & 87.91$\pm$1.34          & 78.90$\pm$6.46          & 63.08$\pm$0.19          & 60.09$\pm$0.04          & 60.51$\pm$0.11          & 73.66          \\
      \rowcolor{gray!20}
      FedSage+\cite{NEURIPS2021_34adeb8e}             & 89.24$\pm$0.15          & 81.33$\pm$1.20          & 76.72$\pm$0.39          & 92.76$\pm$0.05                & 88.69$\pm$0.99          & 72.41$\pm$1.36          & 63.24$\pm$0.02          & 59.90$\pm$0.12          & 60.95$\pm$0.09          & 73.12          \\
      FED-PUB~\cite{baek2023personalized}              & 89.98$\pm$0.08          & 89.15$\pm$0.06          & 88.76$\pm$0.14          & 93.22$\pm$0.07                & 92.01$\pm$0.07          & 91.71$\pm$0.11          & 64.18$\pm$0.04          & 63.34$\pm$0.12          & 62.55$\pm$0.12          & 79.53          \\
      \rowcolor{gray!20}
      FedGTA~\cite{li2023fedgta}               & 90.10$\pm$0.18          & 88.79$\pm$0.27          & 88.15$\pm$0.21          & 93.13$\pm$0.14                & 92.49$\pm$0.06          & 91.77$\pm$0.06          & 55.98$\pm$0.09          & 56.76$\pm$0.07          & 57.89$\pm$0.09          & 78.39          \\
      AdaFGL~\cite{li2024adafgl}        & 80.49$\pm$0.00          & 80.42$\pm$0.00          & 82.12$\pm$0.00          & 89.24$\pm$0.00          & 88.34$\pm$0.00          & 87.68$\pm$0.00          & 56.81$\pm$0.06                   & 55.17$\pm$0.00                   & 54.82$\pm$0.00                         & 75.74          \\ 
      \rowcolor{gray!20}
      FedTAD~\cite{zhu2024fedtad}        & 79.09$\pm$5.63          & 79.48$\pm$0.85          & 77.05$\pm$0.07          & 81.94$\pm$3.09          & 86.58$\pm$1.75          & 84.38$\pm$1.33          & 58.45$\pm$0.15                   & 57.75$\pm$0.54                   & 56.52$\pm$0.14                         & 73.39          \\
      FedIIH~\cite{wentao2025fediih}        & \underline{90.15$\pm$0.04} & \underline{89.56$\pm$0.19}                 & \textbf{89.99$\pm$0.00} & \underline{93.38$\pm$0.00}                & \underline{94.17$\pm$0.04} & \underline{93.25$\pm$0.16} & \underline{66.69$\pm$0.09}          & \underline{66.10$\pm$0.03}          & \underline{65.67$\pm$0.06} & \underline{81.01} \\ \hline
      \rowcolor{yellow!30}
      FedSSA (Ours)                       & \textbf{90.41$\pm$0.07} & \textbf{89.65$\pm$0.12} & \underline{89.34$\pm$0.11} & \textbf{93.72$\pm$0.15}       & \textbf{94.44$\pm$0.11} & \textbf{93.36$\pm$0.14} & \textbf{67.44$\pm$0.12} & \textbf{66.38$\pm$0.13} & \textbf{65.73$\pm$0.07} & \textbf{81.09} \\ \hline
      \end{tabular}
      }
  \end{table*}

  \begin{table*}[t]
      \centering
      \scriptsize
      \caption{Comparisons on five \textbf{heterophilic} graph datasets under \textbf{overlapping} subgraph partitioning setting. Accuracy (\%) is reported for \textit{Roman-empire} and \textit{Amazon-ratings}, and AUC (\%) is reported for \textit{Minesweeper}, \textit{Tolokers}, and \textit{Questions}.}
      \label{table4}
    \renewcommand{\arraystretch}{0.9} 
       \scalebox{0.8}{
  \begin{tabular}{lcccccccccc}
  \hline
  \rowcolor{gray!50}
  \textbf{}     & \multicolumn{3}{c}{Roman-empire}                                            & \multicolumn{3}{c}{Amazon-ratings}                                          & \multicolumn{3}{c}{Minesweeper}                                                   & -              \\ \cline{2-11} 
  Methods       & 10 Clients              & 30 Clients              & 50 Clients              & 10 Clients              & 30 Clients              & 50 Clients              & 10 Clients              & 30 Clients              & 50 Clients                    & -              \\ \hline
  \rowcolor{gray!20}
  Local         & 39.47$\pm$0.03          & 34.43$\pm$0.14          & 31.28$\pm$0.18          & 41.43$\pm$0.04          & 41.81$\pm$0.14          & 42.57$\pm$0.12          & 67.98$\pm$0.07          & 64.39$\pm$0.10          & 62.73$\pm$0.23                & -              \\ \hline
  FedAvg~\cite{mcmahan2017communication}        & 40.89$\pm$0.25          & 38.66$\pm$0.08          & 36.71$\pm$0.20          & 39.86$\pm$0.06          & 41.40$\pm$0.02          & 41.02$\pm$0.16          & 69.06$\pm$0.07          & 67.95$\pm$0.04          & 66.89$\pm$0.08                & -              \\
  \rowcolor{gray!20}
  FedProx~\cite{MLSYS2020_1f5fe839}       & 36.63$\pm$0.14          & 35.31$\pm$0.17          & 33.61$\pm$0.59          & 37.53$\pm$0.09          & 37.43$\pm$0.08          & 37.40$\pm$0.07          & 68.27$\pm$0.05          & 66.75$\pm$0.19          & 66.03$\pm$0.16                & -              \\
  FedPer~\cite{Arivazhagan2019}        & 23.66$\pm$3.27          & 23.27$\pm$3.09          & 22.23$\pm$3.58          & 32.33$\pm$4.23          & 31.58$\pm$0.54          & 34.48$\pm$2.25          & 61.85$\pm$1.02          & 60.13$\pm$1.38          & 60.06$\pm$3.61                & -              \\
  \rowcolor{gray!20}
  GCFL~\cite{NEURIPS2021_9c6947bd}          & 39.97$\pm$0.89          & 38.63$\pm$0.49          & 36.87$\pm$0.31          & 39.54$\pm$0.41          & 42.12$\pm$0.11          & 41.27$\pm$0.22          & 69.16$\pm$0.13          & 68.02$\pm$0.10          & 66.93$\pm$1.34               & -              \\
  FedGNN~\cite{wu2021fedgnn}        & 37.46$\pm$0.12          & 36.47$\pm$0.24          & 34.92$\pm$0.26          & 36.58$\pm$0.16          & 36.77$\pm$0.12          & 36.95$\pm$0.15          & 68.59$\pm$0.21          & 67.30$\pm$0.17          & 66.41$\pm$0.23                & -              \\
  \rowcolor{gray!20}
  FedSage+\cite{NEURIPS2021_34adeb8e}      & 57.48$\pm$0.00          & 42.55$\pm$0.00          & 37.13$\pm$0.00          & 36.86$\pm$0.00          & 36.71$\pm$0.00          & 37.03$\pm$0.00          & \textbf{76.64$\pm$0.00} & \underline{70.56$\pm$0.00} &  \underline{70.34$\pm$0.00} & -              \\
  FED-PUB~\cite{baek2023personalized}       & 43.80$\pm$0.25          & 40.46$\pm$0.16          & 37.73$\pm$0.09          & 42.25$\pm$0.25    &  42.30$\pm$0.06    & \underline{42.88$\pm$0.34} & 69.11$\pm$0.13          & 67.76$\pm$0.24          & 67.52$\pm$0.14                & -              \\
  \rowcolor{gray!20}
  FedGTA~\cite{li2023fedgta}        & 59.86$\pm$0.04    & 58.32$\pm$0.09    & 57.57$\pm$0.21    & 40.81$\pm$0.24          & 39.44$\pm$0.06          & 39.37$\pm$0.04          & 70.64$\pm$0.40          & 67.99$\pm$1.60          & 67.20$\pm$1.35                & -              \\
  AdaFGL~\cite{li2024adafgl}        & 64.44$\pm$0.03          & 61.77$\pm$0.02          & 59.55$\pm$0.01          & 39.39$\pm$0.05          & 41.19$\pm$0.15          & 40.71$\pm$0.25          & 69.07$\pm$0.72                   & 68.34$\pm$1.82                   & 66.80$\pm$1.31                         & -          \\
  \rowcolor{gray!20}
  FedTAD~\cite{zhu2024fedtad}        & 44.14$\pm$0.13          & 41.94$\pm$0.18          & 40.82$\pm$0.01          & 39.53$\pm$0.17          & 40.69$\pm$0.13          & 40.58$\pm$0.26          & 69.27$\pm$0.33                   & 68.43$\pm$0.05                   & 67.23$\pm$0.08                         & -          \\
  FedIIH~\cite{wentao2025fediih} & \underline{65.48$\pm$0.12} & \underline{63.32$\pm$0.06} & \underline{62.42$\pm$0.10}          & \underline{42.63$\pm$0.02}           & \underline{42.40$\pm$0.05}          & 42.65$\pm$0.21              & 69.35$\pm$0.25    &  68.09$\pm$0.26    &  67.37$\pm$0.14 & -         \\ \hline
  \rowcolor{yellow!30}
 FedSSA (Ours) & \textbf{65.66$\pm$0.07} & \textbf{63.80$\pm$0.09} & \textbf{62.75$\pm$0.14} & \textbf{42.83$\pm$0.06} & \textbf{42.52$\pm$0.10} & \textbf{42.97$\pm$0.15}    & \underline{75.65$\pm$0.11}    & \textbf{72.60$\pm$0.06}    & \textbf{71.31$\pm$0.05} & -              \\ \hline
 \rowcolor{gray!50}
                & \multicolumn{3}{c}{Tolokers}                                                & \multicolumn{3}{c}{Questions}                                               & \multicolumn{4}{c}{Avg.}                                                                           \\ \cline{2-11} 
  Methods       & 10 Clients              & 30 Clients              & 50 Clients              & 10 Clients              & 30 Clients              & 50 Clients              & 10 Clients              & 30 Clients              & 50 Clients                    & All            \\ \hline
  \rowcolor{gray!20}
  Local         & 73.83$\pm$0.03          & 69.01$\pm$0.31          & 66.63$\pm$0.20          & 63.17$\pm$0.02          & 57.17$\pm$0.08          & 56.13$\pm$0.02          & 57.18                   & 53.36                   & 51.87                         & 54.14          \\ \hline
  FedAvg~\cite{mcmahan2017communication}        & 72.99$\pm$0.40          & 58.51$\pm$0.27          & 55.47$\pm$0.42          & 62.80$\pm$0.63          & 58.88$\pm$0.18          & 60.78$\pm$0.27          & 57.12                   & 53.08                   & 52.17                         & 54.12          \\
  \rowcolor{gray!20}
  FedProx~\cite{MLSYS2020_1f5fe839}       & 54.49$\pm$1.69          & 45.59$\pm$0.41          & 41.49$\pm$0.45          & 52.53$\pm$0.34          & 51.54$\pm$0.41          & 50.72$\pm$0.40          & 49.89                   & 47.32                   & 45.85                         & 47.69          \\
  FedPer~\cite{Arivazhagan2019}        & 39.60$\pm$0.11          & 59.44$\pm$0.79          & 41.92$\pm$0.06          & 61.31$\pm$0.29          & 53.41$\pm$1.53          & 50.29$\pm$0.10          & 43.75                   & 45.57                   & 41.80                         & 43.70          \\
  \rowcolor{gray!20}
  GCFL~\cite{NEURIPS2021_9c6947bd}          & 70.61$\pm$0.55          & 59.72$\pm$0.50          & 57.64$\pm$0.71          & 62.84$\pm$0.60          & 59.46$\pm$0.68          & 60.24$\pm$0.41          & 56.42                   & 53.59                   & 52.59                         & 54.20          \\
  FedGNN~\cite{wu2021fedgnn}        & 56.21$\pm$1.20          & 46.85$\pm$0.31          & 42.18$\pm$0.45          & 53.25$\pm$0.15          & 51.90$\pm$0.15          & 51.22$\pm$0.14          & 50.42                   & 47.86                   & 46.34                         & 48.20          \\
  \rowcolor{gray!20}
  FedSage+\cite{NEURIPS2021_34adeb8e}      & \textbf{74.54$\pm$0.00} &  70.88$\pm$0.00    &  69.61$\pm$0.00    & 64.22$\pm$0.00          &  65.34$\pm$0.00    &  62.76$\pm$0.00    & 61.95             & 57.21             & 55.37              & 58.18    \\
  FED-PUB~\cite{baek2023personalized}       & 74.17$\pm$0.29    & 70.35$\pm$0.54          & 66.80$\pm$0.85          &  65.39$\pm$2.44    & 58.38$\pm$1.19          & 60.73$\pm$0.74          & 58.94                   & 55.85                   & 55.13                         & 56.64          \\
  \rowcolor{gray!20}
  FedGTA~\cite{li2023fedgta}        & 70.34$\pm$1.53          & 59.59$\pm$1.10          & 56.12$\pm$1.48          & 63.20$\pm$1.27          & 58.11$\pm$1.06          & 60.99$\pm$0.77          & 60.97                   & 56.69                   & 56.25                         & 57.97          \\
  AdaFGL~\cite{li2024adafgl}        & 70.01$\pm$1.91          & 58.94$\pm$1.11          & 56.25$\pm$1.35          & 61.90$\pm$0.80          & 58.93$\pm$2.18          & 60.68$\pm$0.63          & 60.96                   & 57.83                   & 56.80                         & 58.53          \\
  \rowcolor{gray!20}
  FedTAD~\cite{zhu2024fedtad}        & 69.34$\pm$1.26          & 62.11$\pm$0.27          & 56.39$\pm$0.52          & 61.96$\pm$0.54         & 59.24$\pm$0.36          & 60.24$\pm$0.92          & 56.85                   & 54.48                   & 53.05                         & 54.79          \\ 
  FedIIH~\cite{wentao2025fediih}     & 71.67$\pm$0.02          & \underline{71.69$\pm$0.12}          & \underline{69.99$\pm$0.03}      & \underline{68.79$\pm$0.09}   & \textbf{66.98$\pm$0.04} & \underline{64.73$\pm$0.35}          & \underline{63.58}          & \underline{62.50}          & \underline{61.43}                & \underline{62.50} \\ \hline
  \rowcolor{yellow!30}
  FedSSA (Ours)        & \underline{74.33$\pm$0.09}          & \textbf{72.27$\pm$0.07} & \textbf{71.03$\pm$0.09}         & \textbf{69.39$\pm$0.05}    & \underline{66.43$\pm$0.13}  & \textbf{64.94$\pm$0.12} & \textbf{65.57}          & \textbf{63.52}          & \textbf{62.60}                & \textbf{63.90} \\ \hline
      \end{tabular}
       }
\end{table*}

\subsection{Additional Ablation Studies}\label{ap_ablation_study}
To further evaluate the contribution of each key component in our proposed FedSSA, we carry out ablation studies on eleven datasets under both non-overlapping and overlapping partitioning settings. Specifically, since there are two essential components in our FedSSA (\textit{i.e.}, sharing semantic knowledge and sharing structural knowledge), we employ `Semantic' and `Structural' to represent them, respectively. Based on the inclusion or exclusion of these components, we consider four different combinations, which are shown in~\cref{table_ablation}. We can observe that performances consistently degrade across all datasets when any individual component is removed. This demonstrates that each component is essential and contributes significantly to the overall effectiveness of our proposed FedSSA.

\begin{table*}[t]
    \centering
    \scriptsize
    \caption{Ablation studies are conducted under both non-overlapping and overlapping partitioning settings on eleven datasets.}
    \label{table_ablation}
    \renewcommand{\arraystretch}{0.9} 
    \scalebox{0.9}{
    \begin{tabular}{cccccccc}
      \hline
  \rowcolor{gray!50}
           &               & \multicolumn{6}{c}{Cora}                                                                                                                                                                                                                                                                                                                                                                                                       \\ \hline
           Semantic  & Structural & \begin{tabular}[c]{@{}c@{}}non-overlapping\\ 5 clients\end{tabular} & \begin{tabular}[c]{@{}c@{}}non-overlapping\\ 10 clients\end{tabular} & \begin{tabular}[c]{@{}c@{}}non-overlapping\\ 20 clients\end{tabular} & \begin{tabular}[c]{@{}c@{}}overlapping\\ 10 clients\end{tabular} & \begin{tabular}[c]{@{}c@{}}overlapping\\ 30 clients\end{tabular} & \begin{tabular}[c]{@{}c@{}}overlapping\\ 50 clients\end{tabular} \\ \hline
  \textcolor{teal}{\CheckmarkBold}  & \textcolor{red}{\XSolidBrush}      & 80.07$\pm$0.50 ($\downarrow$ 4.60)                                  & 78.52$\pm$0.69 ($\downarrow$ 3.80)                                   & 80.44$\pm$0.11 ($\downarrow$ 3.69)                                   & 76.91$\pm$0.89 ($\downarrow$ 4.11)                               & 72.33$\pm$0.39 ($\downarrow$ 4.56)                                   & 75.11$\pm$0.13 ($\downarrow$ 3.47)                               \\ \hline
  \textcolor{red}{\XSolidBrush} & \textcolor{teal}{\CheckmarkBold}       & 79.32$\pm$0.32 ($\downarrow$ 5.35)                                 & 78.30$\pm$1.11 ($\downarrow$ 4.02)                                   & 79.89$\pm$0.09 ($\downarrow$ 4.24)                                   & 75.99$\pm$1.00 ($\downarrow$ 5.03)                               & 72.17$\pm$0.51 ($\downarrow$ 4.72)                                   & 74.80$\pm$0.21 ($\downarrow$ 3.78)                               \\ \hline
  \textcolor{red}{\XSolidBrush} & \textcolor{red}{\XSolidBrush}      & 78.85$\pm$0.15 ($\downarrow$ 5.82)                                 & 78.03$\pm$1.04 ($\downarrow$ 4.29)                                   & 79.63$\pm$0.56 ($\downarrow$ 4.50)                                   & 75.39$\pm$0.55 ($\downarrow$ 5.63)                               & 71.84$\pm$0.74 ($\downarrow$ 5.05)                                   & 74.65$\pm$0.19 ($\downarrow$ 3.93)                               \\ \hline
  \textcolor{teal}{\CheckmarkBold}  & \textcolor{teal}{\CheckmarkBold}       & \textbf{84.67$\pm$0.05}                                             & \textbf{82.32$\pm$0.04}                                              & \textbf{84.13$\pm$0.09}                                              & \textbf{81.02$\pm$0.09}                                          & \textbf{76.89$\pm$0.06}                                              & \textbf{78.58$\pm$0.15}                                          \\ \hline
    \rowcolor{gray!50}
             &               & \multicolumn{6}{c}{CiteSeer}                                                                                                                                                                                                                                                                                                                                                                                                       \\ \hline
             Semantic  & Structural & \begin{tabular}[c]{@{}c@{}}non-overlapping\\ 5 clients\end{tabular} & \begin{tabular}[c]{@{}c@{}}non-overlapping\\ 10 clients\end{tabular} & \begin{tabular}[c]{@{}c@{}}non-overlapping\\ 20 clients\end{tabular} & \begin{tabular}[c]{@{}c@{}}overlapping\\ 10 clients\end{tabular} & \begin{tabular}[c]{@{}c@{}}overlapping\\ 30 clients\end{tabular} & \begin{tabular}[c]{@{}c@{}}overlapping\\ 50 clients\end{tabular} \\ \hline
    \textcolor{teal}{\CheckmarkBold}  & \textcolor{red}{\XSolidBrush}      & 69.76$\pm$1.28 ($\downarrow$ 3.30)                                  & 74.08$\pm$0.23 ($\downarrow$ 3.57)                                   & 71.48$\pm$0.94 ($\downarrow$ 2.61)                                   & 68.11$\pm$0.54 ($\downarrow$ 4.08)                               & 68.36$\pm$0.08 ($\downarrow$ 4.02)                                   & 66.38$\pm$1.13 ($\downarrow$ 3.30)                               \\ \hline
    \textcolor{red}{\XSolidBrush} & \textcolor{teal}{\CheckmarkBold}       & 69.41$\pm$0.19 ($\downarrow$ 3.65)                                 & 73.05$\pm$0.68 ($\downarrow$ 4.60)                                   & 70.82$\pm$1.27 ($\downarrow$ 3.27)                                   & 67.72$\pm$1.09 ($\downarrow$ 4.47)                               & 67.42$\pm$0.11 ($\downarrow$ 4.96)                                   & 65.84$\pm$0.36 ($\downarrow$ 3.84)                               \\ \hline
    \textcolor{red}{\XSolidBrush} & \textcolor{red}{\XSolidBrush}      & 69.20$\pm$0.08 ($\downarrow$ 3.86)                                 & 72.80$\pm$0.44 ($\downarrow$ 4.85)                                   & 69.51$\pm$1.18 ($\downarrow$ 4.58)                                   & 66.78$\pm$0.58 ($\downarrow$ 5.41)                               & 67.14$\pm$0.51 ($\downarrow$ 5.24)                                   & 65.63$\pm$0.30 ($\downarrow$ 4.05)                               \\ \hline
    \textcolor{teal}{\CheckmarkBold}  & \textcolor{teal}{\CheckmarkBold}       & \textbf{73.06$\pm$0.08}                                             & \textbf{77.65$\pm$0.10}                                              & \textbf{74.09$\pm$0.09}                                              & \textbf{72.19$\pm$0.08}                                          & \textbf{72.38$\pm$0.13}                                              & \textbf{69.68$\pm$0.10}                                          \\ \hline
    \rowcolor{gray!50}
             &               & \multicolumn{6}{c}{PubMed}                                                                                                                                                                                                                                                                                                                                                                                               \\ \hline
             Semantic  & Structural & \begin{tabular}[c]{@{}c@{}}non-overlapping\\ 5 clients\end{tabular} & \begin{tabular}[c]{@{}c@{}}non-overlapping\\ 10 clients\end{tabular} & \begin{tabular}[c]{@{}c@{}}non-overlapping\\ 20 clients\end{tabular} & \begin{tabular}[c]{@{}c@{}}overlapping\\ 10 clients\end{tabular} & \begin{tabular}[c]{@{}c@{}}overlapping\\ 30 clients\end{tabular} & \begin{tabular}[c]{@{}c@{}}overlapping\\ 50 clients\end{tabular} \\ \hline
    \textcolor{teal}{\CheckmarkBold}  & \textcolor{red}{\XSolidBrush}      & 85.05$\pm$0.30 ($\downarrow$ 3.06)                                  & 84.73$\pm$0.18 ($\downarrow$ 3.05)                                   & 84.31$\pm$0.14 ($\downarrow$ 3.06)                                   & 83.54$\pm$0.15 ($\downarrow$ 2.86)                               & 82.47$\pm$0.73 ($\downarrow$ 4.36)                                   & 81.73$\pm$0.30 ($\downarrow$ 3.47)                               \\ \hline
    \textcolor{red}{\XSolidBrush} & \textcolor{teal}{\CheckmarkBold}       & 84.64$\pm$0.56 ($\downarrow$ 3.47)                                 & 83.31$\pm$0.32 ($\downarrow$ 4.47)                                   & 83.65$\pm$0.24 ($\downarrow$ 3.72)                                   & 82.40$\pm$0.03 ($\downarrow$ 4.00)                               & 81.70$\pm$0.53 ($\downarrow$ 5.13)                                   & 81.22$\pm$0.76 ($\downarrow$ 3.98)                               \\ \hline
    \textcolor{red}{\XSolidBrush} & \textcolor{red}{\XSolidBrush}      & 84.37$\pm$0.16 ($\downarrow$ 3.74)                                 & 83.12$\pm$0.20 ($\downarrow$ 4.66)                                   & 83.24$\pm$0.28 ($\downarrow$ 4.13)                                   & 81.79$\pm$0.29 ($\downarrow$ 4.61)                               & 80.95$\pm$0.73 ($\downarrow$ 5.88)                                   & 80.72$\pm$0.65 ($\downarrow$ 4.48)                               \\ \hline
    \textcolor{teal}{\CheckmarkBold}  & \textcolor{teal}{\CheckmarkBold}       & \textbf{88.11$\pm$0.07}                                             & \textbf{87.78$\pm$0.13}                                              & \textbf{87.37$\pm$0.14}                                              & \textbf{86.40$\pm$0.09}                                          & \textbf{86.83$\pm$0.06}                                             & \textbf{85.20$\pm$0.04}                                          \\ \hline

    \rowcolor{gray!50}
             &               & \multicolumn{6}{c}{Amazon-Computer}                                                                                                                                                                                                                                                                                                                                                                                               \\ \hline
             Semantic  & Structural & \begin{tabular}[c]{@{}c@{}}non-overlapping\\ 5 clients\end{tabular} & \begin{tabular}[c]{@{}c@{}}non-overlapping\\ 10 clients\end{tabular} & \begin{tabular}[c]{@{}c@{}}non-overlapping\\ 20 clients\end{tabular} & \begin{tabular}[c]{@{}c@{}}overlapping\\ 10 clients\end{tabular} & \begin{tabular}[c]{@{}c@{}}overlapping\\ 30 clients\end{tabular} & \begin{tabular}[c]{@{}c@{}}overlapping\\ 50 clients\end{tabular} \\ \hline
    \textcolor{teal}{\CheckmarkBold}  & \textcolor{red}{\XSolidBrush}      & 88.53$\pm$0.10 ($\downarrow$ 2.56)                                  & 87.61$\pm$0.22 ($\downarrow$ 3.69)                                   & 88.88$\pm$0.84 ($\downarrow$ 1.74)                                   & 87.62$\pm$0.27 ($\downarrow$ 2.79)                               & 87.06$\pm$0.11 ($\downarrow$ 2.59)                                   & 86.29$\pm$0.33 ($\downarrow$ 3.05)                               \\ \hline
    \textcolor{red}{\XSolidBrush} & \textcolor{teal}{\CheckmarkBold}       & 88.18$\pm$0.74 ($\downarrow$ 2.91)                                 & 86.95$\pm$0.79 ($\downarrow$ 4.35)                                   & 88.42$\pm$0.73 ($\downarrow$ 2.20)                                   & 87.40$\pm$0.40 ($\downarrow$ 3.01)                               & 86.32$\pm$0.20 ($\downarrow$ 3.33)                                   & 85.32$\pm$0.71 ($\downarrow$ 4.02)                               \\ \hline
    \textcolor{red}{\XSolidBrush} & \textcolor{red}{\XSolidBrush}      & 87.72$\pm$0.13 ($\downarrow$ 3.37)                                 & 86.74$\pm$0.53 ($\downarrow$ 4.56)                                   & 88.32$\pm$0.31 ($\downarrow$ 2.30)                                   & 87.21$\pm$0.23 ($\downarrow$ 3.20)                               & 86.24$\pm$0.55 ($\downarrow$ 3.41)                                   & 85.13$\pm$0.19 ($\downarrow$ 4.21)                               \\ \hline
    \textcolor{teal}{\CheckmarkBold}  & \textcolor{teal}{\CheckmarkBold}       & \textbf{91.09$\pm$0.07}                                             & \textbf{91.30$\pm$0.13}                                              & \textbf{90.62$\pm$0.09}                                              & \textbf{90.41$\pm$0.07}                                          & \textbf{89.65$\pm$0.12}                                             & \textbf{89.34$\pm$0.11}                                          \\ \hline

    \rowcolor{gray!50}
             &               & \multicolumn{6}{c}{Amazon-Photo}                                                                                                                                                                                                                                                                                                                                                                                               \\ \hline
             Semantic  & Structural & \begin{tabular}[c]{@{}c@{}}non-overlapping\\ 5 clients\end{tabular} & \begin{tabular}[c]{@{}c@{}}non-overlapping\\ 10 clients\end{tabular} & \begin{tabular}[c]{@{}c@{}}non-overlapping\\ 20 clients\end{tabular} & \begin{tabular}[c]{@{}c@{}}overlapping\\ 10 clients\end{tabular} & \begin{tabular}[c]{@{}c@{}}overlapping\\ 30 clients\end{tabular} & \begin{tabular}[c]{@{}c@{}}overlapping\\ 50 clients\end{tabular} \\ \hline
    \textcolor{teal}{\CheckmarkBold}  & \textcolor{red}{\XSolidBrush}      & 90.65$\pm$0.09 ($\downarrow$ 3.21)                                  & 91.38$\pm$0.90 ($\downarrow$ 3.24)                                   & 90.60$\pm$0.15 ($\downarrow$ 3.16)                                   & 90.21$\pm$0.60 ($\downarrow$ 3.51)                               & 91.61$\pm$0.72 ($\downarrow$ 2.83)                                   & 90.78$\pm$0.77 ($\downarrow$ 2.58)                               \\ \hline
    \textcolor{red}{\XSolidBrush} & \textcolor{teal}{\CheckmarkBold}       & 90.37$\pm$0.45 ($\downarrow$ 3.49)                                 & 91.01$\pm$0.24 ($\downarrow$ 3.61)                                   & 90.24$\pm$0.44 ($\downarrow$ 3.52)                                   & 89.84$\pm$0.50 ($\downarrow$ 3.88)                               & 91.21$\pm$0.18 ($\downarrow$ 3.23)                                   & 90.31$\pm$0.09 ($\downarrow$ 3.05)                               \\ \hline
    \textcolor{red}{\XSolidBrush} & \textcolor{red}{\XSolidBrush}      & 89.63$\pm$0.29 ($\downarrow$ 4.23)                                 & 90.89$\pm$0.08 ($\downarrow$ 3.73)                                   & 89.85$\pm$0.43 ($\downarrow$ 3.91)                                   & 89.62$\pm$0.27 ($\downarrow$ 4.10)                               & 90.35$\pm$0.52 ($\downarrow$ 4.09)                                   & 89.89$\pm$0.38 ($\downarrow$ 3.47)                               \\ \hline
    \textcolor{teal}{\CheckmarkBold}  & \textcolor{teal}{\CheckmarkBold}       & \textbf{93.86$\pm$0.15}                                             & \textbf{94.62$\pm$0.14}                                              & \textbf{93.76$\pm$0.07}                                              & \textbf{93.72$\pm$0.15}                                          & \textbf{94.44$\pm$0.11}                                             & \textbf{93.36$\pm$0.14}                                          \\ \hline

    \rowcolor{gray!50}
    &               & \multicolumn{6}{c}{ogbn-arxiv}                                                                                                                                                                                                                                                                                                                                                                                               \\ \hline
    Semantic  & Structural & \begin{tabular}[c]{@{}c@{}}non-overlapping\\ 5 clients\end{tabular} & \begin{tabular}[c]{@{}c@{}}non-overlapping\\ 10 clients\end{tabular} & \begin{tabular}[c]{@{}c@{}}non-overlapping\\ 20 clients\end{tabular} & \begin{tabular}[c]{@{}c@{}}overlapping\\ 10 clients\end{tabular} & \begin{tabular}[c]{@{}c@{}}overlapping\\ 30 clients\end{tabular} & \begin{tabular}[c]{@{}c@{}}overlapping\\ 50 clients\end{tabular} \\ \hline
\textcolor{teal}{\CheckmarkBold}  & \textcolor{red}{\XSolidBrush}      & 67.29$\pm$0.50 ($\downarrow$ 3.57)                                  & 66.26$\pm$0.67 ($\downarrow$ 3.21)                                   & 65.53$\pm$0.06 ($\downarrow$ 3.24)                                   & 63.16$\pm$0.14 ($\downarrow$ 4.28)                               & 63.64$\pm$0.86 ($\downarrow$ 2.74)                                   & 62.86$\pm$0.34 ($\downarrow$ 2.87)                               \\ \hline
\textcolor{red}{\XSolidBrush} & \textcolor{teal}{\CheckmarkBold}       & 67.19$\pm$0.17 ($\downarrow$ 3.67)                                 & 66.13$\pm$0.50 ($\downarrow$ 3.34)                                   & 65.40$\pm$0.49 ($\downarrow$ 3.37)                                   & 62.99$\pm$0.38 ($\downarrow$ 4.45)                               & 63.07$\pm$0.83 ($\downarrow$ 3.31)                                   & 62.13$\pm$0.36 ($\downarrow$ 3.60)                               \\ \hline
\textcolor{red}{\XSolidBrush} & \textcolor{red}{\XSolidBrush}      & 66.86$\pm$0.59 ($\downarrow$ 4.00)                                 & 65.96$\pm$0.12 ($\downarrow$ 3.51)                                   & 64.79$\pm$0.08 ($\downarrow$ 3.98)                                   & 62.17$\pm$0.57 ($\downarrow$ 5.27)                               & 62.39$\pm$0.55 ($\downarrow$ 3.99)                                   & 61.65$\pm$0.08 ($\downarrow$ 4.08)                               \\ \hline
\textcolor{teal}{\CheckmarkBold}  & \textcolor{teal}{\CheckmarkBold}       & \textbf{70.86$\pm$0.13}                                             & \textbf{69.47$\pm$0.08}                                              & \textbf{68.77$\pm$0.13}                                              & \textbf{67.44$\pm$0.12}                                          & \textbf{66.38$\pm$0.13}                                             & \textbf{65.73$\pm$0.07}                                          \\ \hline
  \rowcolor{gray!50}
           &               & \multicolumn{6}{c}{Roman-empire}                                                                                                                                                                                                                                                                                                                                                                                               \\ \hline
           Semantic  & Structural & \begin{tabular}[c]{@{}c@{}}non-overlapping\\ 5 clients\end{tabular} & \begin{tabular}[c]{@{}c@{}}non-overlapping\\ 10 clients\end{tabular} & \begin{tabular}[c]{@{}c@{}}non-overlapping\\ 20 clients\end{tabular} & \begin{tabular}[c]{@{}c@{}}overlapping\\ 10 clients\end{tabular} & \begin{tabular}[c]{@{}c@{}}overlapping\\ 30 clients\end{tabular} & \begin{tabular}[c]{@{}c@{}}overlapping\\ 50 clients\end{tabular} \\ \hline
  \textcolor{teal}{\CheckmarkBold}  & \textcolor{red}{\XSolidBrush}      & 63.15$\pm$0.64 ($\downarrow$ 5.52)                                  & 62.67$\pm$0.14 ($\downarrow$ 4.14)                                   & 62.53$\pm$0.20 ($\downarrow$ 2.61)                                   & 59.85$\pm$0.54 ($\downarrow$ 5.81)                               & 60.28$\pm$0.14 ($\downarrow$ 3.52)                                   & 58.70$\pm$0.49 ($\downarrow$ 4.05)                               \\ \hline
  \textcolor{red}{\XSolidBrush} & \textcolor{teal}{\CheckmarkBold}       & 62.86$\pm$0.51 ($\downarrow$ 5.81)                                 & 62.46$\pm$0.13 ($\downarrow$ 4.35)                                   & 61.97$\pm$0.15 ($\downarrow$ 3.17)                                   & 60.48$\pm$0.77 ($\downarrow$ 5.18)                               & 59.60$\pm$0.17 ($\downarrow$ 4.20)                                   & 58.37$\pm$0.13 ($\downarrow$ 4.38)                               \\ \hline
  \textcolor{red}{\XSolidBrush} & \textcolor{red}{\XSolidBrush}      & 62.37$\pm$0.74 ($\downarrow$ 6.30)                                 & 61.76$\pm$0.42 ($\downarrow$ 5.05)                                   & 61.63$\pm$0.18 ($\downarrow$ 3.51)                                   & 59.22$\pm$0.84 ($\downarrow$ 6.44)                               & 59.16$\pm$0.47 ($\downarrow$ 4.64)                                   & 58.13$\pm$0.45 ($\downarrow$ 4.62)                               \\ \hline
  \textcolor{teal}{\CheckmarkBold}  & \textcolor{teal}{\CheckmarkBold}       & \textbf{68.67$\pm$0.10}                                             & \textbf{66.81$\pm$0.09}                                              & \textbf{65.14$\pm$0.15}                                              & \textbf{65.66$\pm$0.07}                                          & \textbf{63.80$\pm$0.09}                                              & \textbf{62.75$\pm$0.14}                                          \\ \hline
\rowcolor{gray!50}
&               & \multicolumn{6}{c}{Amazon-ratings}                                                                                                                                                                                                                                                                                                                                                                                               \\ \hline
Semantic  & Structural & \begin{tabular}[c]{@{}c@{}}non-overlapping\\ 5 clients\end{tabular} & \begin{tabular}[c]{@{}c@{}}non-overlapping\\ 10 clients\end{tabular} & \begin{tabular}[c]{@{}c@{}}non-overlapping\\ 20 clients\end{tabular} & \begin{tabular}[c]{@{}c@{}}overlapping\\ 10 clients\end{tabular} & \begin{tabular}[c]{@{}c@{}}overlapping\\ 30 clients\end{tabular} & \begin{tabular}[c]{@{}c@{}}overlapping\\ 50 clients\end{tabular} \\ \hline
\textcolor{teal}{\CheckmarkBold}  & \textcolor{red}{\XSolidBrush}      & 41.78$\pm$0.12 ($\downarrow$ 3.40)                                  & 41.29$\pm$0.20 ($\downarrow$ 3.82)                                   & 42.57$\pm$0.16 ($\downarrow$ 3.56)                                   & 39.46$\pm$0.17 ($\downarrow$ 3.37)                               & 39.19$\pm$0.28 ($\downarrow$ 3.33)                                   & 40.67$\pm$0.40 ($\downarrow$ 2.30)                               \\ \hline
\textcolor{red}{\XSolidBrush} & \textcolor{teal}{\CheckmarkBold}       & 40.94$\pm$0.33 ($\downarrow$ 4.24)                                 & 41.23$\pm$0.37 ($\downarrow$ 3.88)                                   & 42.28$\pm$0.27 ($\downarrow$ 3.85)                                   & 39.40$\pm$0.24 ($\downarrow$ 3.43)                               & 38.89$\pm$0.25 ($\downarrow$ 3.63)                                   & 40.51$\pm$0.15 ($\downarrow$ 2.46)                               \\ \hline
\textcolor{red}{\XSolidBrush} & \textcolor{red}{\XSolidBrush}      & 40.52$\pm$0.22 ($\downarrow$ 4.66)                                 & 41.05$\pm$0.21 ($\downarrow$ 4.06)                                   & 42.05$\pm$0.13 ($\downarrow$ 4.08)                                   & 38.62$\pm$0.13 ($\downarrow$ 4.21)                               & 38.78$\pm$0.19 ($\downarrow$ 3.74)                                   & 40.03$\pm$0.70 ($\downarrow$ 2.94)                               \\ \hline
\textcolor{teal}{\CheckmarkBold}  & \textcolor{teal}{\CheckmarkBold}       & \textbf{45.18$\pm$0.14}                                             & \textbf{45.11$\pm$0.15}                                              & \textbf{46.13$\pm$0.05}                                              & \textbf{42.83$\pm$0.06}                                          & \textbf{42.52$\pm$0.10}                                             & \textbf{42.97$\pm$0.15}                                          \\ \hline

\rowcolor{gray!50}
&               & \multicolumn{6}{c}{Minesweeper}                                                                                                                                                                                                                                                                                                                                                                                               \\ \hline
Semantic  & Structural & \begin{tabular}[c]{@{}c@{}}non-overlapping\\ 5 clients\end{tabular} & \begin{tabular}[c]{@{}c@{}}non-overlapping\\ 10 clients\end{tabular} & \begin{tabular}[c]{@{}c@{}}non-overlapping\\ 20 clients\end{tabular} & \begin{tabular}[c]{@{}c@{}}overlapping\\ 10 clients\end{tabular} & \begin{tabular}[c]{@{}c@{}}overlapping\\ 30 clients\end{tabular} & \begin{tabular}[c]{@{}c@{}}overlapping\\ 50 clients\end{tabular} \\ \hline
\textcolor{teal}{\CheckmarkBold}  & \textcolor{red}{\XSolidBrush}      & 80.90$\pm$0.71 ($\downarrow$ 1.36)                                  & 80.13$\pm$0.07 ($\downarrow$ 2.03)                                   & 79.87$\pm$0.12 ($\downarrow$ 2.73)                                   & 72.62$\pm$0.13 ($\downarrow$ 3.03)                               & 70.46$\pm$0.30 ($\downarrow$ 2.14)                                   & 68.60$\pm$0.44 ($\downarrow$ 2.71)                               \\ \hline
\textcolor{red}{\XSolidBrush} & \textcolor{teal}{\CheckmarkBold}       & 80.72$\pm$0.33 ($\downarrow$ 1.54)                                 & 79.49$\pm$0.39 ($\downarrow$ 2.67)                                   & 79.78$\pm$0.37 ($\downarrow$ 2.82)                                   & 72.15$\pm$0.52 ($\downarrow$ 3.50)                               & 70.09$\pm$0.38 ($\downarrow$ 2.51)                                   & 68.64$\pm$0.10 ($\downarrow$ 2.67)                               \\ \hline
\textcolor{red}{\XSolidBrush} & \textcolor{red}{\XSolidBrush}      & 79.56$\pm$0.18 ($\downarrow$ 2.70)                                 & 78.84$\pm$0.44 ($\downarrow$ 3.32)                                   & 79.27$\pm$0.46 ($\downarrow$ 3.33)                                   & 71.26$\pm$0.38 ($\downarrow$ 4.39)                               & 69.37$\pm$0.26 ($\downarrow$ 3.23)                                   & 67.31$\pm$0.39 ($\downarrow$ 4.00)                               \\ \hline
\textcolor{teal}{\CheckmarkBold}  & \textcolor{teal}{\CheckmarkBold}       & \textbf{82.26$\pm$0.14}                                             & \textbf{82.16$\pm$0.08}                                              & \textbf{82.60$\pm$0.11}                                              & \textbf{75.65$\pm$0.11}                                          & \textbf{72.60$\pm$0.06}                                             & \textbf{71.31$\pm$0.05}                                          \\ \hline

\rowcolor{gray!50}
&               & \multicolumn{6}{c}{Tolokers}                                                                                                                                                                                                                                                                                                                                                                                               \\ \hline
Semantic  & Structural & \begin{tabular}[c]{@{}c@{}}non-overlapping\\ 5 clients\end{tabular} & \begin{tabular}[c]{@{}c@{}}non-overlapping\\ 10 clients\end{tabular} & \begin{tabular}[c]{@{}c@{}}non-overlapping\\ 20 clients\end{tabular} & \begin{tabular}[c]{@{}c@{}}overlapping\\ 10 clients\end{tabular} & \begin{tabular}[c]{@{}c@{}}overlapping\\ 30 clients\end{tabular} & \begin{tabular}[c]{@{}c@{}}overlapping\\ 50 clients\end{tabular} \\ \hline
\textcolor{teal}{\CheckmarkBold}  & \textcolor{red}{\XSolidBrush}      & 71.88$\pm$0.42 ($\downarrow$ 3.94)                                  & 70.33$\pm$0.58 ($\downarrow$ 3.63)                                   & 66.27$\pm$0.28 ($\downarrow$ 4.61)                                   & 71.71$\pm$0.31 ($\downarrow$ 2.62)                               & 68.36$\pm$0.42 ($\downarrow$ 3.91)                                   & 68.96$\pm$0.40 ($\downarrow$ 2.07)                               \\ \hline
\textcolor{red}{\XSolidBrush} & \textcolor{teal}{\CheckmarkBold}       & 71.37$\pm$0.14 ($\downarrow$ 4.45)                                 & 70.08$\pm$0.46 ($\downarrow$ 3.88)                                   & 66.14$\pm$0.42 ($\downarrow$ 4.74)                                   & 71.47$\pm$0.22 ($\downarrow$ 2.86)                               & 68.16$\pm$0.64 ($\downarrow$ 4.11)                                   & 68.45$\pm$0.31 ($\downarrow$ 2.58)                               \\ \hline
\textcolor{red}{\XSolidBrush} & \textcolor{red}{\XSolidBrush}      & 70.11$\pm$0.26 ($\downarrow$ 5.71)                                 & 69.61$\pm$0.32 ($\downarrow$ 4.35)                                   & 65.59$\pm$0.37 ($\downarrow$ 5.29)                                   & 70.59$\pm$0.19 ($\downarrow$ 3.74)                               & 67.09$\pm$0.46 ($\downarrow$ 5.18)                                   & 68.14$\pm$0.61 ($\downarrow$ 2.89)                               \\ \hline
\textcolor{teal}{\CheckmarkBold}  & \textcolor{teal}{\CheckmarkBold}       & \textbf{75.82$\pm$0.05}                       & \textbf{73.96$\pm$0.10}                                              & \textbf{70.88$\pm$0.12}                                              & \textbf{74.33$\pm$0.09}                                          & \textbf{72.27$\pm$0.07}                                             & \textbf{71.03$\pm$0.09}                                          \\ \hline

\rowcolor{gray!50}
&               & \multicolumn{6}{c}{Questions}                                                                                                                                                                                                                                                                                                                                                                                               \\ \hline
Semantic  & Structural & \begin{tabular}[c]{@{}c@{}}non-overlapping\\ 5 clients\end{tabular} & \begin{tabular}[c]{@{}c@{}}non-overlapping\\ 10 clients\end{tabular} & \begin{tabular}[c]{@{}c@{}}non-overlapping\\ 20 clients\end{tabular} & \begin{tabular}[c]{@{}c@{}}overlapping\\ 10 clients\end{tabular} & \begin{tabular}[c]{@{}c@{}}overlapping\\ 30 clients\end{tabular} & \begin{tabular}[c]{@{}c@{}}overlapping\\ 50 clients\end{tabular} \\ \hline
\textcolor{teal}{\CheckmarkBold}  & \textcolor{red}{\XSolidBrush}      & 66.94$\pm$0.23 ($\downarrow$ 2.57)                                  & 65.44$\pm$0.63 ($\downarrow$ 3.25)                                   & 62.33$\pm$0.59 ($\downarrow$ 3.41)                                   & 67.82$\pm$0.47 ($\downarrow$ 1.57)                               & 64.55$\pm$0.12 ($\downarrow$ 1.88)                                   & 61.63$\pm$0.19 ($\downarrow$ 3.31)                               \\ \hline
\textcolor{red}{\XSolidBrush} & \textcolor{teal}{\CheckmarkBold}       & 66.72$\pm$0.45 ($\downarrow$ 2.79)                                 & 65.37$\pm$0.26 ($\downarrow$ 3.32)                                   & 61.51$\pm$0.32 ($\downarrow$ 4.23)                                   & 67.53$\pm$0.26 ($\downarrow$ 1.86)                               & 64.29$\pm$0.35 ($\downarrow$ 2.14)                                   & 61.48$\pm$0.17 ($\downarrow$ 3.46)                               \\ \hline
\textcolor{red}{\XSolidBrush} & \textcolor{red}{\XSolidBrush}      & 65.70$\pm$0.18 ($\downarrow$ 3.81)                                 & 64.89$\pm$0.44 ($\downarrow$ 3.80)                                   & 61.23$\pm$0.22 ($\downarrow$ 4.51)                                   & 66.33$\pm$0.53 ($\downarrow$ 3.06)                               & 63.62$\pm$0.31 ($\downarrow$ 2.81)                                   & 60.49$\pm$0.37 ($\downarrow$ 4.45)                               \\ \hline
\textcolor{teal}{\CheckmarkBold}  & \textcolor{teal}{\CheckmarkBold}       & \textbf{69.51$\pm$0.15}                       & \textbf{68.69$\pm$0.18}                                              & \textbf{65.74$\pm$0.07}                                              & \textbf{69.39$\pm$0.05}                                          & \textbf{66.43$\pm$0.13}                                             & \textbf{64.94$\pm$0.12}                                          \\ \hline
    \end{tabular}
    }
  \end{table*}

\subsection{Additional Convergence Curves}\label{ap_convergence_curves}
To further evaluate the convergence of our proposed FedSSA and the compared methods, we present convergence curves on six datasets under non-overlapping partitioning setting (see~\cref{fig_additional_convergence1}), and on eight datasets under overlapping partitioning setting (see~\cref{fig_additional_convergence2}). We can observe that our proposed FedSSA converges stably. In contrast, the convergence curves of typical methods such as FedGTA (\textit{e.g.}, Fig.~\ref{fig_additional_convergence2}(d)) exhibit pronounced instability. This instability arises because FedGTA relies on dynamically estimated similarity levels to guide federated aggregation. However, under strong client heterogeneity, local models can undergo substantial changes between communication rounds, leading to rapidly fluctuating similarity levels. These fluctuations, in turn, cause an inconsistent aggregation process and hinder stable knowledge transfer across clients. Consequently, the convergence process becomes more erratic, resulting in the observed oscillations in the convergence curves.

\begin{figure*}[!t]
  \centering
  \subfloat[\footnotesize{\textit{Cora}}]{\includegraphics[width=0.33\columnwidth]{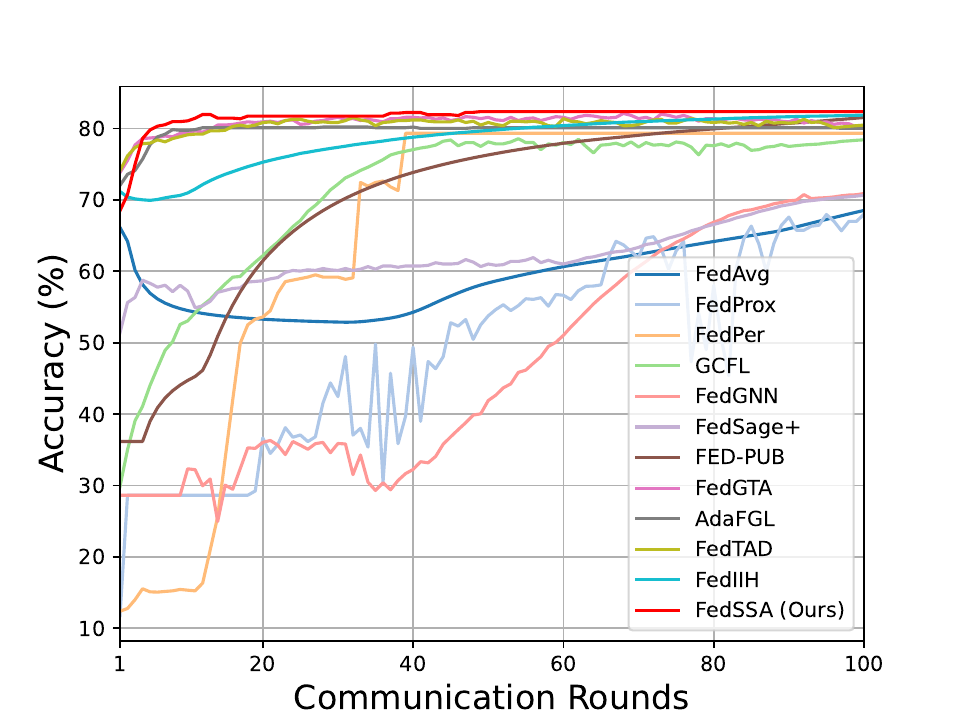}\label{fig_additional_convergence1_1}}
  \hfill
  \subfloat[\footnotesize{\textit{CiteSeer}}]{\includegraphics[width=0.33\columnwidth]{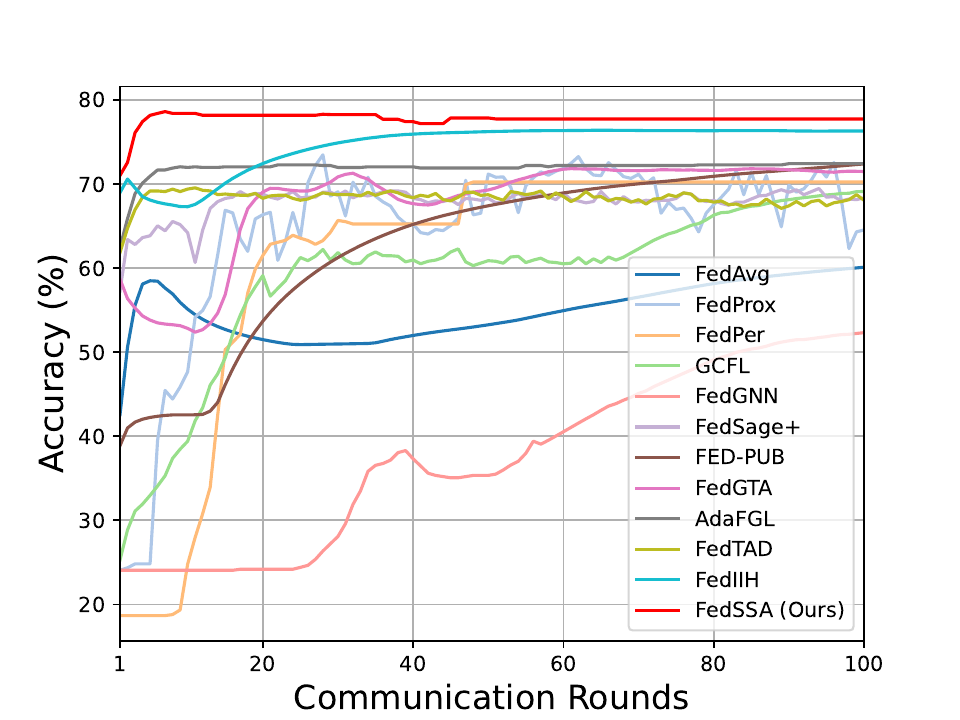}\label{fig_additional_convergence1_2}}
  \hfill
  \subfloat[\footnotesize{\textit{PubMed}}]{\includegraphics[width=0.33\columnwidth]{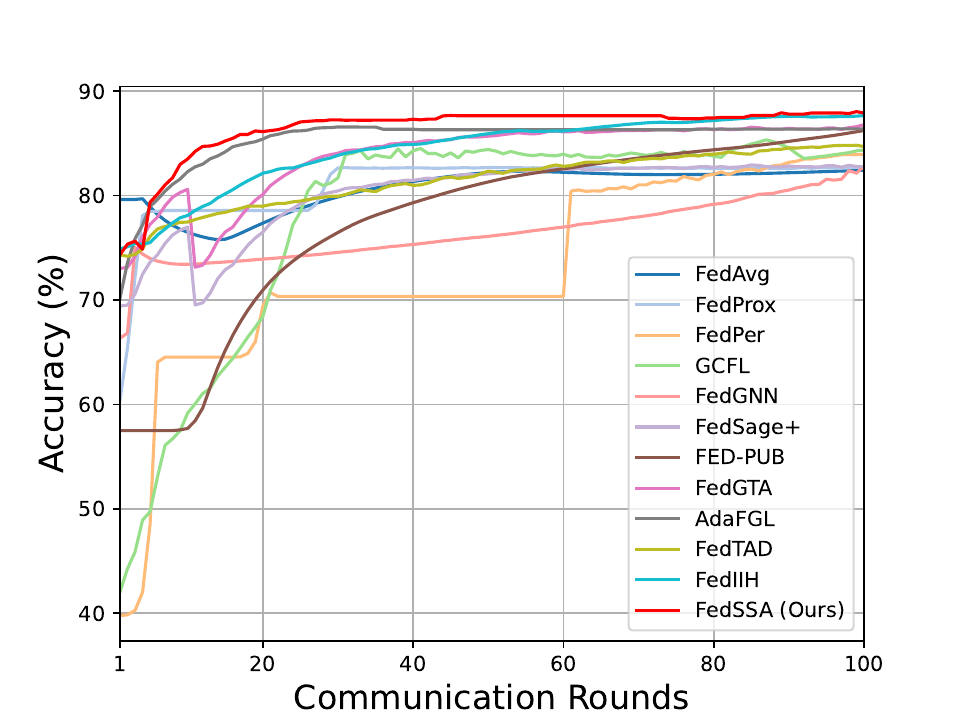}\label{fig_additional_convergence1_3}}
  \hfill
  \subfloat[\footnotesize{\textit{Roman-empire}}]{\includegraphics[width=0.33\columnwidth]{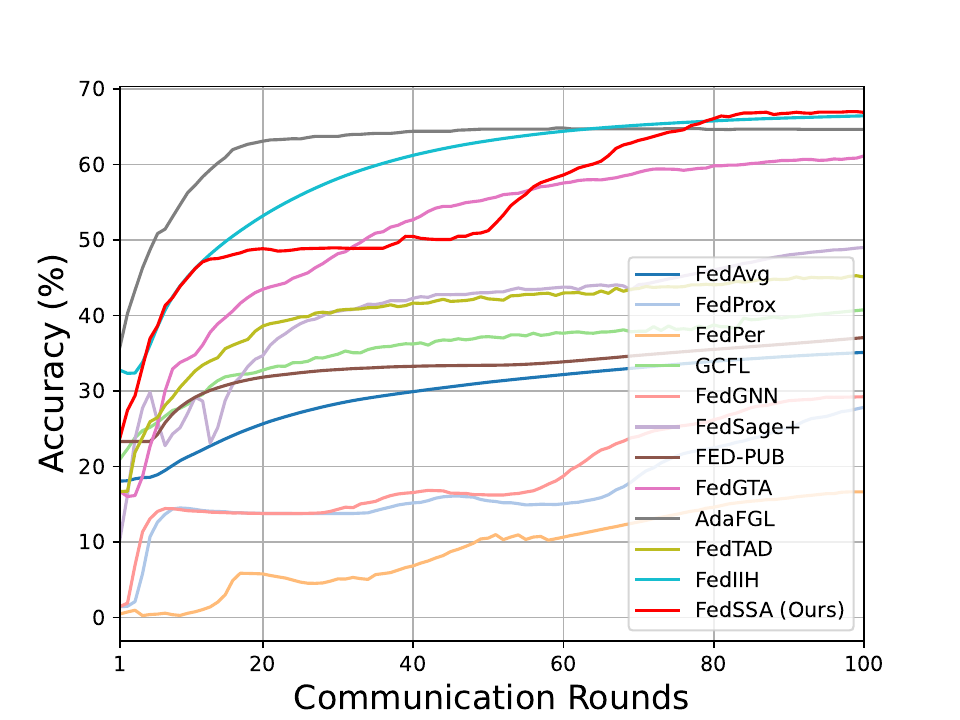}\label{fig_additional_convergence1_4}}
  \hfill
  \subfloat[\footnotesize{\textit{Tolokers}}]{\includegraphics[width=0.33\columnwidth]{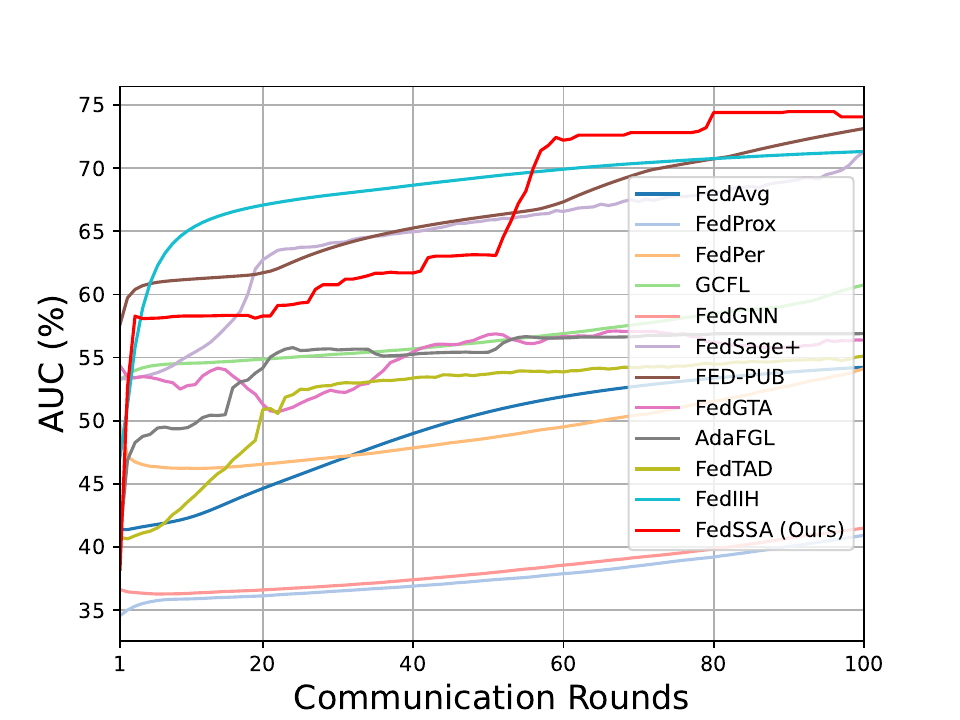}\label{fig_additional_convergence1_5}}
  \hfill
  \subfloat[\footnotesize{\textit{Questions}}]{\includegraphics[width=0.33\columnwidth]{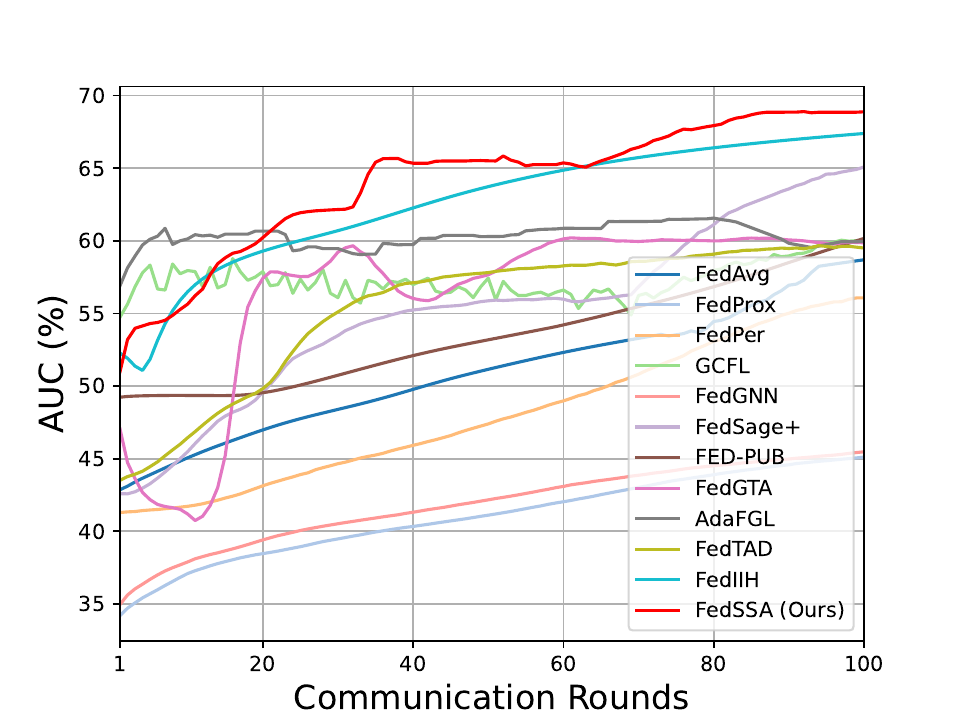}\label{fig_additional_convergence1_6}}
  \caption{Convergence curves on six datasets under non-overlapping partitioning setting with 10 clients.}
  \label{fig_additional_convergence1}
\end{figure*}

\begin{figure*}[!t]
  \centering
  \subfloat[\footnotesize{\textit{Cora}}]{\includegraphics[width=0.25\columnwidth]{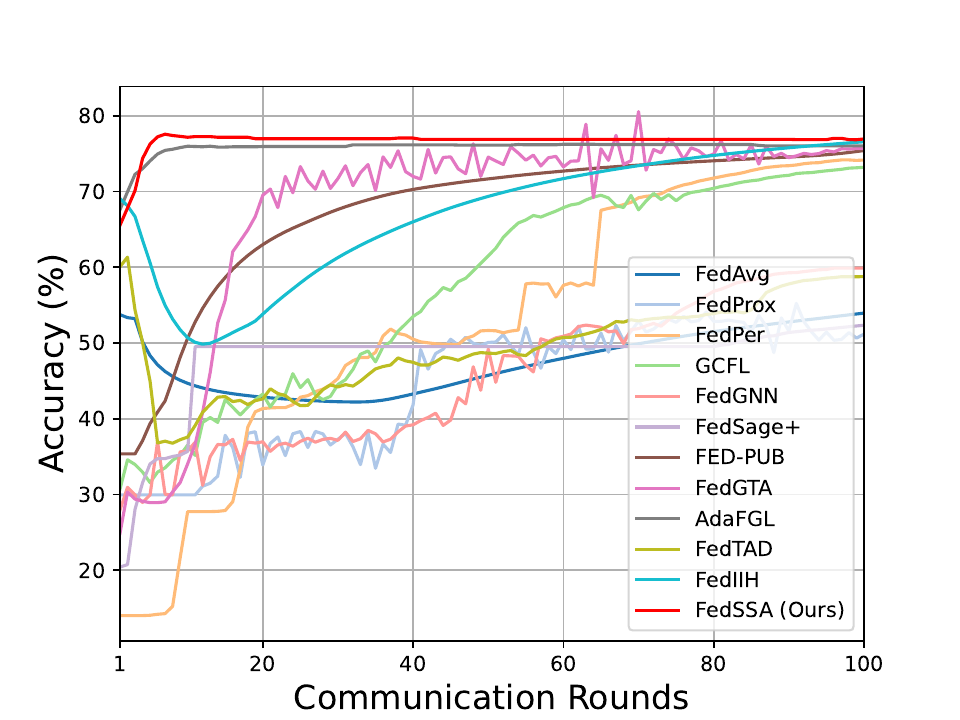}\label{fig_additional_convergence2_1}}
  \hfill
  \subfloat[\footnotesize{\textit{CiteSeer}}]{\includegraphics[width=0.25\columnwidth]{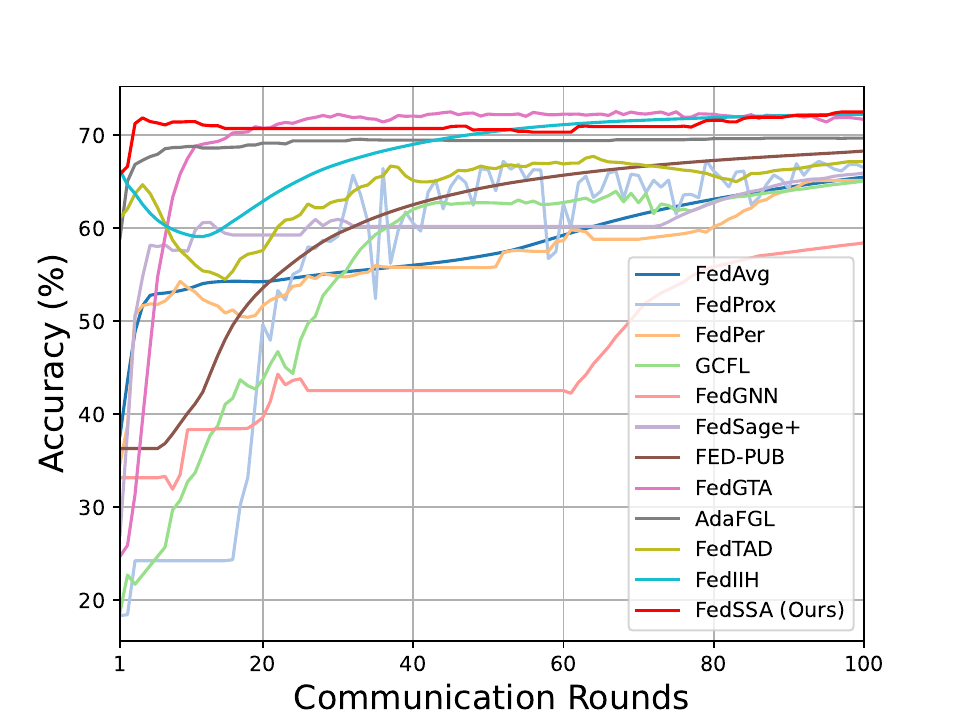}\label{fig_additional_convergence2_2}}
  \hfill
  \subfloat[\footnotesize{\textit{PubMed}}]{\includegraphics[width=0.25\columnwidth]{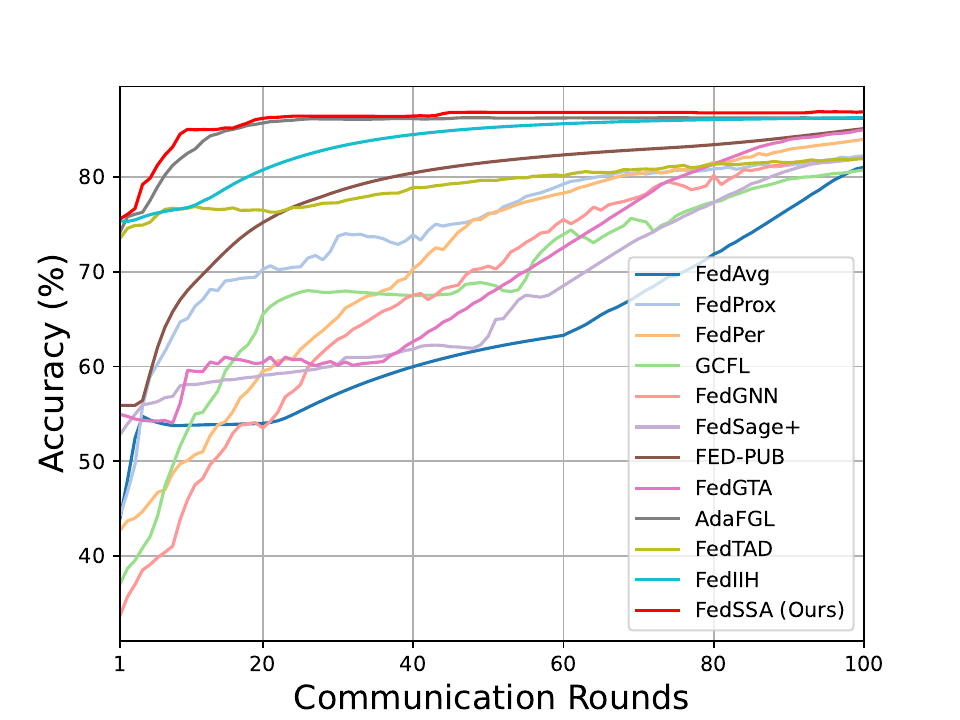}\label{fig_additional_convergence2_3}}
  \hfill
  \subfloat[\footnotesize{\textit{ogbn-arxiv}}]{\includegraphics[width=0.25\columnwidth]{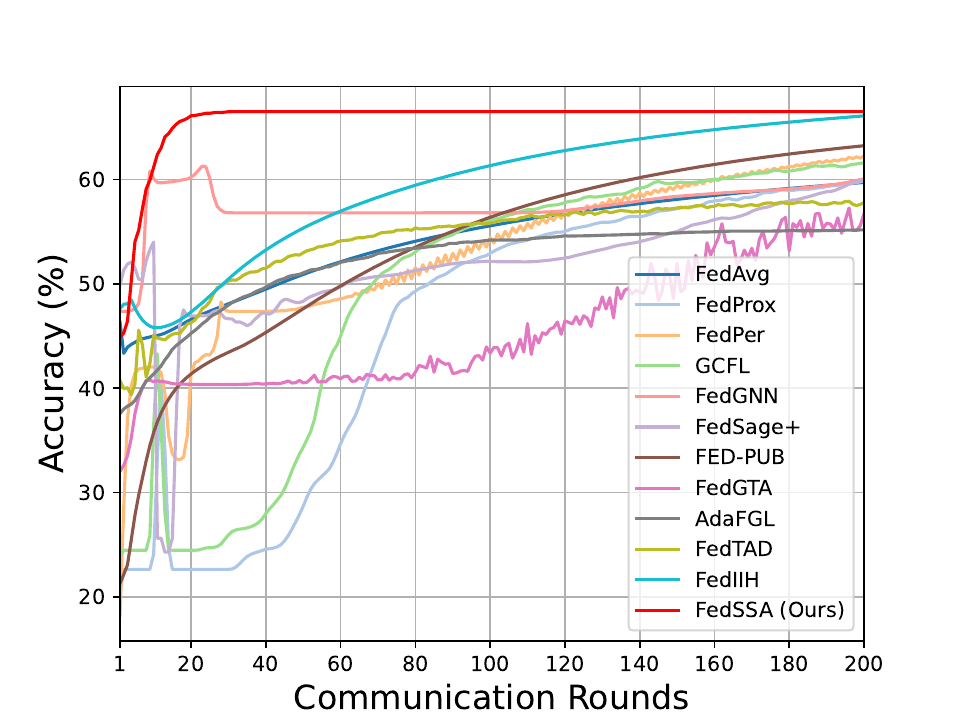}\label{fig_additional_convergence2_4}}
  \hfill
  \subfloat[\footnotesize{\textit{Roman-empire}}]{\includegraphics[width=0.25\columnwidth]{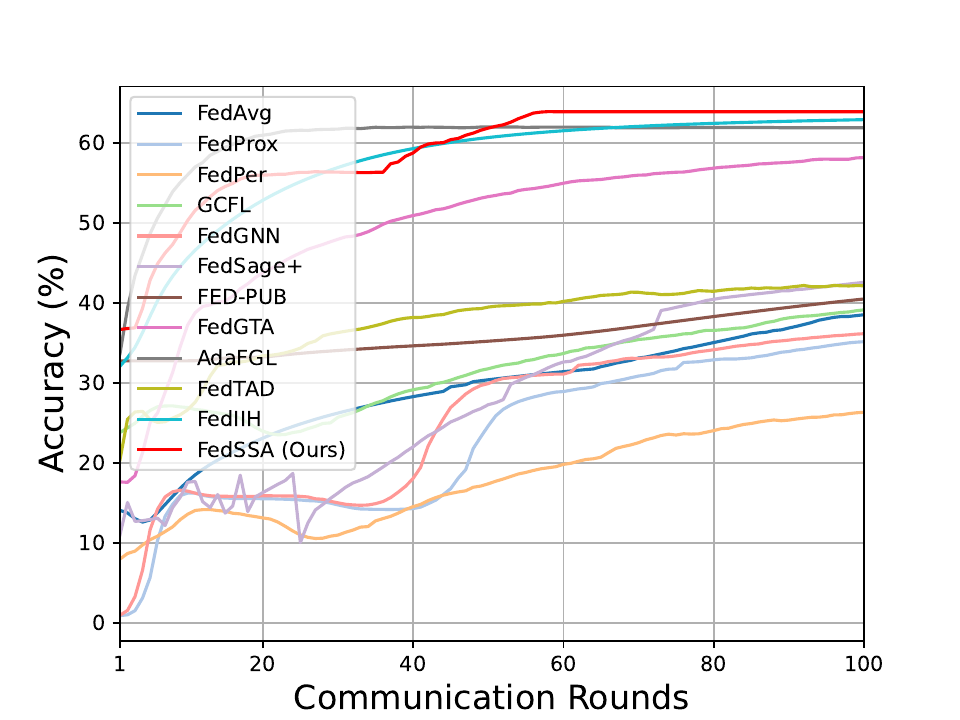}\label{fig_additional_convergence2_5}}
  \hfill
  \subfloat[\footnotesize{\textit{Minesweeper}}]{\includegraphics[width=0.25\columnwidth]{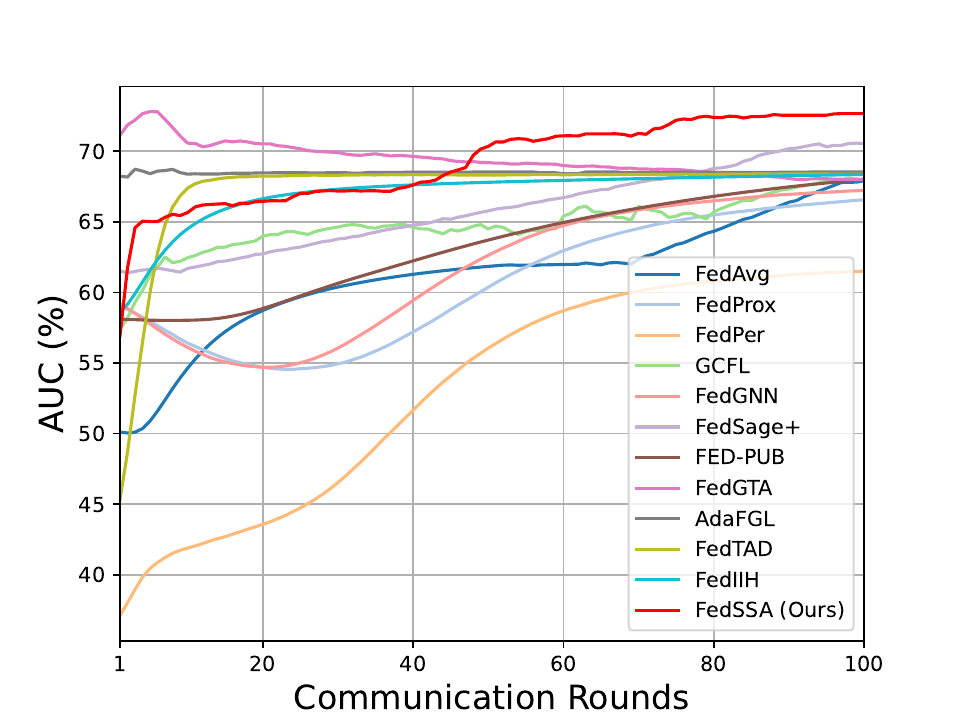}\label{fig_additional_convergence2_6}}
  \hfill
  \subfloat[\footnotesize{\textit{Tolokers}}]{\includegraphics[width=0.25\columnwidth]{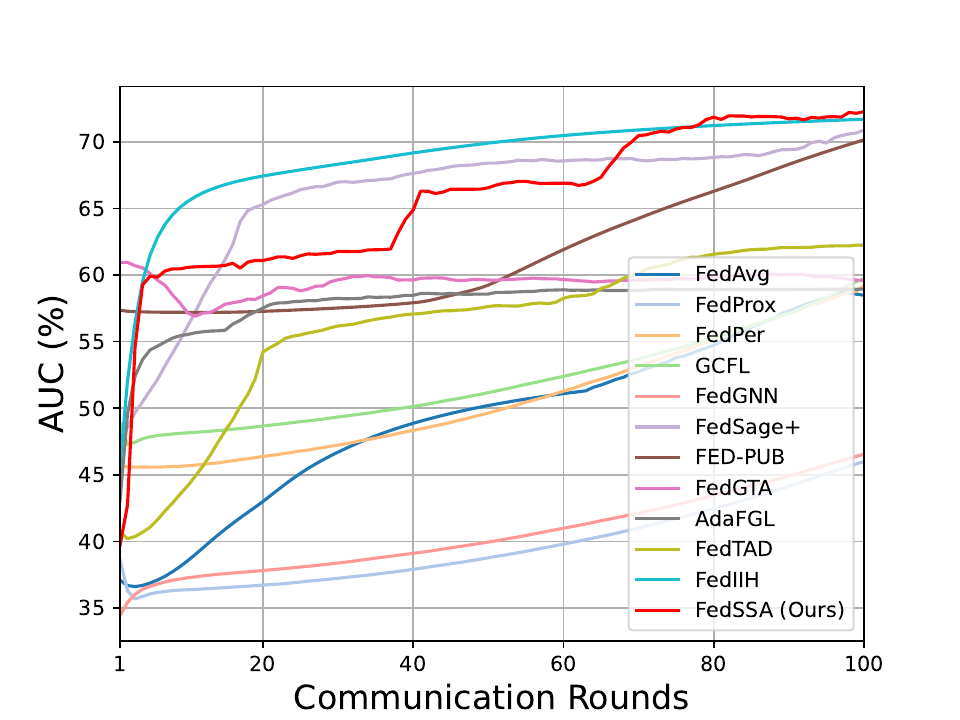}\label{fig_additional_convergence2_7}}
  \hfill
  \subfloat[\footnotesize{\textit{Questions}}]{\includegraphics[width=0.25\columnwidth]{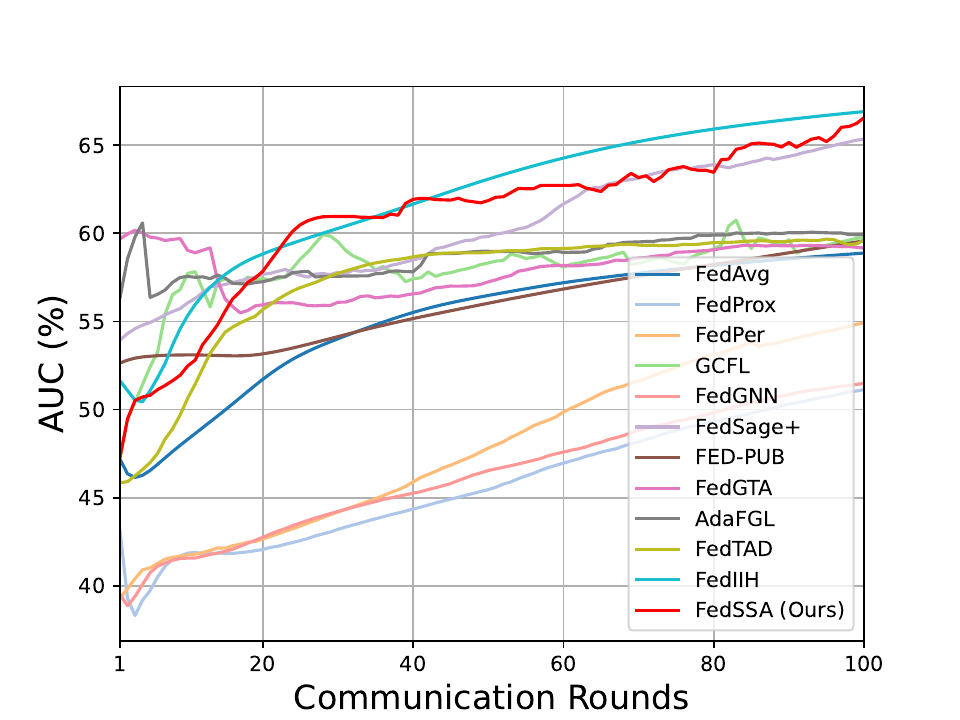}\label{fig_additional_convergence2_8}}
  \caption{Convergence curves on eight datasets under overlapping partitioning settings with 30 clients.}
  \label{fig_additional_convergence2}
\end{figure*}

\subsection{Additional Sensitivity Analysis on Hyperparameters}\label{ap_sensitivity_analysis}
To further analyze the sensitivity of our proposed FedSSA to hyperparameters, we conduct additional sensitivity analyses on \textit{Cora} and \textit{Roman-empire} datasets. Specifically, we examine the impact of four key hyperparameters, namely the number of node clusters $K_\mathrm{node}$, the number of structural clusters $K_\mathrm{struct}$, and regularization parameters $\lambda_1$ and $\lambda_2$. Figure~\ref{fig_additional_sensitivity1} and Figure~\ref{fig_additional_sensitivity2} present accuracy curves with variance bars under different values of hyperparameters. Experimental results show that our proposed FedSSA exhibits stable performance across a wide range of hyperparameters, which indicates that our FedSSA is not sensitive to the variation of hyperparameters.

\begin{figure}[]
  \centering
  \subfloat[\footnotesize{\textit{Cora} under different $K_\mathrm{node}$}]{\includegraphics[width=0.2\columnwidth]{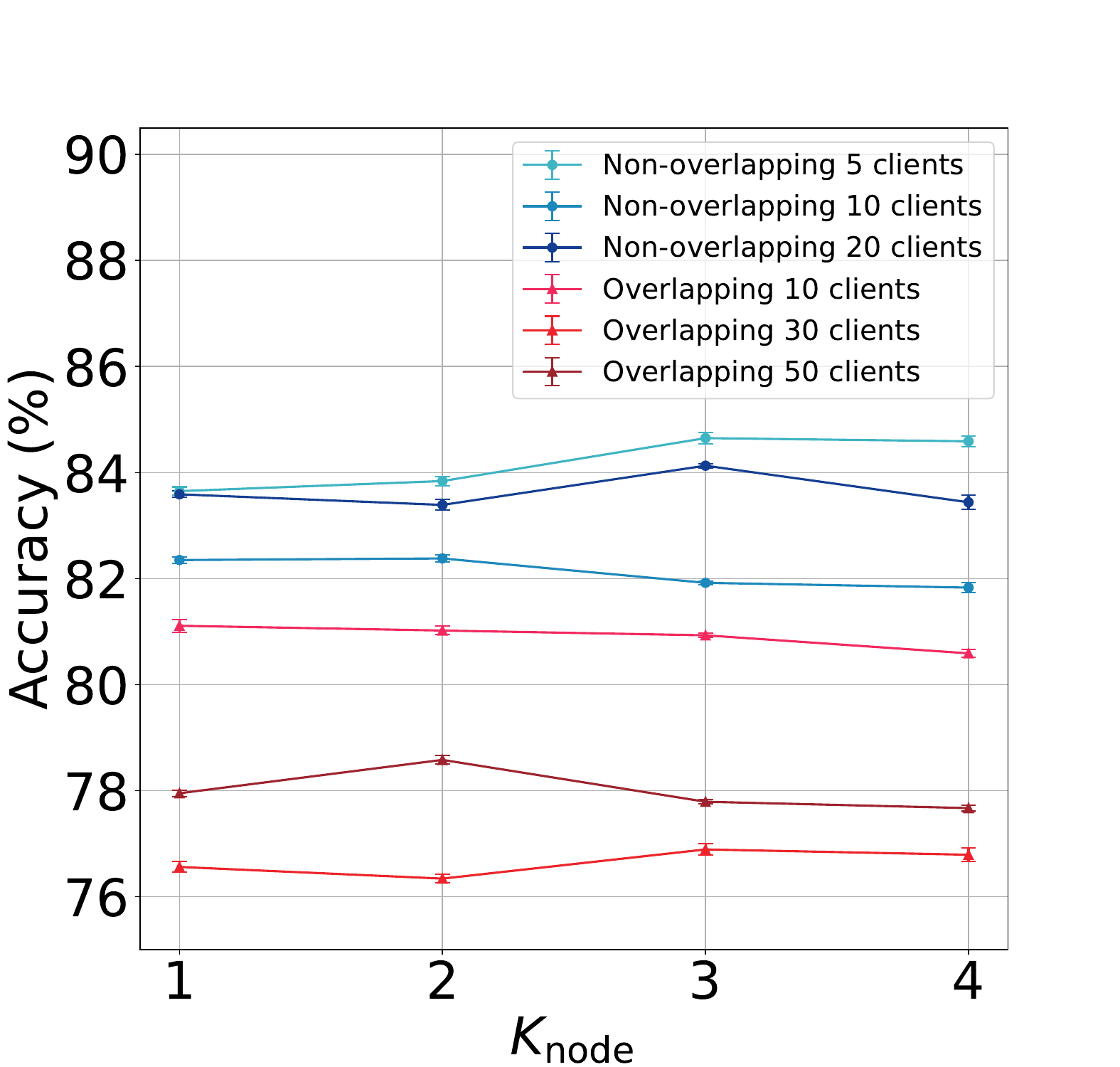}\label{fig_additional_sensitivity1_1}}
  \hfill
  \subfloat[\footnotesize{\textit{Cora} under different $K_\mathrm{struct}$}]{\includegraphics[width=0.2\columnwidth]{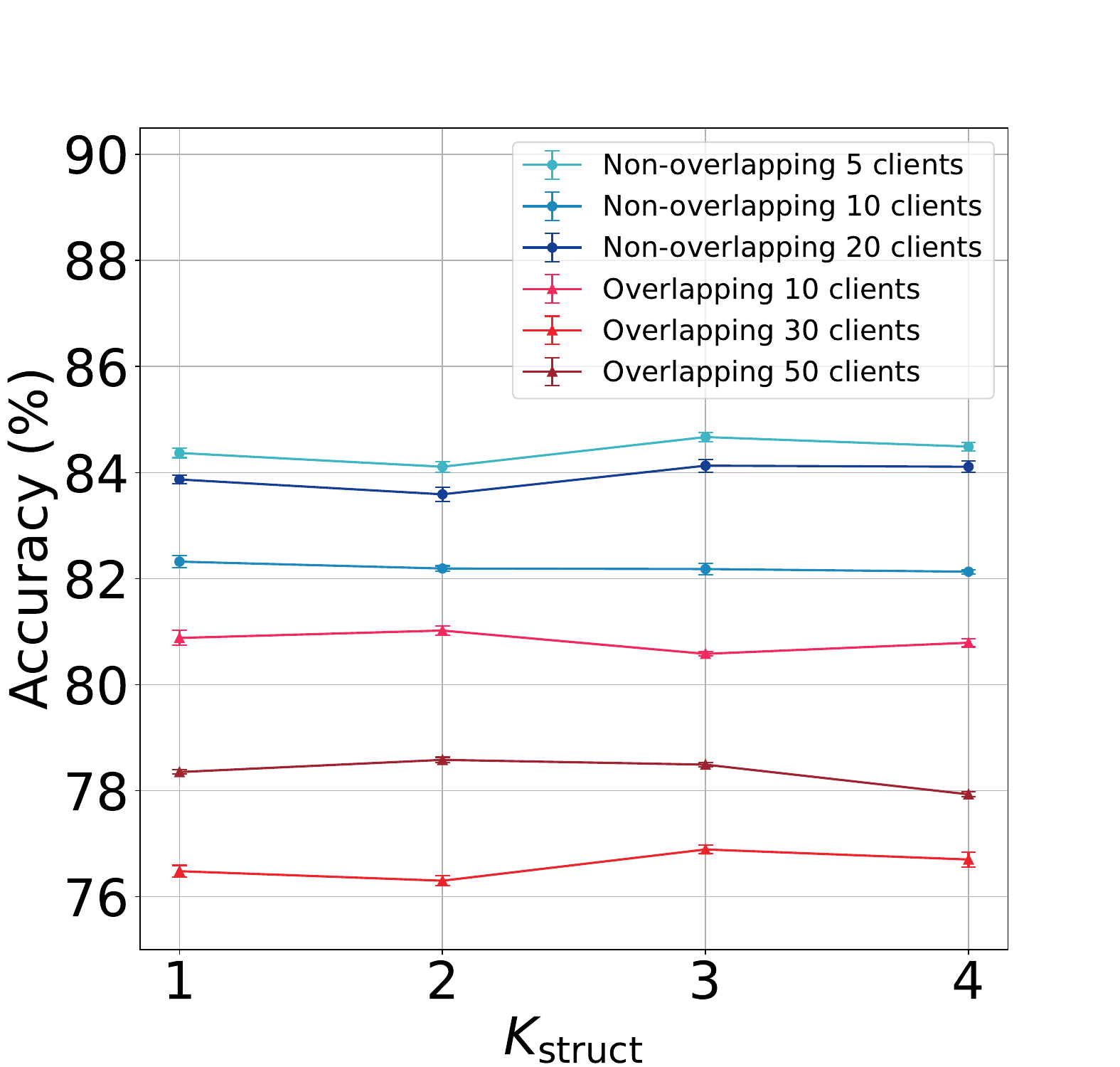}\label{fig_additional_sensitivity1_2}}
  \hfill
  \subfloat[\footnotesize{\textit{Cora} under different $\lambda_1$}]{\includegraphics[width=0.2\columnwidth]{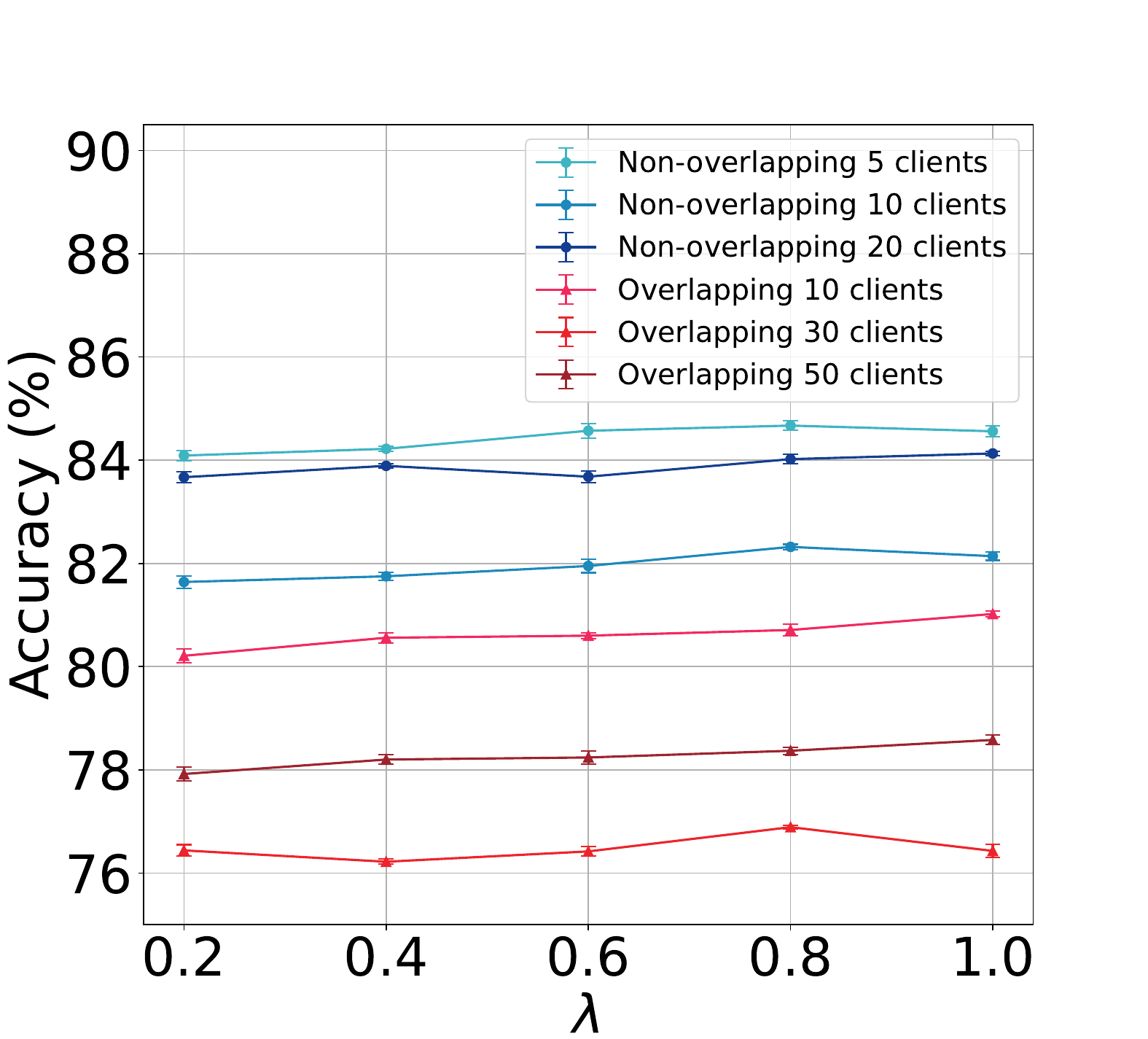}\label{fig_additional_sensitivity1_3}}
  \hfill
  \subfloat[\footnotesize{\textit{Cora} under different $\lambda_2$}]{\includegraphics[width=0.2\columnwidth]{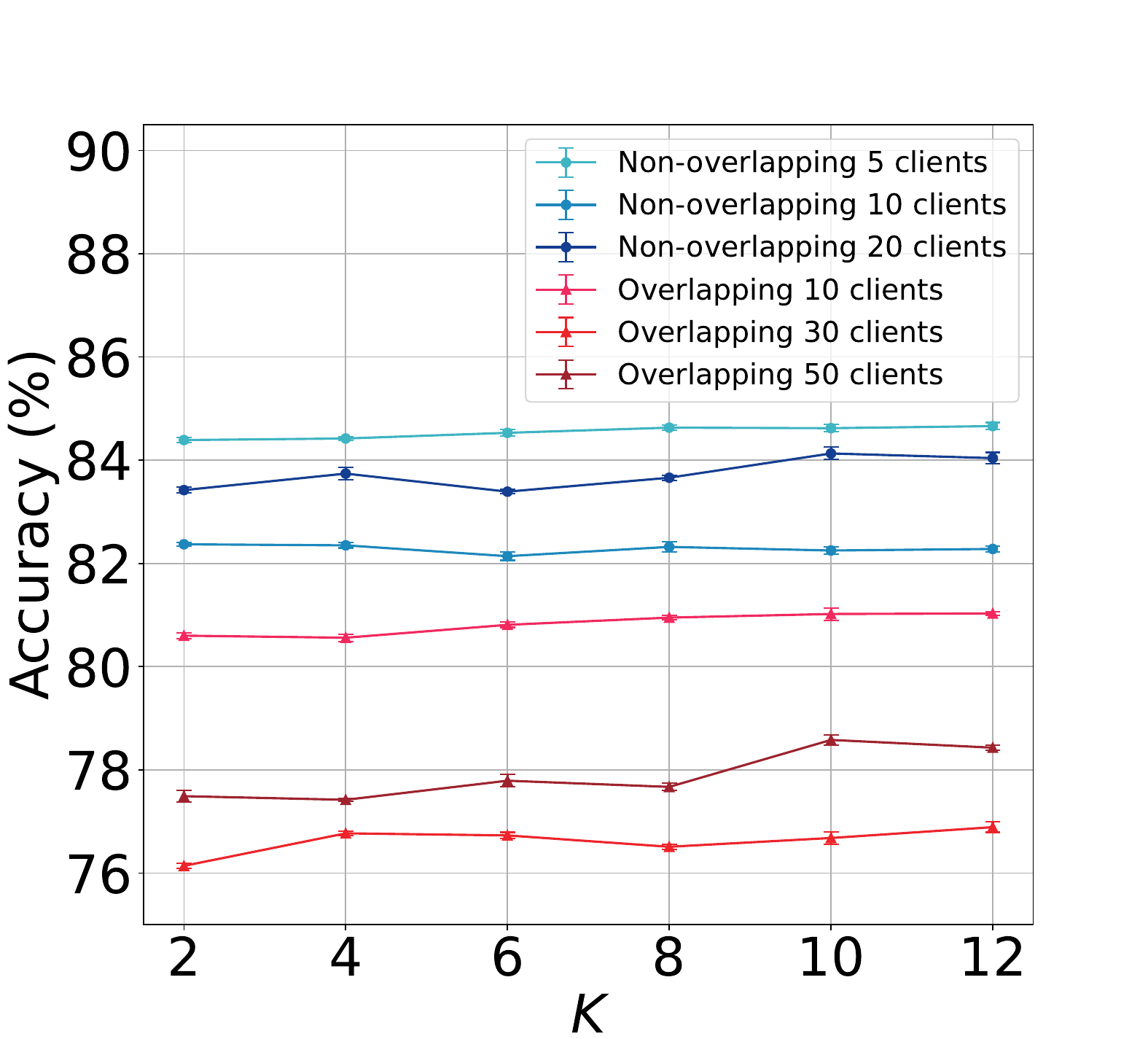}\label{fig_additional_sensitivity1_4}}
  \caption{Accuracy curves with variance bars on \textit{Cora} dataset under different values of $K_\mathrm{node}$, $K_\mathrm{struct}$, $\lambda_1$, and $\lambda_2$.}
  \label{fig_additional_sensitivity1}
\end{figure}

\begin{figure}[]
  \centering
  \subfloat[\footnotesize{\textit{Roman-empire} under different $K_\mathrm{node}$}]{\includegraphics[width=0.2\columnwidth]{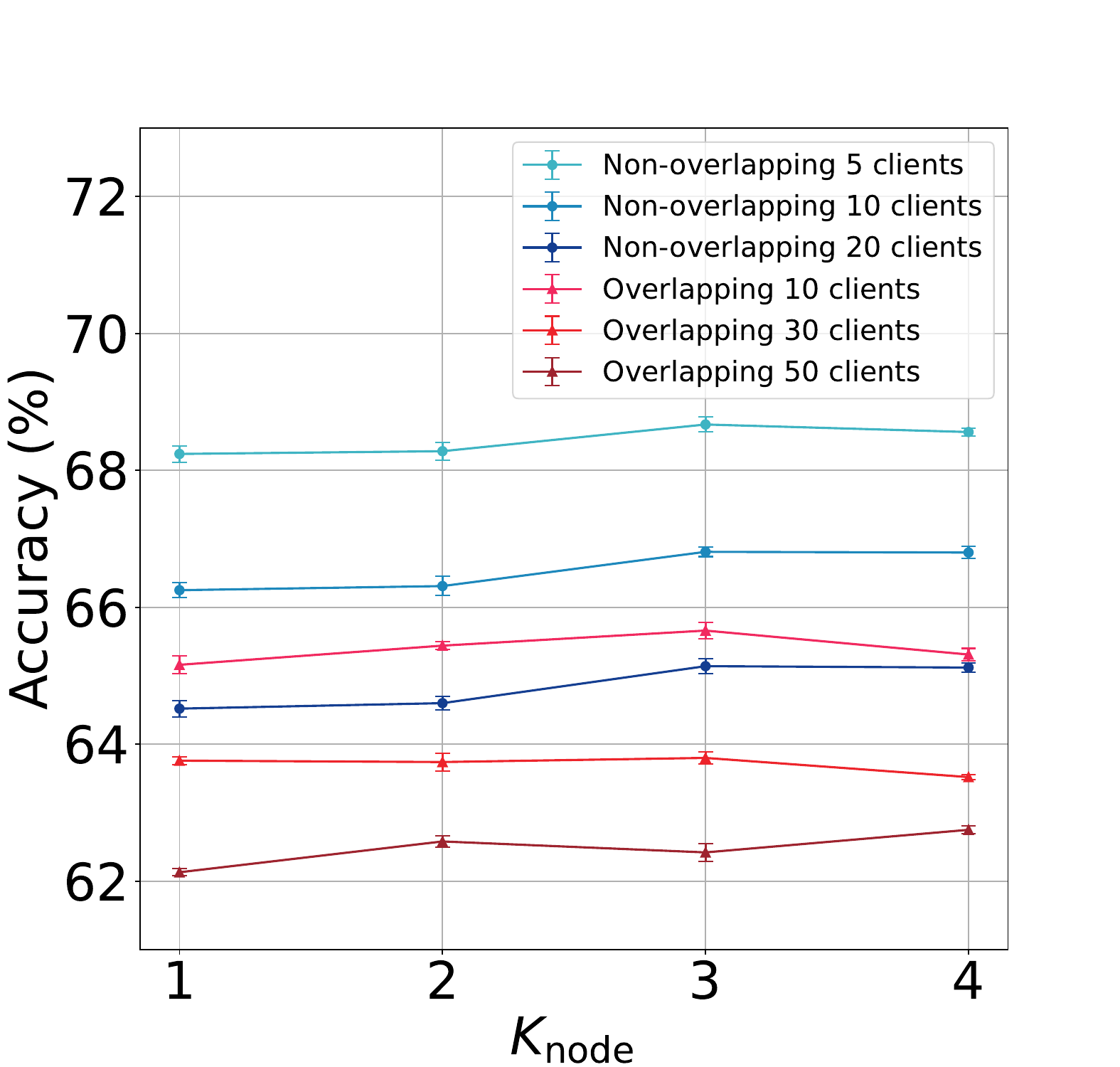}\label{fig_additional_sensitivity_1}}
  \hfill
  \subfloat[\footnotesize{\textit{Roman-empire} under different $K_\mathrm{struct}$}]{\includegraphics[width=0.2\columnwidth]{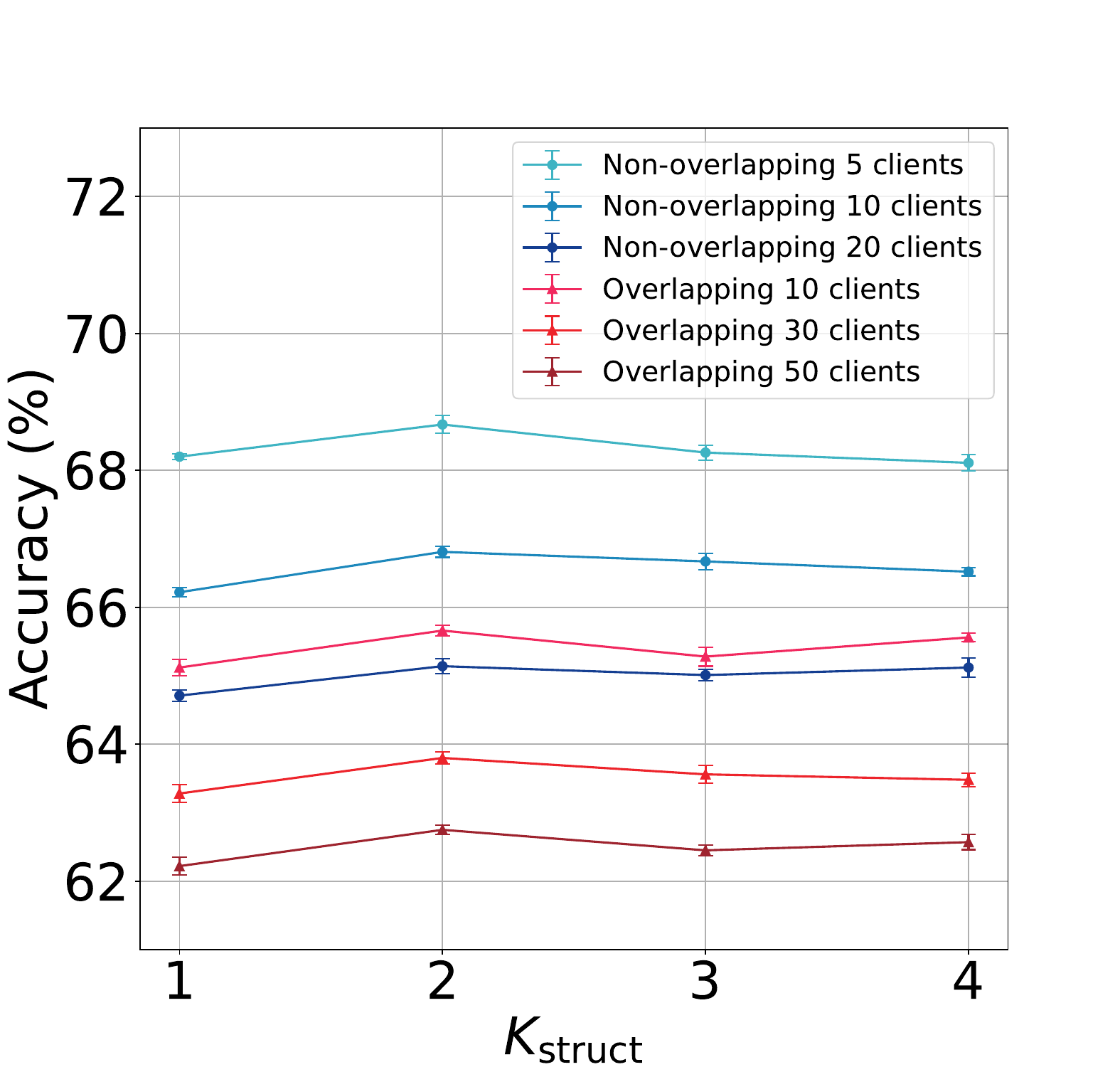}\label{fig_additional_sensitivity_2}}
  \hfill
  \subfloat[\footnotesize{\textit{Roman-empire} under different $\lambda_1$}]{\includegraphics[width=0.2\columnwidth]{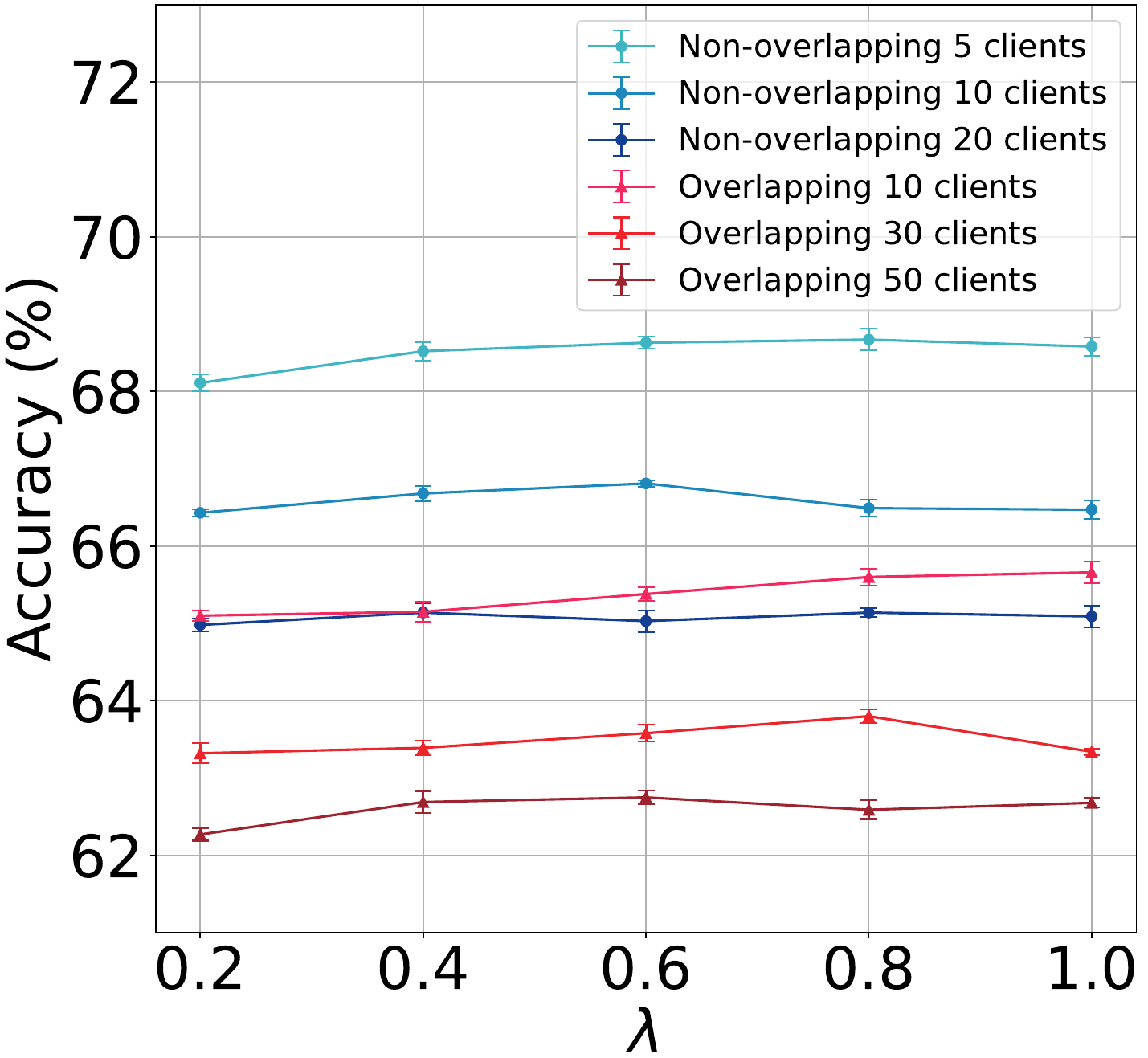}\label{fig_additional_sensitivity_3}}
  \hfill
  \subfloat[\footnotesize{\textit{Roman-empire} under different $\lambda_2$}]{\includegraphics[width=0.212\columnwidth]{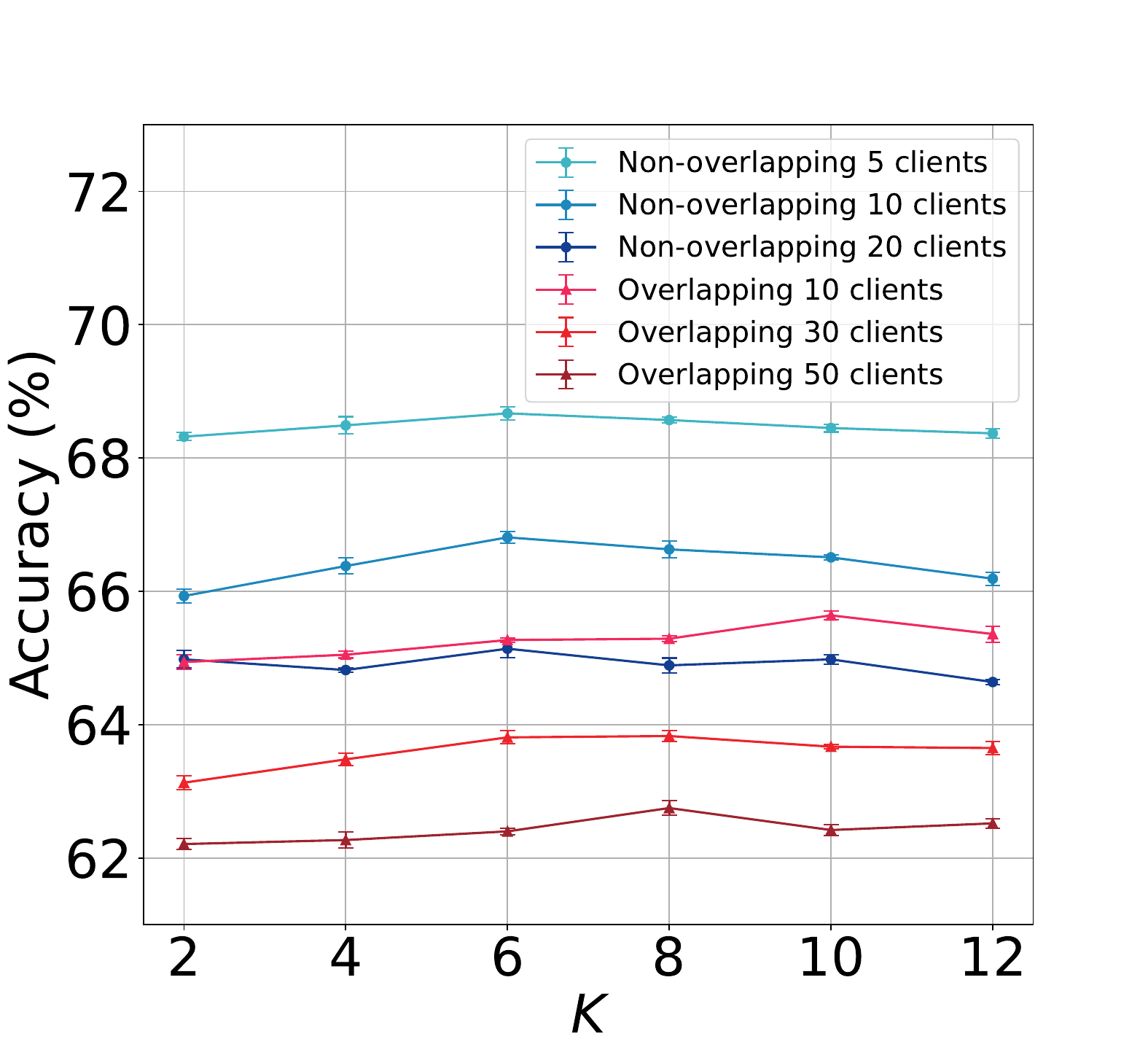}\label{fig_additional_sensitivity_4}}
  \caption{Accuracy curves with variance bars on \textit{Roman-empire} dataset under different values of $K_\mathrm{node}$, $K_\mathrm{struct}$, $\lambda_1$, and $\lambda_2$.}
  \label{fig_additional_sensitivity2}
\end{figure}

\subsection{Additional Case Studies}
\label{ap_case_study}
To further illustrate the effectiveness of our proposed FedSSA in mitigating heterogeneity among clients, we conduct additional case studies on six datasets. To be specific, we first visualize semantic representations of various clients obtained from `w/o semantic' and `with semantic' (\textit{i.e.}, FedSSA without/with semantic knowledge sharing) by using t-SNE~\cite{van2008visualizing} method. Subsequently, we plot spectral properties captured by local models under `w/o structural' and `with structural' (\textit{i.e.}, FedSSA without/with structural knowledge sharing). As shown in~\cref{fig_additional_case_study1}, the 2D projections of different clients exhibit significant divergence under `w/o semantic', which indicates the presence of heterogeneity among clients. In contrast, the 2D projections of representations obtained from `with semantic' show more compact clusters when compared with `w/o semantic', which demonstrates the effectiveness of FedSSA in mitigating node feature heterogeneity. Moreover, as shown in~\cref{fig_additional_case_study2}, spectral properties obtained from `with structural' align more closely to those of cluster-level when compared with `w/o structural', which validates the effectiveness of our FedSSA in addressing structural heterogeneity.

\begin{figure*}[!t]
  \centering
  \subfloat[\footnotesize{\textit{Cora}}]{\includegraphics[width=0.33\columnwidth]{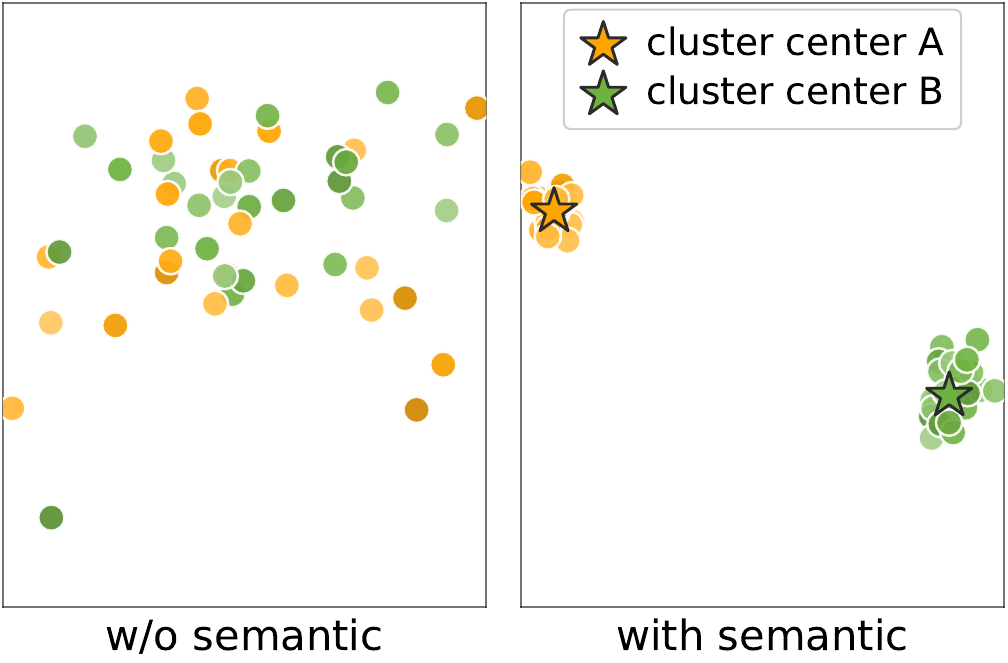}\label{fig_additional_case_study1_1}}
  \hfill
  \subfloat[\footnotesize{\textit{CiteSeer}}]{\includegraphics[width=0.33\columnwidth]{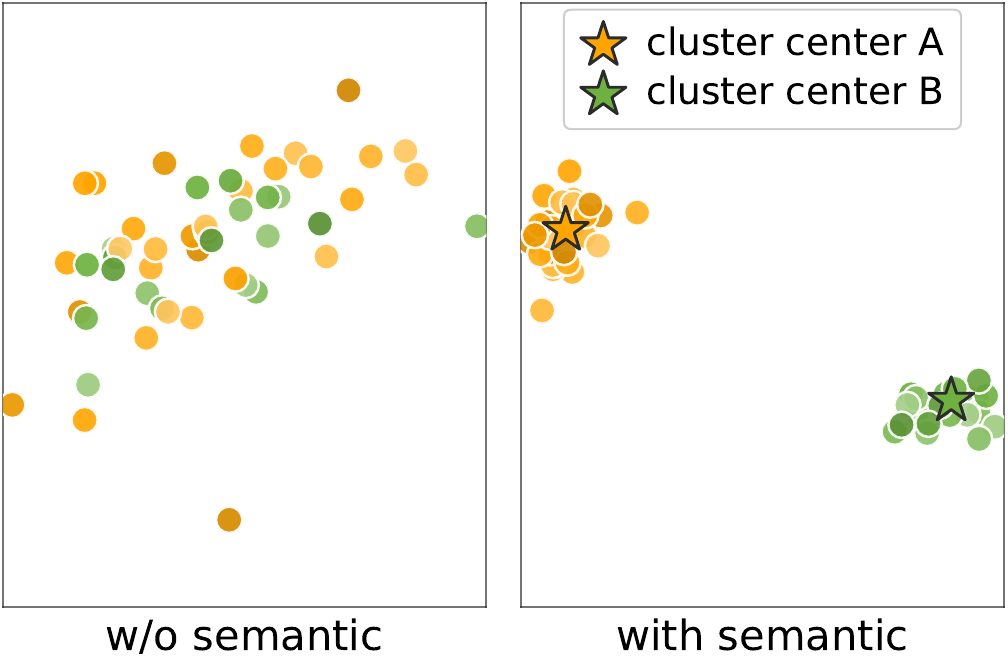}\label{fig_additional_case_study1_2}}
  \hfill
  \subfloat[\footnotesize{\textit{PubMed}}]{\includegraphics[width=0.33\columnwidth]{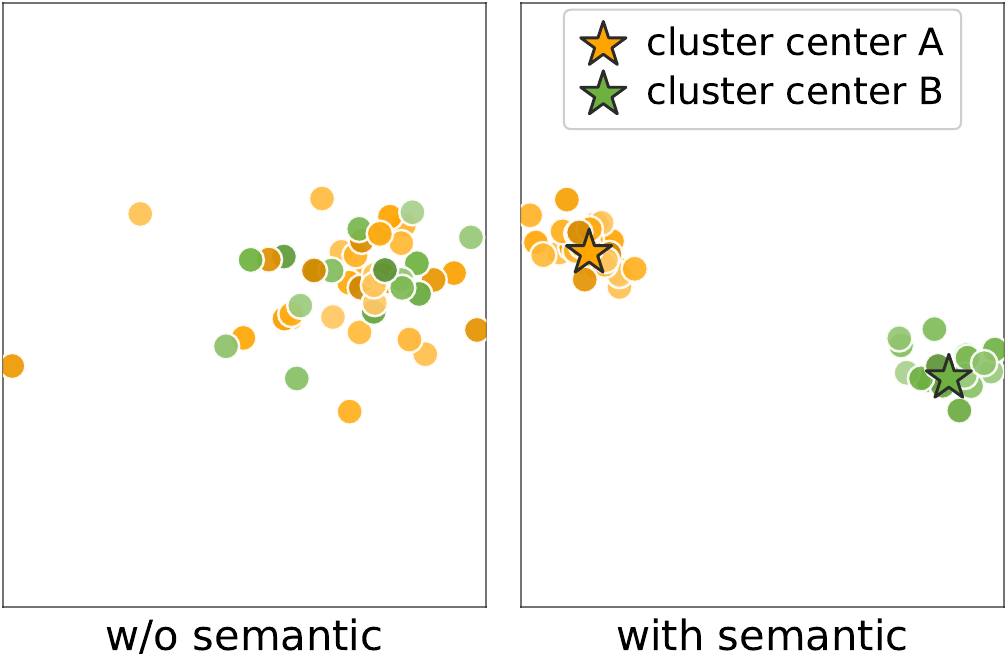}\label{fig_additional_case_study1_3}}
  \hfill
  \subfloat[\footnotesize{\textit{Roman-empire}}]{\includegraphics[width=0.33\columnwidth]{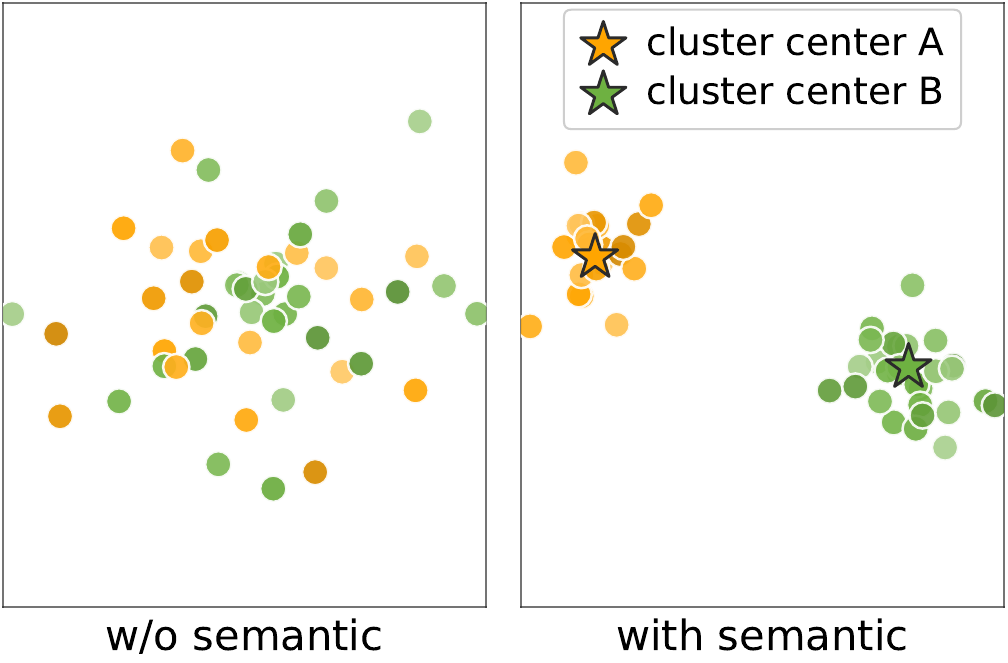}\label{fig_additional_case_study1_4}}
  \hfill
  \subfloat[\footnotesize{\textit{Tolokers}}]{\includegraphics[width=0.33\columnwidth]{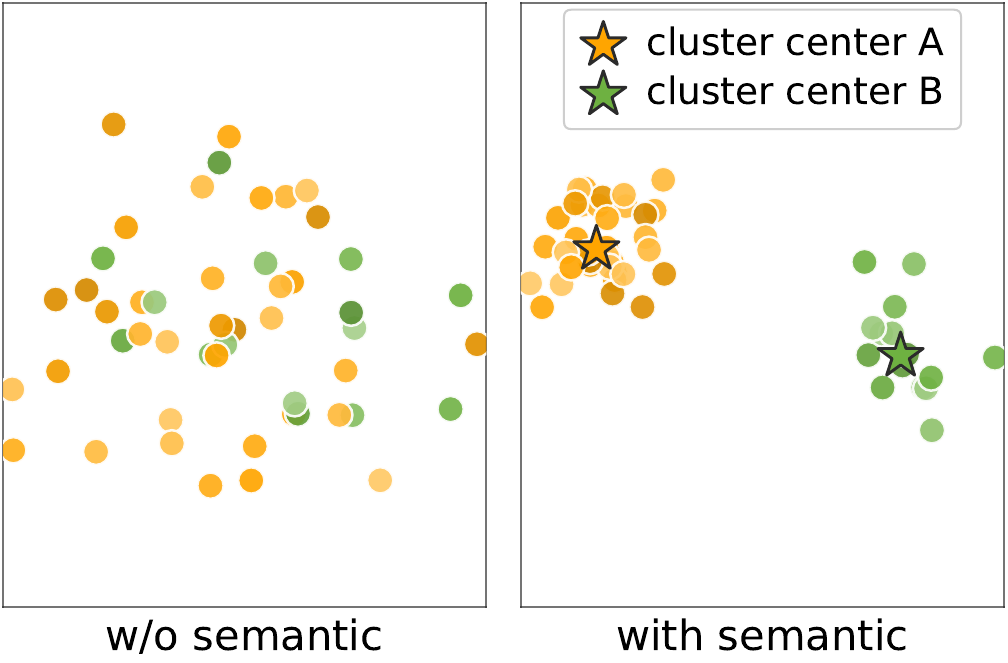}\label{fig_additional_case_study1_5}}
  \hfill
  \subfloat[\footnotesize{\textit{Questions}}]{\includegraphics[width=0.33\columnwidth]{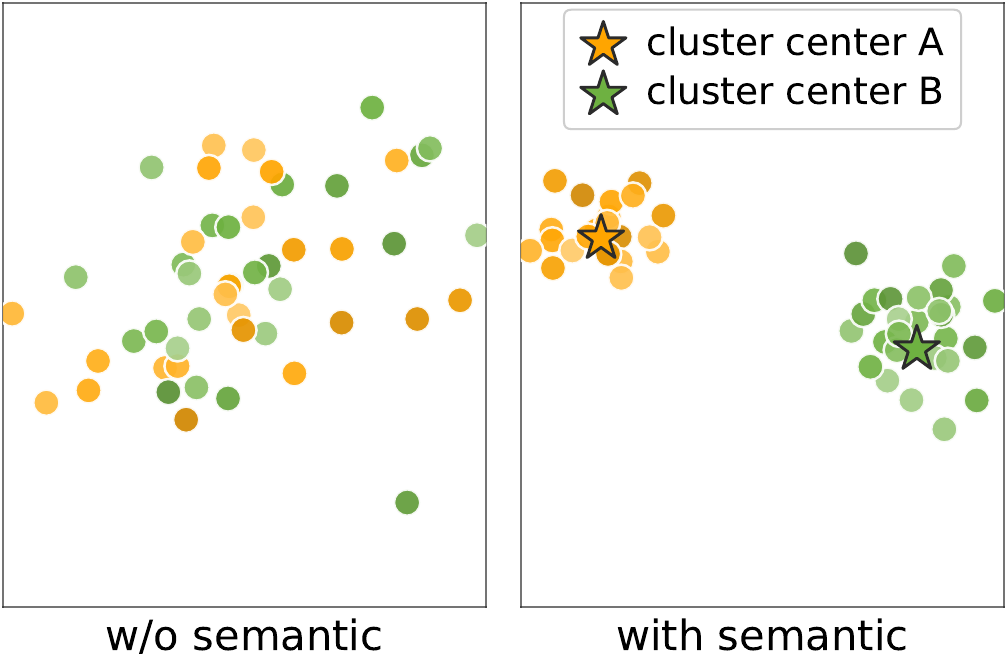}\label{fig_additional_case_study1_6}}
  \caption{Semantic representations on six datasets under overlapping partitioning setting with 50 clients.}
  \label{fig_additional_case_study1}
\end{figure*}

\begin{figure*}[!t]
  \centering
  \subfloat[\footnotesize{\textit{Cora}}]{\includegraphics[width=0.33\columnwidth]{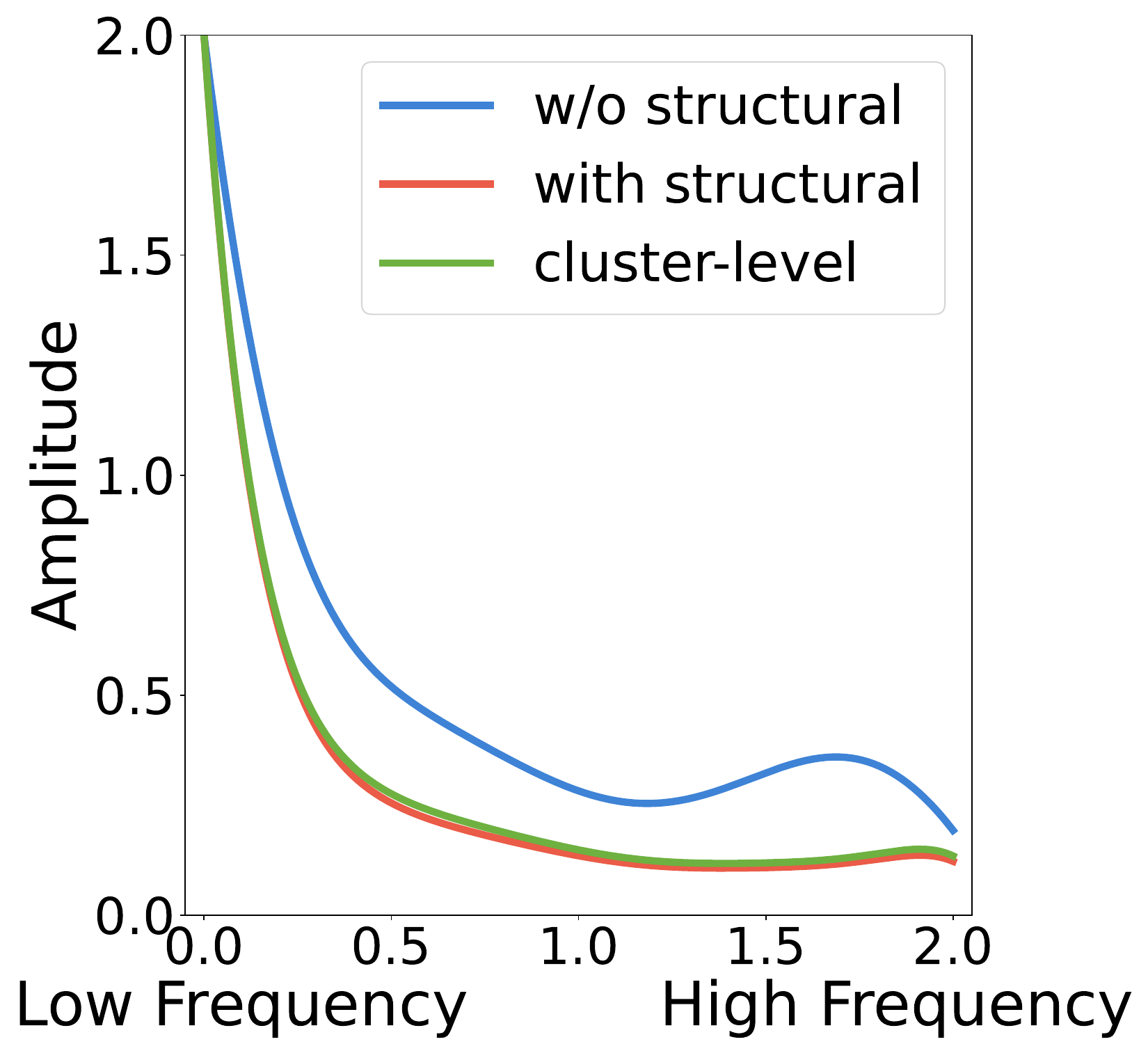}\label{fig_additional_case_study2_1}}
  \hfill
  \subfloat[\footnotesize{\textit{CiteSeer}}]{\includegraphics[width=0.33\columnwidth]{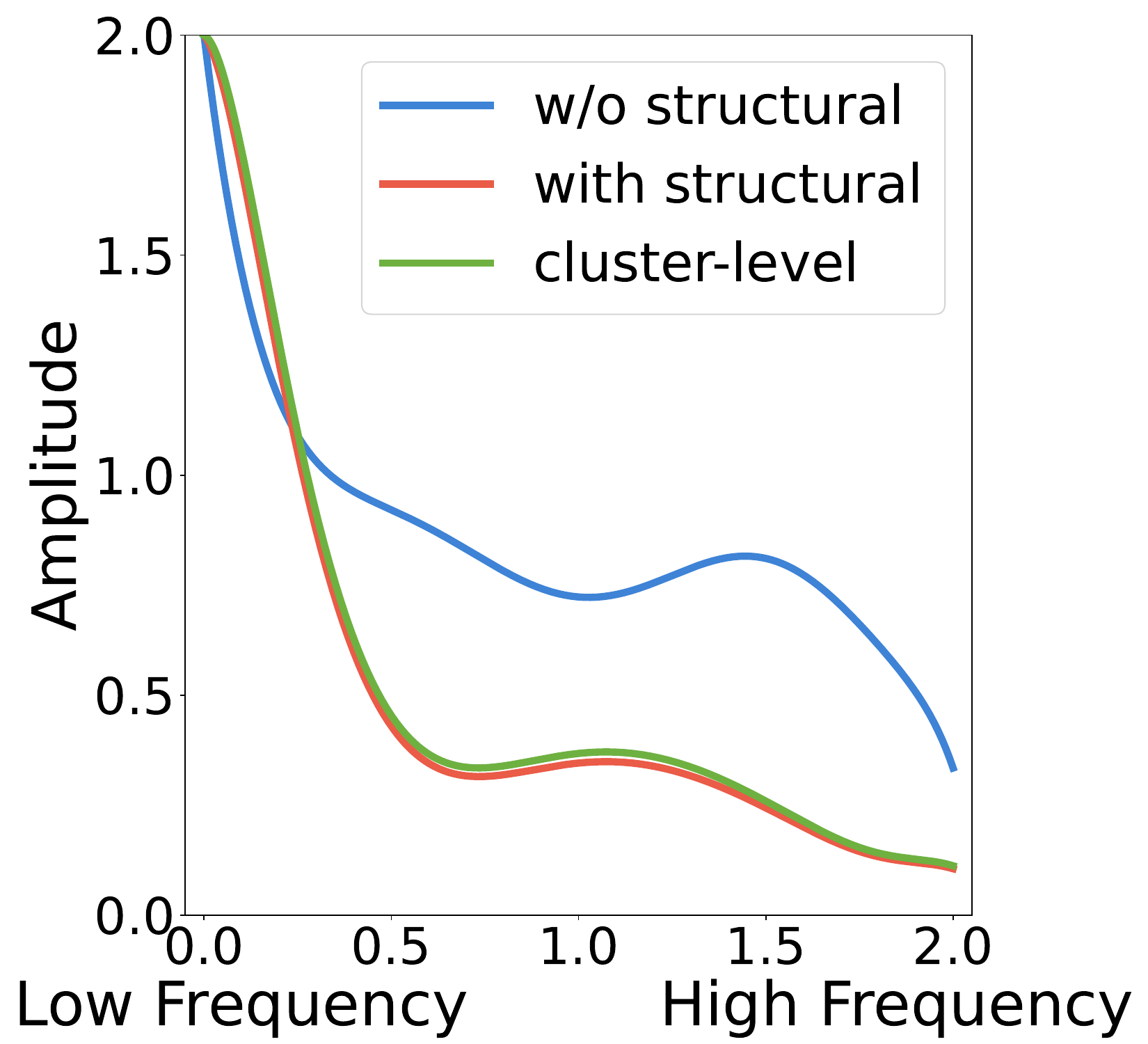}\label{fig_additional_case_study2_2}}
  \hfill
  \subfloat[\footnotesize{\textit{PubMed}}]{\includegraphics[width=0.33\columnwidth]{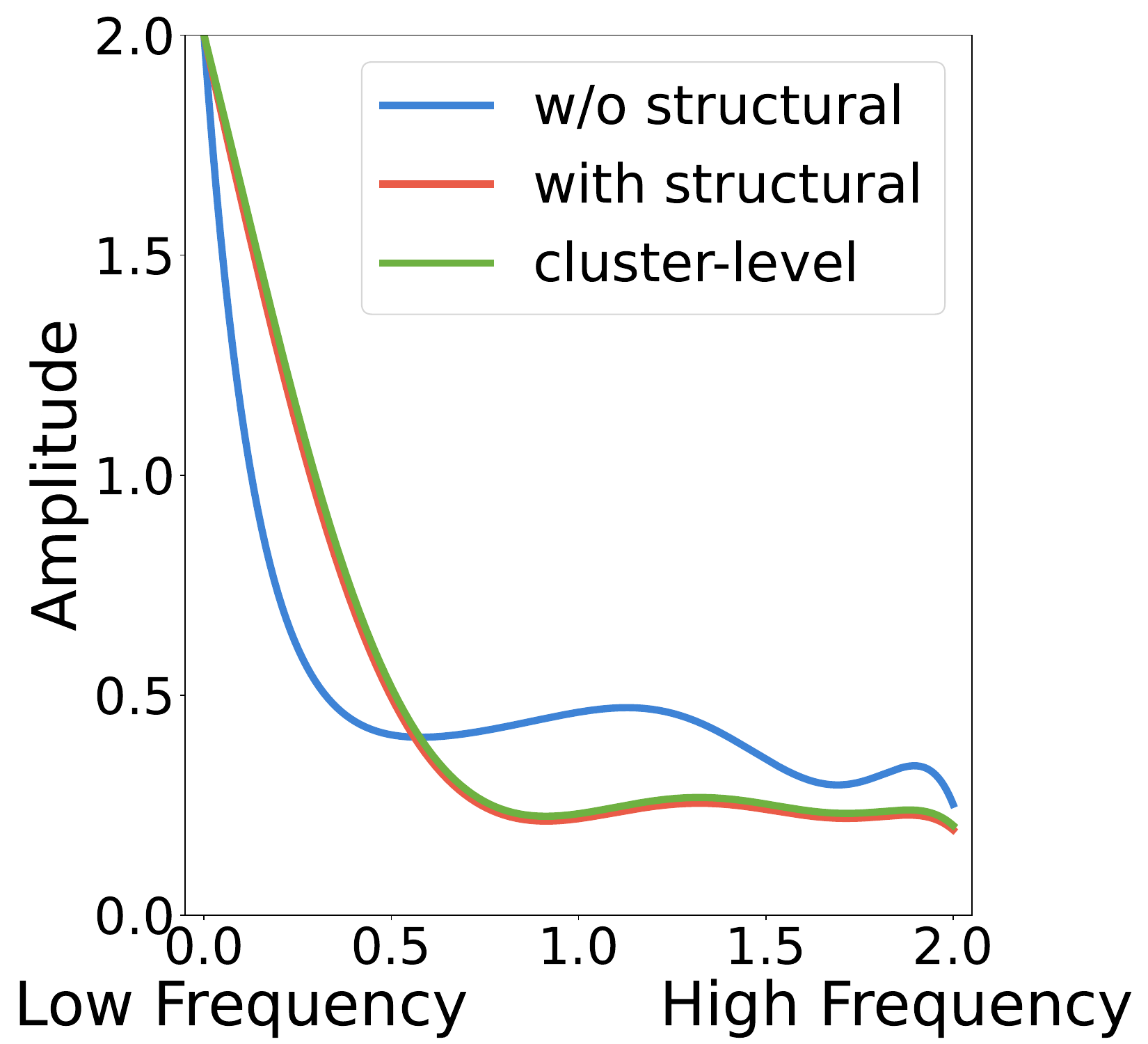}\label{fig_additional_case_study2_3}}
  \hfill
  \subfloat[\footnotesize{\textit{Roman-empire}}]{\includegraphics[width=0.33\columnwidth]{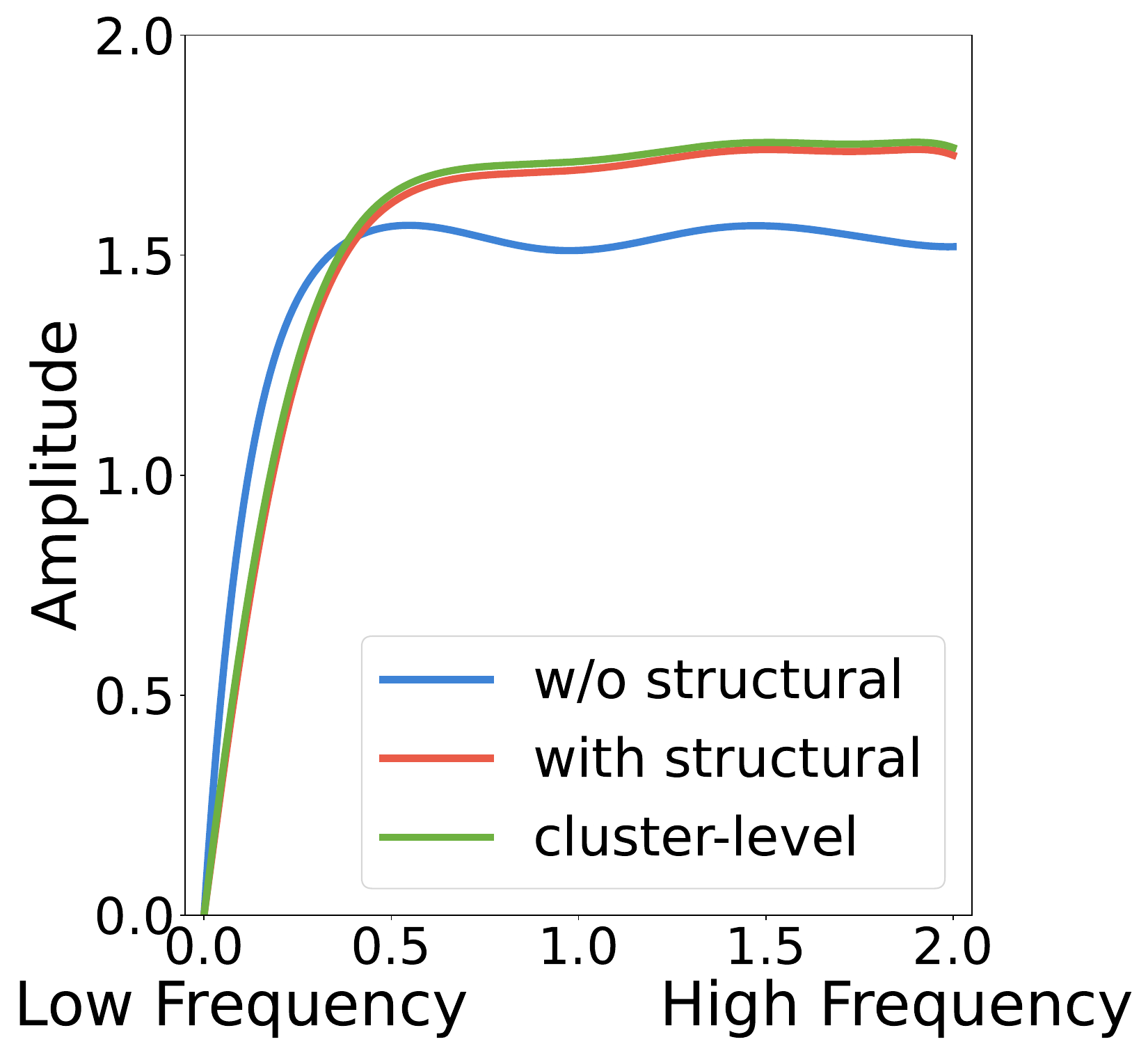}\label{fig_additional_case_study2_4}}
  \hfill
  \subfloat[\footnotesize{\textit{Tolokers}}]{\includegraphics[width=0.33\columnwidth]{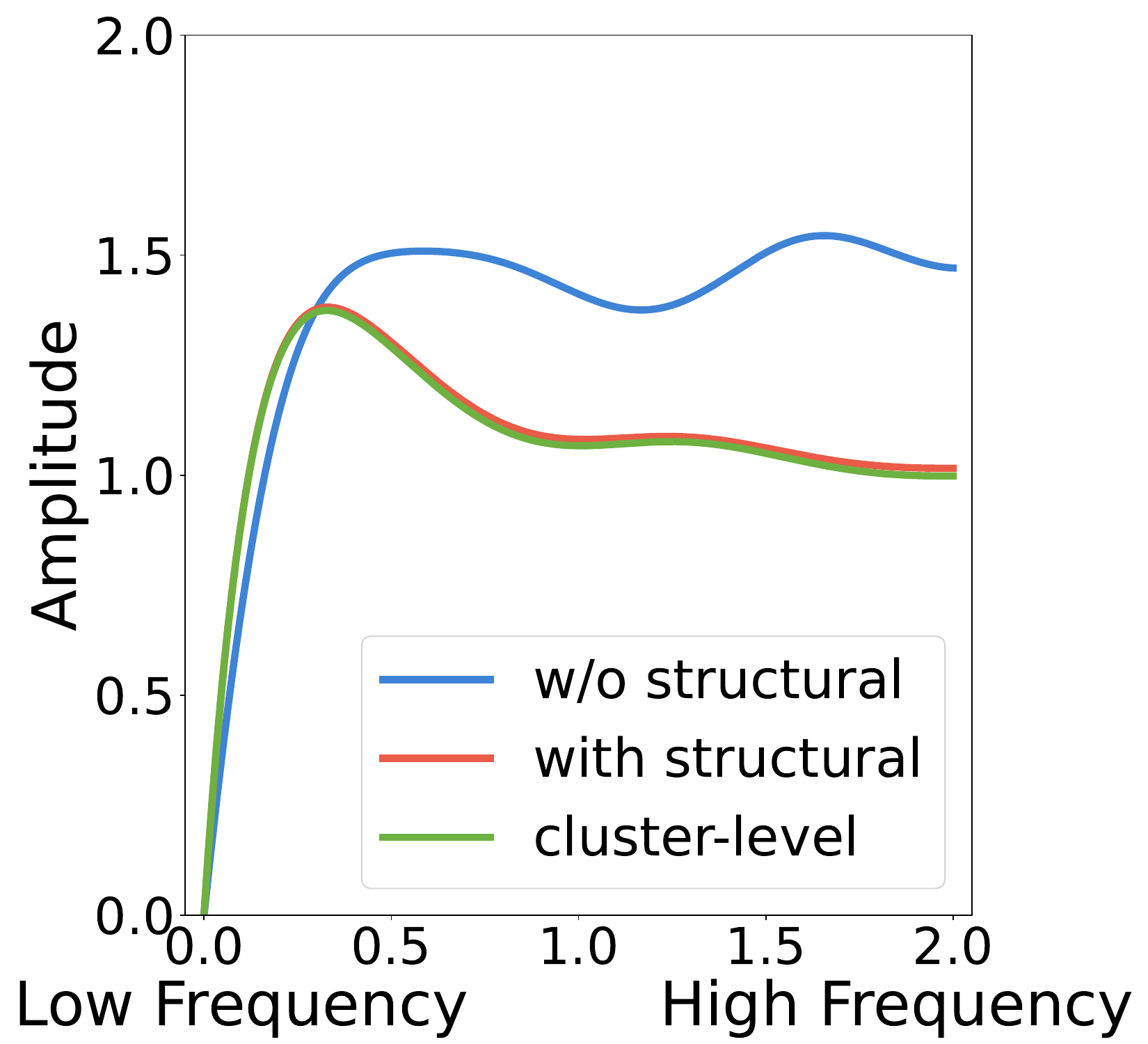}\label{fig_additional_case_study2_5}}
  \hfill
  \subfloat[\footnotesize{\textit{Questions}}]{\includegraphics[width=0.33\columnwidth]{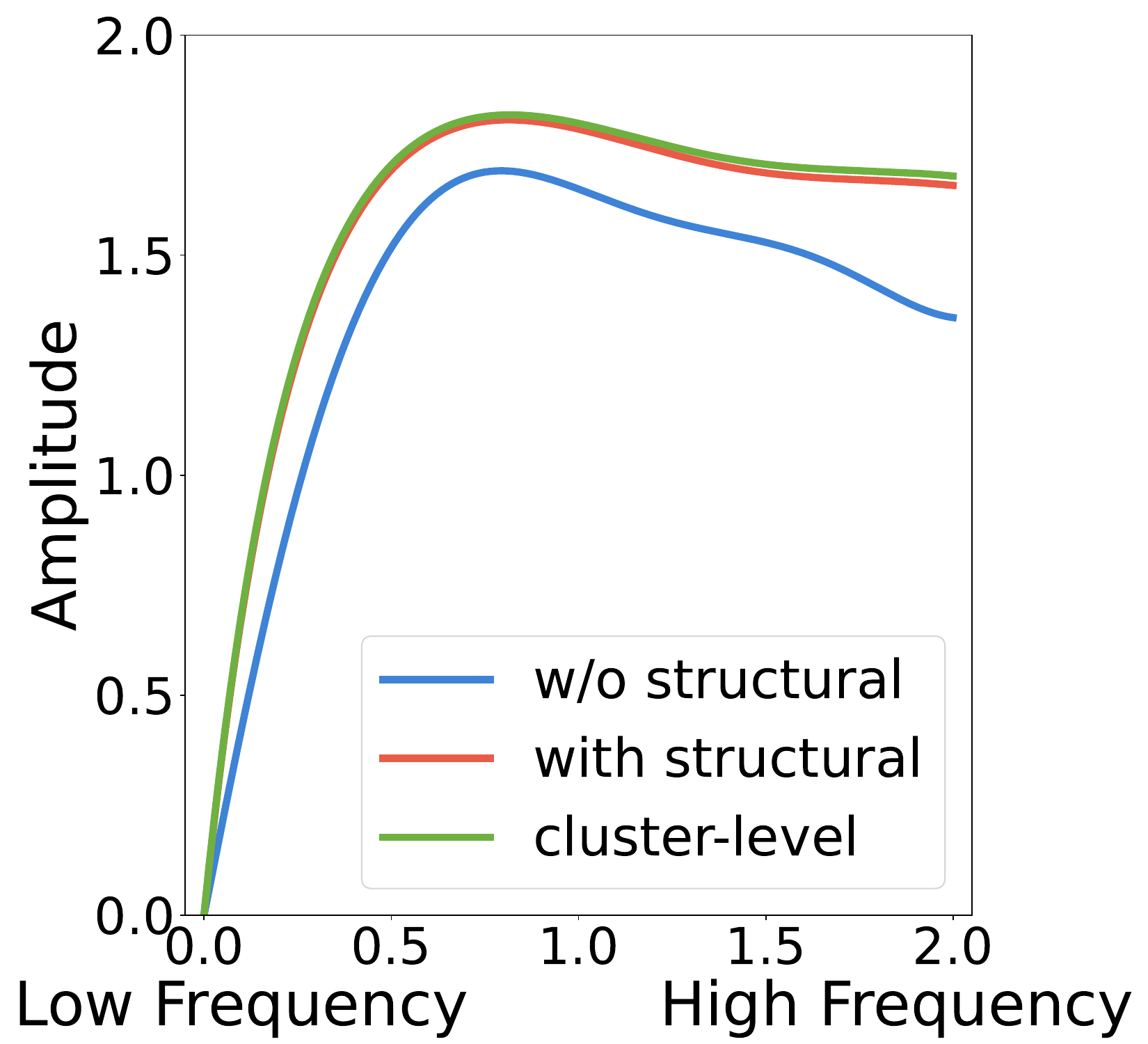}\label{fig_additional_case_study2_6}}
  \caption{Spectral properties on six datasets under overlapping partitioning setting with 50 clients.}
  \label{fig_additional_case_study2}
\end{figure*}

\subsection{Time of Each Communication Round}
\label{time_communication}
We report the time consumed per communication round for our proposed FedSSA and baseline methods in Table~\ref{table6}. We can observe that FedSSA consistently achieves a lower time cost per communication round when compared with the average time of baseline methods. Notably, FedSSA is substantially more efficient than strong baseline methods such as FED-PUB and FedIIH. For example, on \textit{Cora} dataset, FedSSA achieves more than a threefold speed improvement relative to the second-best baseline method (\textit{i.e.}, FedIIH). This efficiency gain is attributed to the lower time complexity of FedSSA on both client side and server side. A detailed analysis of time complexity is provided in Appendix~\ref{Ap_efficiency_analysis}.

\begin{table*}[t]
  \centering
  \scriptsize
  \caption{Time consumption (seconds) of each communication round for our proposed FedSSA and baseline methods on \textit{Cora} and \textit{Roman-empire} datasets.}
  \label{table6}
  \renewcommand{\arraystretch}{1} 
  \scalebox{1}{
  \begin{tabular}{lcccccc}
  \hline
  \rowcolor{gray!50}
                & \multicolumn{6}{c}{Cora}                                                                                                                                                                                                                                                                                                                                                                                                       \\ \hline
  Methods       & \begin{tabular}[c]{@{}c@{}}non-overlapping\\ 5 clients\end{tabular} & \begin{tabular}[c]{@{}c@{}}non-overlapping\\ 10 clients\end{tabular} & \begin{tabular}[c]{@{}c@{}}non-overlapping\\ 20 clients\end{tabular} & \begin{tabular}[c]{@{}c@{}}overlapping\\ 10 clients\end{tabular} & \begin{tabular}[c]{@{}c@{}}overlapping\\ 30 clients\end{tabular} & \begin{tabular}[c]{@{}c@{}}overlapping\\ 50 clients\end{tabular} \\ \hline
  \rowcolor{gray!20}
  FedAvg~\cite{mcmahan2017communication}        & 5.51                                                                & 2.19                                                                 & 5.63                                                                 & 5.40                                                             & 7.27                                                                 & 12.08                                                            \\ \hline
  FedProx~\cite{MLSYS2020_1f5fe839}       & 4.24                                                                & 4.65                                                                 & 8.86                                                                 & 5.49                                                             & 13.71                                                                & 22.43                                                            \\ \hline
  \rowcolor{gray!20}
  FedPer~\cite{Arivazhagan2019}        & 4.06                                                                & 4.13                                                                 & 8.17                                                                 & 4.16                                                             & 12.38                                                                & 20.24                                                            \\ \hline
  GCFL~\cite{NEURIPS2021_9c6947bd}          & 6.30                                                                & 9.38                                                                 & 18.87                                                                & 9.78                                                             & 27.93                                                                & 46.22                                                            \\ \hline
  \rowcolor{gray!20}
  FedGNN~\cite{wu2021fedgnn}        & 2.28                                                                & 4.42                                                                 & 8.76                                                                 & 5.40                                                             & 13.07                                                                & 23.04                                                            \\ \hline
  FedSage+\cite{NEURIPS2021_34adeb8e}      & 6.88                                                                & 8.55                                                                 & 17.88                                                                & 9.37                                                             & 16.97                                                                & 23.35                                                            \\ \hline
  \rowcolor{gray!20}
  FED-PUB~\cite{baek2023personalized}       & 22.04                                                               & 27.34                                                                & 60.31                                                                & 33.46                                                            & 80.54                                                                & 147.84                                                           \\ \hline
  FedGTA~\cite{li2023fedgta}        & 3.36                                                                & 2.24                                                                 & 5.89                                                                 & 4.30                                                             & 5.00                                                                 & 7.18                                                             \\ \hline
  \rowcolor{gray!20}
  AdaFGL~\cite{li2024adafgl}        & 1.99                                                                & 2.49                                                                 & 4.63                                                                 & 4.44                                                             & 6.16                                                                 & 7.83                                                             \\ \hline
  FedTAD~\cite{zhu2024fedtad}        & 4.91                                                                & 5.25                                                                 & 9.13                                                                 & 5.22                                                             & 12.84                                                                & 19.71                                                            \\ \hline
  \rowcolor{gray!20}
  FedIIH~\cite{wentao2025fediih}        & 19.57                                                               & 22.76                                                                & 56.09                                                                & 19.67                                                            & 65.49                                                                & 139.03                                                           \\ \hline
  FedSSA (Ours) & 2.87                                                                & 3.46                                                                 & 4.94                                                                 & 6.48                                                             & 11.17                                                                 & 16.47                                                            \\ \hline
  \rowcolor{yellow!30} \textbf{Average}       & 7.00                                                                & 8.07                                                                 & 17.43                                                                & 9.43                                                             & 22.71                                                            & 40.45                                                            \\ \hline
  \rowcolor{gray!50}
                & \multicolumn{6}{c}
                {Roman-empire}                                                                                                                                                                                                                                                                                                                                                                                               \\ \hline
  Methods       & \begin{tabular}[c]{@{}c@{}}non-overlapping\\ 5 clients\end{tabular} & \begin{tabular}[c]{@{}c@{}}non-overlapping\\ 10 clients\end{tabular} & \begin{tabular}[c]{@{}c@{}}non-overlapping\\ 20 clients\end{tabular} & \begin{tabular}[c]{@{}c@{}}overlapping\\ 10 clients\end{tabular} & \begin{tabular}[c]{@{}c@{}}overlapping\\ 30 clients\end{tabular} & \begin{tabular}[c]{@{}c@{}}overlapping\\ 50 clients\end{tabular} \\ \hline
  \rowcolor{gray!20}
  FedAvg~\cite{mcmahan2017communication}        & 8.20                                                                & 5.76                                                                 & 8.46                                                                 & 10.25                                                            & 18.47                                                                & 19.12                                                            \\ \hline
  FedProx~\cite{MLSYS2020_1f5fe839}       & 4.31                                                                & 6.73                                                                 & 9.04                                                                 & 6.90                                                             & 16.25                                                                & 23.66                                                            \\ \hline
  \rowcolor{gray!20}
  FedPer~\cite{Arivazhagan2019}        & 5.35                                                                & 6.19                                                                 & 9.19                                                                 & 6.47                                                             & 15.58                                                                & 20.22                                                            \\ \hline
  GCFL~\cite{NEURIPS2021_9c6947bd}          & 6.32                                                                & 9.58                                                                 & 18.37                                                                & 10.84                                                            & 29.91                                                                & 50.23                                                            \\ \hline
  \rowcolor{gray!20}
  FedGNN~\cite{wu2021fedgnn}        & 3.33                                                                & 6.60                                                                 & 9.43                                                                 & 6.45                                                             & 15.29                                                                & 22.82                                                            \\ \hline
  FedSage+\cite{NEURIPS2021_34adeb8e}      & 10.06                                                               & 14.82                                                                & 26.27                                                                & 23.09                                                            & 47.42                                                                & 62.72                                                            \\ \hline
  \rowcolor{gray!20}
  FED-PUB~\cite{baek2023personalized}       & 18.81                                                               & 28.03                                                                & 61.75                                                                & 28.45                                                            & 83.07                                                                & 133.05                                                           \\ \hline
  FedGTA~\cite{li2023fedgta}        & 2.17                                                                & 3.58                                                                 & 5.32                                                                 & 3.42                                                             & 6.12                                                                 & 9.92                                                             \\ \hline
  \rowcolor{gray!20}
  AdaFGL~\cite{li2024adafgl}        & 4.34                                                                & 4.14                                                                 & 5.55                                                                 & 6.49                                                             & 7.80                                                                 & 10.69                                                            \\ \hline
  FedTAD~\cite{zhu2024fedtad}        & 4.88                                                                & 8.55                                                                 & 15.03                                                                & 10.98                                                            & 22.42                                                                & 37.01                                                            \\ \hline
  \rowcolor{gray!20}
  FedIIH~\cite{wentao2025fediih}        & 17.45                                                               & 24.90                                                                & 47.01                                                                & 28.19                                                            & 61.17                                                                & 100.10                                                           \\ \hline
  FedSSA (Ours) & 5.78                                                                & 6.01                                                                 & 11.93                                                                 & 9.04                                                            & 16.59                                                                & 24.78                                                            \\ \hline
  \rowcolor{yellow!30} \textbf{Average}       & 7.58                                                                & 10.41                                                                & 18.95                                                                & 12.55                                                            & 28.34                                                            & 42.86                                                            \\ \hline
  \end{tabular}
  }
\end{table*}


\end{document}